\newtheorem{theorem}{Theorem}   
\newtheorem{lemma}[theorem]{Lemma}
\newtheorem{corollary}[theorem]{Corollary}
\newcommand{\imgdir}{.}
\newcommand{\figscale}[2]{\includegraphics[scale=#2,clip=false]{#1}}
\newcommand{\field}[1]{\mathbb{#1}}
\newcommand{\R}{\field{R}}
\newcommand{\bE}{\field{E}}
\newcommand{\be}{\boldsymbol{e}}
\newcommand{\bu}{\boldsymbol{u}}
\newcommand{\bx}{\boldsymbol{x}}
\newcommand{\by}{\boldsymbol{y}}
\newcommand{\wt}{\widetilde}
\newcommand{\dt}{\displaystyle}
\renewcommand{\ss}{\subseteq}
\newcommand{\norm}[1]{\left\|{#1}\right\|}
\newcommand{\argmin}{\mathop{\rm argmin}}
\newcommand{\bigoheq}{\stackrel{\mathcal{O}}{=}}
\newcommand{\scP}{\mathcal{P}}
\newcommand{\wta}{\textsc{wta}}
\newcommand{\nwwta}{\textsc{nwwta}}
\newcommand{\gpa}{\textsc{gpa}}
\newcommand{\omv}{\textsc{wmv}}
\newcommand{\labprop}{\textsc{labprop}}
\newcommand{\rst}{\textsc{rst}}
\newcommand{\nwrst}{\textsc{nwrst}}
\newcommand{\dfst}{\textsc{dfst}}
\newcommand{\mst}{\textsc{mst}}
\newcommand{\spst}{\textsc{spst}}
\newcommand{\bone}{\boldsymbol{1}}
\newcommand{\spin}{\{-1,+1\}}
\newcommand{\scO}{\mathcal{O}}
\newcommand{\jright}{\overrightarrow{v}_{\!\!k}}
\newcommand{\jrightminus}{\overrightarrow{v}_{\!\!k-1}}
\newcommand{\jleft}{\overleftarrow{v}_{\!\!k}}
\newcommand{\jleftplus}{\overleftarrow{v}_{\!\!k+1}}
\newcommand{\Cright}{\overrightarrow{c}_{\!k}}
\newcommand{\Cleft}{\overleftarrow{c}_{\!k}}
\newcommand{\anc}{\mbox{anc}}
\newcommand{\wmax}{w_{\mathrm{max}}}
\newcommand{\scN}{\mathcal{N}}
\newlength{\minipagewidth}
\newtheorem{cor}[theorem]{Corollary}
\begin{document}

\title{{\bf Random Spanning Trees and the Prediction of Weighted Graphs}}

\author{
Nicol\`o Cesa-Bianchi\\ 
Dipartimento di Informatica, Universit\`a degli Studi di Milano, Italy\\
\texttt{nicolo.cesa-bianchi@unimi.it}
\and
Claudio Gentile\\ 
DiSTA, Universit\`a dell'Insubria, Italy\\
\texttt{claudio.gentile@uninsubria.it}
\and
Fabio Vitale\\ 
Dipartimento di Informatica, Universit\`a degli Studi di Milano, Italy\\
\texttt{fabio.vitale@unimi.it}
\and
Giovanni Zappella\\
Dipartimento di Matematica, Universit\`a degli Studi di Milano, Italy\\
\texttt{giovanni.zappella@unimi.it}
}

\maketitle

\begin{abstract}
We investigate the problem of sequentially predicting the binary labels on
the nodes of an arbitrary weighted graph.
We show that, under a suitable parametrization of the problem, 
the optimal number of prediction mistakes
can be characterized (up to logarithmic factors) by the
cutsize of a random spanning tree of the graph.
The cutsize is induced by the unknown adversarial
labeling of the graph nodes.
In deriving our characterization, we obtain a simple randomized
algorithm achieving in expectation the optimal mistake bound
on any polynomially connected weighted graph.
Our algorithm draws a random spanning tree of the original
graph and then predicts the nodes of this tree in constant
expected amortized time and linear space. 
Experiments on real-world datasets show that our method compares 
well to both global (Perceptron)
and local (label propagation) methods, while being generally faster in practice.
\end{abstract}

\section{Introduction}\label{s:intro}
%
%
A widespread approach to the solution of classification problems is
representing datasets through a weighted graph where nodes 
are the data items and edge weights quantify the similarity between pairs of data items.
This technique for coding input data has been applied to several domains,
including Web spam detection~\cite{HPR09}, classification of genomic data~\cite{STS09},
face recognition~\cite{CY06}, and text categorization~\cite{GZ04}.
In many applications, edge weights are computed through a complex data-modelling
process and typically convey information that is relevant to the task of classifying the nodes.

In the sequential version of this problem, nodes are presented in an arbitrary (possibly adversarial)
order, and the learner must predict the binary label of each node before observing its true value.
Since real-world applications typically involve large datasets (i.e., large graphs), online
learning methods play an important role because of their good scaling properties.
An interesting special case of the online problem is the so-called transductive setting, where
the entire graph structure (including edge weights) is known in advance.
The transductive setting is interesting in that the learner has the chance of 
reconfiguring the graph before learning starts, so as to make the problem look easier. 
This data preprocessing can be viewed as a kind of regularization in the context
of graph prediction.

When the graph is unweighted (i.e., when all edges have the same common weight),
it was found in previous works~\cite{HPW05,HP07,Her08,HL09,CGVZ10} that
a key parameter to control the number of online prediction mistakes
is the size of the cut induced by the unknown adversarial labeling of the nodes,
i.e., the number of edges in the graph whose endpoints are assigned disagreeing labels.
However, while the number of mistakes is obviously bounded by the number of nodes, the
cutsize scales with the number of edges. This naturally led to the idea of solving
the prediction problem on a spanning tree of the graph~\cite{CGV09b,HLP09,HPR09}, whose number of edges is
exactly equal to the number of nodes minus one. Now, since the cutsize of the spanning tree
is smaller than that of the original graph, the number of mistakes in predicting the nodes
is more tightly controlled. In light of the previous discussion, we can also view the spanning tree
as a ``maximally regularized'' version of the original graph. 


Since a graph has up to exponentially many spanning trees, which one should be used
to maximize the predictive performance? This question can be answered by recalling
the adversarial nature of the online setting, where the presentation
of nodes and the assignment of labels to them are both arbitrary.
This suggests to pick a tree at random among all spanning 
trees of the graph so as to prevent the adversary from concentrating
the cutsize on the chosen tree~\cite{CGV09b}. Kirchoff's equivalence between
the effective resistance of an edge and its probability of being included
in a random spanning tree allows to express the expected cutsize of a random
spanning tree in a simple form. Namely, as the sum of resistances over
all edges in the cut of $G$ induced by the adversarial label assignment.

Although the results of \cite{CGV09b} yield a mistake bound for arbitrary unweighted 
graphs in terms of the cutsize of a random spanning tree, no general lower bounds 
are known for online unweighted graph prediction. 
The scenario gets even more uncertain in the case of weighted graphs,
where the only previous papers we are aware of
\cite{HP07,Her08,HL09} essentially contain only upper bounds.
In this paper we fill this gap, and show that the expected cutsize of a random spanning tree
of the graph delivers a convenient parametrization\footnote
{
Different parametrizations of the node prediction problem exist that lead
to bounds which are incomparable to ours ---see Section \ref{s:rel}.
} 
that captures the hardness of the graph learning problem in the general weighted case.
Given any weighted graph, we prove that any online prediction algorithm must err on
a number of nodes which is at least as big as the expected cutsize of the graph's random
spanning tree (which is defined in terms of the graph weights).
Moreover, we exhibit a simple randomized algorithm achieving in expectation the optimal mistake bound to within logarithmic factors. This bound applies to any sufficiently connected weighted graph
whose weighted cutsize is not an overwhelming fraction of the total weight.

Following the ideas of~\cite{CGV09b}, our algorithm first extracts a random spanning tree
of the original graph. Then, it predicts all nodes of this tree using a
generalization of the method proposed by \cite{HLP09}. Our tree prediction procedure
is extremely efficient: it only requires \textsl{constant} amortized time
per prediction and space \textsl{linear in the number of nodes}.
Again, we would like to stress that computational efficiency is a central issue in 
practical applications where the involved datasets can be very large. In such contexts, 
learning algorithms whose computation time scales quadratically, or slower, in the number 
of data points should be considered impractical.

As in~\cite{HLP09}, our algorithm first linearizes the tree, and then operates on
the resulting line graph via a nearest neighbor rule. We show that, besides running time, 
this linearization step brings further benefits to the overall prediction process.
In particular, similar to \cite[Theorem~4.2]{HP07}, the algorithm turns out to 
be resilient to perturbations of the labeling, a clearly desirable feature from a 
practical standpoint.

In order to provide convincing empirical evidence, we also present an
experimental evaluation of our method compared to other algorithms
recently proposed in the literature on graph prediction.
In particular, we test our algorithm against the
Perceptron algorithm with Laplacian kernel by \cite{HP07,HPR09},
and against a version of the label propagation algorithm by \cite{ZGL03}. 
These two baselines can viewed as representatives of global (Perceptron)
and local (label propagation) learning methods on graphs.
%
The experiments have been carried out on five medium-sized
real-world datasets. The two tree-based algorithms (ours and the Perceptron
algorithm) have been tested using spanning trees generated in various ways,
including committees of spanning trees aggregated by majority votes.
In a nutshell, our experimental comparison shows that predictors based on
our online algorithm compare well to all baselines while being very efficient in most cases.

The paper is organized as follows. Next, we recall preliminaries and
introduce our basic notation. Section~\ref{s:rel} surveys related work in the literature. 
In Section~\ref{s:lower} we prove the general lower bound
relating the mistakes of any prediction algorithm to the expected cutsize
of a random spanning tree of the weighted graph. In the subsequent section, we present
our prediction algorithm \wta\ (Weighted Tree Algorithm), along with a detailed mistake
bound analysis restricted to weighted trees. This analysis is extended to weighted graphs
in Section~\ref{s:gen}, where we
provide an upper bound matching the lower bound up to log factors on any sufficiently
connected graph. 
In Section~\ref{s:robust}, we quantify the robustness of our algorithm to label perturbation.
In Section~\ref{s:compl}, we provide the constant
amortized time implementation of \wta. Based on this implementation, in Section~\ref{s:exp} 
we present the experimental results. Section~\ref{s:concl} is devoted to conclusive remarks.

\subsection{Preliminaries and Basic Notation}\label{ss:not}
Let $G = (V,E,W)$ be an undirected, connected, and weighted graph with $n$ nodes and
positive edge weights $w_{i,j} > 0$ for $(i,j) \in E$.
A labeling of $G$ is any assignment $\by = (y_1,\dots,y_n) \in \spin^n$
of binary labels to its nodes. We use $(G,\by)$ to denote the resulting
labeled weighted graph.

The online learning protocol for predicting $(G,\by)$ can be defined as the following game between a (possibly randomized) learner and an adversary. The game is parameterized by the graph $G = (V,E,W)$. Preliminarly, 
and hidden to the learner, the adversary chooses a labeling $\by$ of $G$. Then the nodes of $G$ are presented 
to the learner one by one, according to a permutation of $V$, which is 
adaptively selected by the adversary. More precisely, at each time step $t=1,\dots,n$ the adversary chooses the next node $i_t$ in the permutation of $V$, and presents it to the learner for the prediction of the associated 
label $y_{i_t}$. Then $y_{i_t}$ is revealed, disclosing whether a mistake occurred. 
The learner's goal is to minimize the total number of prediction mistakes. Note that while the adversarial choice of the permutation can depend on the algorithm's randomization, the choice of the labeling is oblivious to it. In other words, the learner uses randomization to fend off the adversarial choice of labels, whereas it is fully deterministic against the adversarial choice of the permutation. The requirement that the adversary is fully oblivious when choosing labels is then dictated by the fact that the randomized learners considered in this paper make all their random choices at the beginning of the prediction process (i.e., before seeing the labels).

Now, it is reasonable to expect that prediction performance degrades
with the increase of ``randomness'' in the labeling.
For this reason, our analysis of graph prediction algorithms bounds from
above the number of prediction mistakes in terms of appropriate
notions of graph label {\em regularity}.
A standard notion of label regularity is the cutsize of a labeled graph,
defined as follows.
A $\phi$-edge of a labeled graph $(G,\by)$ is any edge $(i,j)$ such that
$y_i \neq y_j$. Similarly, an edge $(i,j)$ is $\phi$-free if $y_i = y_j$.
Let $E^{\phi} \ss E$ be the set of $\phi$-edges in $(G,\by)$.
The quantity $\Phi_G(\by) = \bigl|E^{\phi}\bigr|$ is the {\em cutsize} of $(G,\by)$,
i.e., the number of $\phi$-edges in $E^{\phi}$ (independent of the edge weights).
The {\em weighted cutsize} of $(G,\by)$ is defined by
\[
    \Phi_G^W(\by) = \sum_{(i,j)\in E^{\phi}} w_{i,j}~.
\]
For a fixed $(G,\by)$, we denote by
$r_{i,j}^W$ the effective resistance between nodes $i$ and $j$ of $G$.
In the interpretation of the graph as an electric network, where the weights
$w_{i,j}$ are the edge conductances, the effective resistance $r_{i,j}^W$ is
the voltage between $i$ and $j$ when a unit current flow is maintained through
them.
For $(i,j) \in E$, let also $p_{i,j} = w_{i,j} r_{i,j}^W$
be the probability that $(i,j)$ belongs to a random spanning tree $T$ ---see,
e.g., the monograph of~\cite{LP09}.
Then we have
\begin{equation}
\label{e:expected}
\mathbb{E}\,\Phi_T(\by) = \sum_{(i,j) \in E^{\phi}} p_{i,j} =
\sum_{(i,j) \in E^{\phi}} w_{i,j} r_{i,j}^W~,
\end{equation}
where the expectation $\mathbb{E}$ is over the random choice of spanning tree $T$.
%
%
Observe the natural weight-scale independence properties of (\ref{e:expected}).
A uniform rescaling of the edge weights $w_{i,j}$ cannot have an influence on the 
probabilities $p_{i,j}$, thereby making each product $w_{i,j} r_{i,j}^W$ scale 
independent.
In addition, since $\sum_{(i,j) \in E} p_{i,j}$
is equal to $n-1$, irrespective of the edge weighting, 
we have $0 \leq \bE\,\Phi_T(\by) \leq n-1$. Hence
the ratio $\frac{1}{n-1}\bE\,\Phi_T(\by) \in [0,1]$
provides a \textsl{density-independent} measure of the cutsize in $G$,
and even allows to compare labelings on different graphs.
%
%

Now contrast $\bE\,\Phi_T(\by)$ to the more standard weighted 
cutsize measure $\Phi_G^W(\by)$. First, $\Phi_G^W(\by)$ is clearly weight-scale dependent.
Second, it can be much larger than $n$ on dense graphs, even in the
unweighted $w_{i,j} = 1$ case. Third, it strongly depends on 
the density of $G$, which is generally related to $\sum_{(i,j) \in E} w_{i,j}$.
%
In fact, $\bE\,\Phi_T(\by)$ can be much smaller than $\Phi_G^W(\by)$
when there are strongly connected regions in $G$ contributing prominently
to the weighted cutsize. To see this, consider the following scenario:
If $(i,j) \in E^{\phi}$ and $w_{i,j}$ is large, then $(i,j)$ gives a big contribution
to $\Phi_G^W(\by)$.\footnote
{
It is easy to see that in such cases $\Phi_G^W(\by)$ can be much larger than $n$. 
} 
However, this does not necessarily happen with $\mathbb{E}\,\Phi_T(\by)$.
In fact, if $i$ and $j$ are strongly connected (i.e., if there are many disjoint paths
connecting them), then $r_{i,j}^W$ is very small and so are the terms $w_{i,j} r_{i,j}^W$
in~(\ref{e:expected}).
Therefore, the effect of the large weight $w_{i,j}$ may often be compensated by the
small probability of including $(i,j)$ in the random spanning tree.
See Figure \ref{f:1} for an example.

A different way of taking into account graph connectivity is provided by the covering
ball approach taken by \cite{Her08,HL09} --see the next section.

\begin{figure}[h!]
  \centering
\figscale{\imgdir/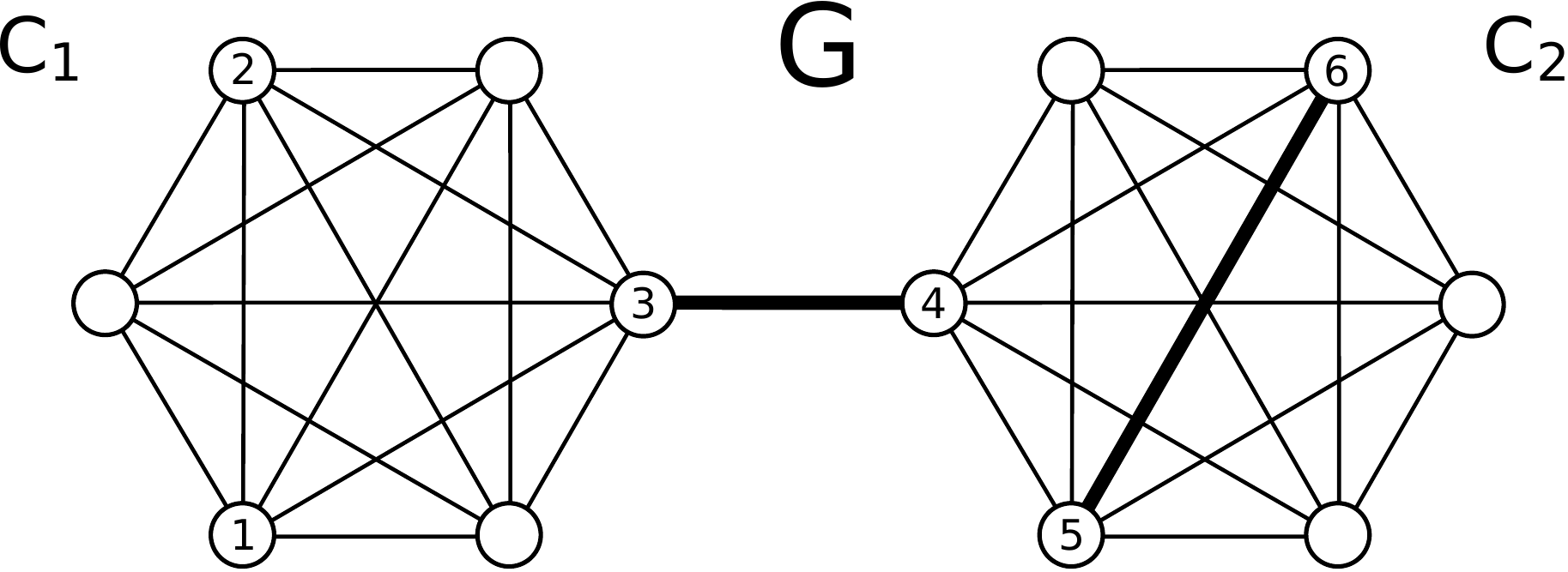}{0.52}
\caption{\label{f:1}
A barbell graph. The weight of the two thick black edges is equal to 
$\sqrt{V}$, all the other edges have unit weight. If the two labels $y_1$ and $y_2$ 
are such that $y_1 \neq y_2$, then the contribution of the edges on the left clique $C_1$ to the cutsizes 
$\Phi_G(\by)$ and $\Phi^W_G(\by)$ must be large. However, since the probability of including each edge of 
$C_1$ in a random spanning tree $T$ is $\scO(1/|V|)$, 
$C_1$'s contribution to $\bE\,\Phi_T(\by)$ is $|V|$ times smaller than 
$\Phi_{C_1}(\by) = \Phi^W_{C_1}(\by)$. 
If $y_3 \neq y_4$, then the contribution of edge (3,4) to $\Phi^W_G(\by)$ is large. 
Because this edge is a bridge, the probability of including it in $T$ is one, 
independent of $w_{3,4}$. Indeed, we have $p_{3,4} = w_{3,4}\,r^W_{3,4} = w_{3,4} / w_{3,4} = 1$. 
If $y_5 \neq y_6$, then the contribution of the right clique $C_2$ to 
$\Phi^W_G(\by)$ is large. On the other hand, the probability of including edge 
$(5,6)$ in $T$ is equal to $p_{5,6} = w_{5,6}\,r^W_{5,6} = \scO(1/\sqrt{|V|})$. Hence,
the contribution of $(5,6)$ to $\bE\,\Phi_T(\by)$ is small because the 
large weight of $(5,6)$ is offset by the fact that nodes $5$ and $6$ are strongly 
connected (i.e., there are many different paths among them). 
Finally, note that $p_{i,j} = \scO(1/|V|)$ holds 
for all edges $(i,j)$ in $C_2$, implying (similar to clique $C_1$) that 
$C_2$'s contribution to $\bE\,\Phi_T(\by)$  is $|V|$ times smaller than $\Phi^W_{C_2}(\by)$.
}
\end{figure}

\section{Related Work}\label{s:rel}
%
With the above notation and preliminaries in hand, we now briefly survey the results 
in the existing literature which are most closely related to this paper. Further comments
are made at the end of Section~\ref{s:gen}.

Standard online linear learners, such as the Perceptron algorithm, are applied
to the general
(weighted) graph prediction problem by embedding the $n$ vertices of the
graph in $\R^n$ through a map $i \mapsto K^{-1/2}\be_i$, where $\be_i \in \R^n$
is the $i$-th vector in the canonical basis of $\R^n$, and $K$ is a positive definite $n \times n$ matrix.
The graph Perceptron algorithm~\cite{HPW05,HP07} uses $K = L_G + \bone\,\bone^{\top}$,
where $L_G$ is the (weighted) Laplacian of $G$ and $\bone = (1,\dots,1)$.
The resulting mistake bound is of the form $\Phi^W_G(\by)D^W_G$,
where $D^W_G = \max_{i,j} r^W_{i,j}$ is the resistance diameter of $G$. As expected, this bound
is weight-scale independent, but the interplay between the two factors in it may lead to a vacuous
result. At a given scale for the weights $w_{i,j}$,
if $G$ is dense, then we may have $D^W_G = \scO(1)$ while $\Phi^W_G(\by)$ is of the order of $n^2$. If
$G$ is sparse, then $\Phi^W_G(\by) = \scO(n)$ but then $D^W_G$ may become as large as $n$.

The idea of using a spanning tree to reduce the cutsize of $G$ has been investigated
by~\cite{HPR09}, where the graph Perceptron algorithm is applied to a spanning tree $T$ of $G$.
The resulting mistake bound is of the form 
$\Phi^W_T(\by) D^W_T$, i.e., the graph Perceptron bound applied to tree $T$.
Since $\Phi^W_T(\by) \le \Phi^W_G(\by)$ this bound has a smaller cutsize
than the previous one. On the other hand, $D^W_T$ can be much larger than $D^W_G$ because removing edges may increase the resistance. Hence the two bounds are generally incomparable.
%
%

\cite{HPR09} suggest to apply the graph Perceptron algorithm to the spanning tree
$T$ with smallest geodesic diameter. The geodesic diameter of a weighted graph $G$
is defined by
\[
    \Delta^W_G = \max_{i,j} \min_{\Pi_{i,j}} \sum_{(r,s) \in \Pi_{i,j}} \frac{1}{w_{i,j}}
\]
where the minimum is over all paths $\Pi_{i,j}$ between $i$ and $j$.
The reason behind this choice of $T$ is that, for the spanning tree $T$ with smallest geodesic diameter,
it holds that $D^W_T \leq 2\Delta^W_G$.
However, one the one hand $D^W_G \le \Delta^W_G$, so there is no guarantee that $D^W_T=\scO\bigl(D^W_G\bigr)$, and on the other hand the adversary may still concentrate all $\phi$-edges on the chosen tree $T$, 
so there is no guarantee that $\Phi^W_T(\by)$ remains small either.

\cite{HLP09} introduce a different technique showing its application to the case of unweighted graphs.
After reducing the graph to a spanning tree $T$, the tree is linearized via a depth-first visit.
This gives a line graph $S$ (the so-called \textsl{spine} of $G$) such that $\Phi_S(\by) \le 2\,\Phi_T(\by)$.
By running a Nearest Neighbor (NN) predictor on $S$, \cite{HLP09} prove
a mistake bound of the form
$\Phi_S(\by)\log \bigl(n\big/\Phi_S(\by)\bigr) + \Phi_S(\by)$.
As observed by~\cite{FK08}, similar techniques have been developed
to solve low-congestion routing problems.

Another natural parametrization for the labels of a weighted graph that takes the graph structure into account is {\em clusterability}, i.e., the extent to which the graph nodes can be covered by a few 
balls of small resistance diameter. With this inductive bias in mind, \cite{Her08} developed the Pounce algorithm, which can be seen as a combination of graph Perceptron and NN prediction. The number of mistakes has a bound of the form
\begin{equation}\label{e:pounce}
    \min_{\rho >0}\bigl(\scN(G,\rho) + \Phi^W_G(\by)\rho\bigr)
\end{equation}
where $\scN(G,\rho)$ is the smallest number of balls of resistance diameter $\rho$ it takes to
cover the nodes of $G$.
Note that the graph Perceptron bound is recovered when $\rho = D^W_G$.
Moreover, observe that, unlike graph Perceptron's, bound (\ref{e:pounce})
is never vacuous, as it holds uniformly for all covers of $G$ (even the one made up of singletons,
corresponding to $\rho \rightarrow 0$).
A further trick for the unweighted case proposed by~\cite{HLP09} is 
to take advantage of both previous approaches (graph Perceptron and NN on line graphs) 
by building a binary tree on $G$.
This ``support tree'' helps in keeping the diameter of $G$ as small as possible, e.g., logarithmic
in the number of nodes $n$.
The resulting prediction algorithm is again a combination of a Perceptron-like algorithm and NN,
and the corresponding number of mistakes is the minimum over two earlier bounds:
a NN-based bound of the form $\Phi_G (\by)(\log n)^2$ and an unweighted version of bound (\ref{e:pounce}).

Generally speaking, clusterability and resistance-weighted cutsize $\bE\,\Phi_T(\by)$ exploit the 
graph structure in different ways. Consider, for instance, a barbell graph made up of two $m$-cliques
joined by $k$ unweighted $\phi$-edges with no endpoints in common
(hence $k \leq m$).\footnote
{
This is one of the examples considered in \cite{HL09}. 
}
If $m$ is much larger than $k$, then bound (\ref{e:pounce}) scales 
linearly with $k$ (the two balls in the cover correspond to the two $m$-cliques). On the other hand,
$\bE\,\Phi_T(\by)$ tends to be constant: Because $m$ is much larger than $k$, the probability of including 
any $\phi$-edge in $T$ tends to $1/k$, as $m$ increases and $k$ stays constant. 
On the other hand, if $k$ gets close to $m$ the resistance
diameter of the graph decreases, and (\ref{e:pounce}) becomes a constant. In fact, one can show
that when $k = m$ even $\bE\,\Phi_T(\by)$ is a constant, independent of $m$. In particular,
the probability that a $\phi$-edge is included in the random spanning tree $T$ is
upper bounded
by $\frac{3m-1}{m(m+1)}$, i.e., $\bE\,\Phi_T(\by) \rightarrow 3$ when $m$ grows large.\footnote
{
This can be shown by computing the effective resistance of $\phi$-edge $(i,j)$ as the minimum, over all
unit-strength flow functions with $i$ as source and $j$ as sink, of the squared flow values
summed over all edges, see, e.g., \cite{LP09}.
} 

When the graph at hand has a large diameter, e.g., an $m$-line graph connected to an $m$-clique 
(this is sometimes called a ``lollipop" graph) the gap between the covering-based bound (\ref{e:pounce}) 
and $\bE\,\Phi_T(\by)$ is magnified. Yet, it is fair to say that the bounds we are about to
prove for our algorithm have an extra factor, beyond $\bE\,\Phi_T(\by)$, which is
logarithmic in $m$. A similar logarithmic factor is achieved by the combined algorithm proposed 
in~\cite{HLP09}.

An even more refined way of exploiting cluster structure and connectivity in graphs is contained in
the paper of
\cite{HL09}, where the authors provide a comprehensive study of the application of dual-norm techniques 
to the prediction of weighted graphs, again with the goal of obtaining logarithmic performance guarantees
on large diameter graphs.
In order to trade-off the contribution of cutsize $\Phi^W_G$ and resistance diameter $D^W_G$,
the authors develop a notion of $p$-norm resistance.
The obtained bounds are dual norm versions of the covering ball bound ({\ref{e:pounce}}).
Roughly speaking, one can select the dual norm parameter of the algorithm to obtain a 
logarithmic contribution from the resistance diameter at the cost of squaring the contribution
due to the cutsize. This quadratic term can be further reduced if the graph is well connected.
For instance, in the unweighted barbell graph mentioned above, selecting the norm appropriately 
leads to a bound which is constant even when $k \ll m$.

Further comments on the comparison between the results presented by \cite{HL09} and the ones
in our paper are postponed to the end of Section \ref{s:gen}.

Departing from the online learning scenario, it is worth mentioning the significantly large 
literature on the general problem of learning the nodes of a graph in the train/test 
transductive setting: Many algorithms have been proposed, 
including the label-consistent mincut approach of \cite{BC01,BLRR04} and
a number of other ``energy minimization'' methods ---e.g., the ones by~\cite{ZGL03,BMN04}
of which label propagation is an instance. 
See the work of~\cite{BDL06} for a relatively recent survey on this subject.

Our graph prediction algorithm is based on a random spanning tree of the original graph.
The problem of drawing a random spanning tree of an arbitrary graph has a long history ---see, e.g., the recent monograph by~\cite{LP09}.
In the unweighted case, a random spanning tree can be sampled with a 
random walk in expected time $\mathcal{O}(n\ln n)$ for ``most'' graphs, as shown by \cite{Bro89}.
Using the beautiful algorithm of \cite{Wil96}, the expected time reduces to $\mathcal{O}(n)$ ---see also the work of~\cite{AAKKLT08}. However, all known techniques take expected time $\Theta(n^3)$ on certain pathological graphs. In the weighted case, the above methods can take longer due to the hardness of reaching, via a random walk, portions of the graph which are connected only via light-weighted edges. To sidestep this issue, in our experiments we tested a viable fast approximation where weights are disregarded when building the spanning tree, and only used at prediction time.
Finally, the space complexity for generating a random spanning tree is always linear in the graph size. 

To conclude this section, it is worth mentioning that, although we exploit random spanning trees to reduce the
cutsize, similar approaches can also be used to approximate the cutsize of a weighted graph 
by sparsification ---see, e.g., the work of~\cite{SS08}. However, because the resulting graphs are not as sparse as spanning trees, we do not currently see how to use those results.

\section{A General Lower Bound}\label{s:lower}
This section contains our general lower bound. We show that any prediction
algorithm must err at least $\tfrac{1}{2}\bE\,\Phi_T(\by)$ times on any weighted 
graph.
\begin{theorem}
\label{th:lower}
Let $G = (V,E,W)$ be a weighted undirected graph with $n$ nodes and weights
$w_{i,j} > 0$ for $(i,j) \in E$. Then for all $K \le n$ there exists a randomized
labeling $\by$ of $G$ such that for all (deterministic or randomized) algorithms $A$,
the expected number of prediction mistakes made by $A$ is at least $K/2$, while
$\bE\,\Phi_T(\by) < K$.
\end{theorem}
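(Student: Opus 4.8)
The plan is to reduce the prediction problem to a collection of independent biased-coin-flipping problems on a carefully chosen set of edges, so that the adversary can force $K/2$ mistakes in expectation while keeping the expected random-spanning-tree cutsize below $K$. Concretely, I would first fix the random spanning tree distribution of $G$ and recall from~(\ref{e:expected}) that $\bE\,\Phi_T(\by)=\sum_{(i,j)\in E^\phi}p_{i,j}$ with $p_{i,j}=w_{i,j}r^W_{i,j}$ and $\sum_{(i,j)\in E}p_{i,j}=n-1$. The idea is to find a set $F\ss E$ of edges forming a forest (more precisely, a set of vertex-disjoint edges, i.e. a matching, or at least a subgraph in which each connected component contributes controllably) such that $\sum_{(i,j)\in F}p_{i,j}$ is small, and then let the adversary randomize labels only across the edges of $F$, leaving all other nodes with a fixed deterministic label that the learner can be assumed to know.

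The key steps, in order: (1) Order the edges of $G$ by increasing $p_{i,j}$ and greedily select edges to build a subgraph $G'$ on which we will place the ``hard'' labeling; since $\sum_{(i,j)\in E}p_{i,j}=n-1$, averaging guarantees we can find $K$ edges whose $p$-values sum to less than $K$ — in fact we want them to sum to strictly less than $K$, which is immediate if we pick the $K$ edges of smallest $p_{i,j}$, because their average is at most $(n-1)/|E|\le 1$. Actually the cleaner route: pick any spanning tree–like structure — I would instead pick $K$ edges that form a \emph{forest} so that the induced labeling question decomposes. (2) On this forest, orient each component as a star or path and let the adversary flip an independent fair coin to decide the ``phase'' of each component's labeling; only the edges inside these components can become $\phi$-edges, so $\Phi_T(\by)$ restricted to these is at most the number of selected edges actually cut, and in expectation $\bE\,\Phi_T(\by)\le \sum_{(i,j)\in F}p_{i,j} < K$. (3) For the mistake lower bound, observe that whenever the adversary presents, for some component, a node whose label is determined by an as-yet-unrevealed coin flip, any algorithm — deterministic or randomized, since the coins are independent of the algorithm's internal randomness by the oblivious-adversary assumption — errs with probability exactly $1/2$ on that node. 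The adversary presents one ``fresh'' node per component, forcing an expected $\tfrac12$ mistake each, hence $K/2$ in total if we arrange for $K$ such fresh presentations.

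Making steps (2)–(3) fit together is the main obstacle: I need the selected edges to be arranged so that I get $K$ \emph{independent} fair-coin decisions, each of which can be forced to cost $1/2$ in expectation, while the total expected cut over the random tree stays below $K$. The natural construction is a \emph{matching} of $K$ edges of smallest $p_{i,j}$ (assign to each matched pair $\{i,j\}$ an independent fair coin $\sigma\in\{-1,+1\}$, set $y_i=\sigma$, $y_j=-\sigma$, so edge $(i,j)$ is always a $\phi$-edge but contributes only $p_{i,j}$ in expectation); then each edge contributes $p_{i,j}$ to $\bE\,\Phi_T(\by)$ for a total $<K$, and the adversary presents node $i$ first for each pair, forcing an expected $K/2$ mistakes regardless of $A$. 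If $G$ has no matching of size $K$ (e.g. a star), I fall back to using a single high-degree vertex as the common endpoint: present the center with a fixed label, then each leaf edge in a chosen set of $K$ leaves carries an independent coin — here one must check the center's label can indeed be fixed consistently, which it can since only one coin governs each leaf. The delicate point to verify carefully is that in all cases one can exhibit $K$ edges whose probabilities $p_{i,j}$ sum to strictly less than $K$ \emph{and} which admit this ``one fresh node per edge'' presentation schedule; the counting bound $\sum_{(i,j)\in E}p_{i,j}=n-1$ together with $K\le n$ is what makes this possible, and handling the degenerate graph shapes (stars, few edges) is where the bookkeeping is least routine.
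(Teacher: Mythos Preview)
Your proposal has a genuine gap: the claim in step~(2) that ``only the edges inside these components can become $\phi$-edges'' is false, and this breaks the cutsize bound. Once you randomize the labels of the $2K$ matched vertices, \emph{every} edge incident to one of them is a potential $\phi$-edge, not just the $K$ matching edges in $F$. Concretely, take $G$ to be the unweighted path on $n$ nodes (so $p_{i,j}=1$ for every edge), set $K=n/2$, and use the matching $(1,2),(3,4),\ldots,(n{-}1,n)$ with your coin scheme $y_{2i-1}=\sigma_i$, $y_{2i}=-\sigma_i$. Each matching edge is always cut, contributing $K$; but the edge $(2i,2i{+}1)$ between consecutive pairs is cut with probability $1/2$, contributing another $(K-1)/2$ in expectation. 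Hence $\bE\,\Phi_T(\by)\approx 3K/2>K$, and the construction fails. Your star fallback happens to avoid this because in a star the only edges are the leaf edges, but the general matching argument does not control edges leaving the matched vertices. (There is also a secondary issue: a matching of size $K$ needs $2K\le n$, so the construction only covers $K\le n/2$, whereas the theorem asserts it for all $K\le n$.)

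The paper's proof sidesteps this by working at the \emph{node} level rather than the edge level. It sets $P_i=\sum_j p_{i,j}$, takes $S$ to be the $K$ nodes of smallest $P_i$, assigns each node in $S$ an independent fair coin, and gives all of $V\setminus S$ a single common label (whichever sign minimizes the cut). Now every $\phi$-edge is incident to $S$, so the total $p$-weight of all potential $\phi$-edges is at most $\sum_{i\in S}P_i$; the averaging identity $\sum_{i\in V}P_i=2\sum_{(i,j)\in E}p_{i,j}=2(n-1)$ then gives $\sum_{i\in S}P_i\le \tfrac{K}{n}\cdot 2(n-1)<2K$, and the choice of the common outside label halves the external contribution, yielding $\bE\,\Phi_T(\by)<K$. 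The $K/2$ mistake lower bound follows exactly as you argue, since the $K$ nodes of $S$ carry independent fair coins.
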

%
%
%

\begin{figure}
  \centering
\figscale{\imgdir/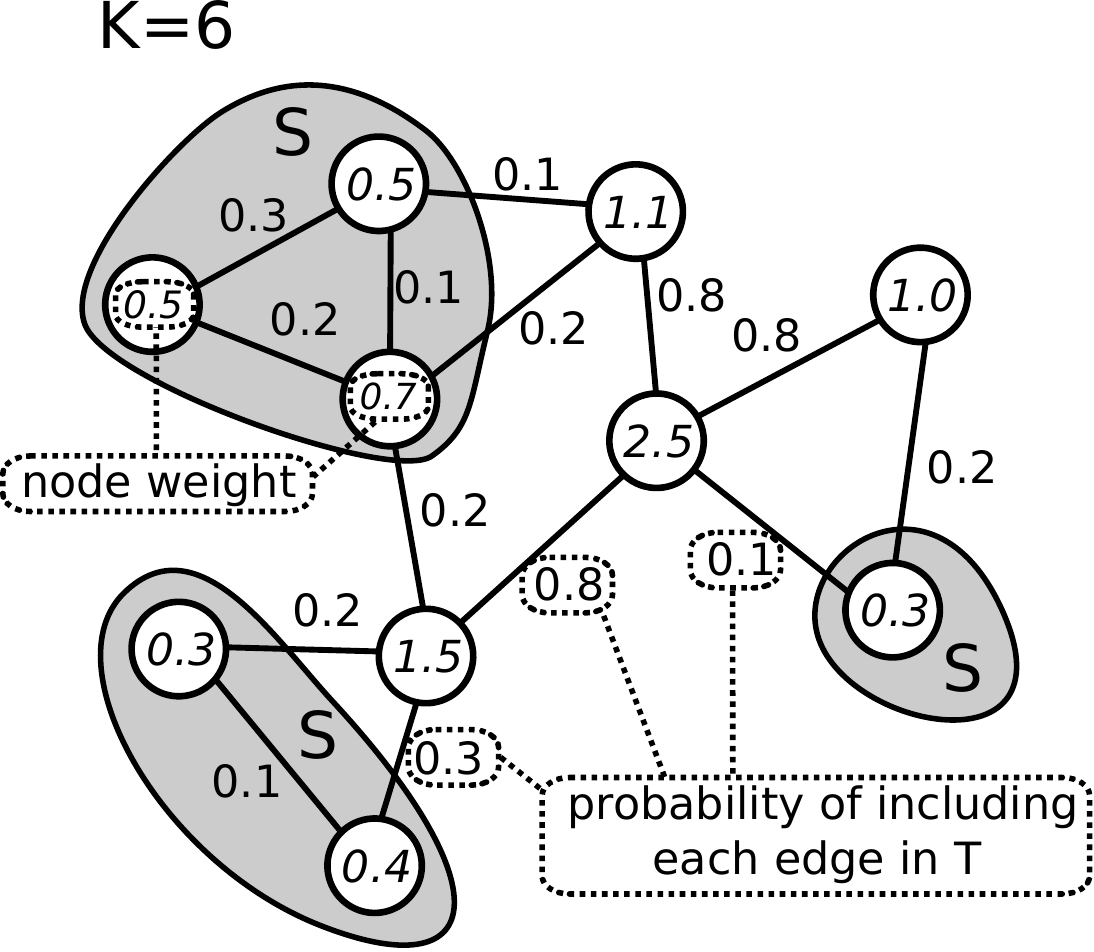}{0.52}
\caption{\label{f:2}
The adversarial strategy.
Numbers on edges are the probabilities $p_{i,j}$ of those edges being included in 
a random spanning tree for the weighted graph under consideration. Numbers within nodes denote
the weight of that node based on the $p_{i,j}$ ---see main text.
We set the budget $K$ to 6, hence the subset $S$ contains the 6 
nodes having smallest weight.
The adversary assigns a random label to each node in $S$ thus forcing 
$|S|/2$ mistakes in expectation. Then, it labels all nodes in $V \setminus S$ with 
a unique label, chosen in such a way as to minimize the cutsize consistent with the 
labels previously assigned to the nodes of $S$.
}
\end{figure}

\begin{proof}
The adversary uses the weighting $P$ induced by $W$ and defined by
$p_{i,j} = w_{i,j} r_{i,j}^W$. By~(1), 
$p_{i,j}$ is the probability
that edge $(i,j)$ belongs to a random spanning tree $T$ of $G$.
Let $P_i = \sum_{j} p_{i,j}$ be the sum over the induced weights of all
edges incident to node $i$. We call $P_i$ the \textit{weight} of node $i$.
%
%
Let $S \subseteq V$ be the set of $K$ nodes $i$ in $G$ having the smallest
weight $P_i$. The adversary assigns a random label to each node $i \in S$.
This guarantees that, no matter what, the algorithm $A$ will make on average
$K/2$ mistakes on the nodes in $S$.
The labels of the remaining nodes in $V \setminus S$ are set either all
$+1$ or all $-1$, depending on which one of the two choices yields the smaller
$\Phi_G^P(\by)$. See Figure \ref{f:2} for an illustrative example.
We now show that the weighted cutsize $\Phi_G^P(\by)$ of this
labeling $\by$ is less than $K$, \textsl{independent of} the labels of the nodes
in $S$.

%
%

Since the nodes in $V \setminus S$ have all the same label, the $\phi$-edges
induced by this labeling can only connect either two nodes in $S$ or one node
in $S$ and one node in $V \setminus S$. Hence 
$\Phi^P_G(\by)$ can be written as
\[
\Phi_G^P(\by) = \Phi^{P,\mathrm{int}}_G(\by) + \Phi^{P,\mathrm{ext}}_G(\by)
\]
where $\Phi^{P,int}_G(\by)$ is the cutsize contribution within $S$,
and $\Phi^{P,ext}_G(\by)$ is the one from edges between $S$ and 
$V \setminus S$.
We can now bound these two terms by combining the definition 
of $S$ with the equality $\sum_{(i,j) \in E} p_{i,j} = n-1$ as in the sequel.
Let
\[
    P^{\mathrm{int}}_{S} = \sum_{(i,j) \in E\,:\, i,j \in S} p_{i,j}
\qquad\text{and}\qquad
    P^{\mathrm{ext}}_{S} = \sum_{(i,j) \in E\,:\, i \in S,\, j \in V \setminus S} p_{i,j}~.
\]
From the very definition of $P^{\mathrm{int}}_{S}$ and $\Phi^{P,\mathrm{int}}_G(\by)$
we have $\Phi^{P,\mathrm{int}}_G(\by)\le P^{\mathrm{int}}_{S}$.
%
%
Moreover, from the way the labels of nodes in $V \setminus S$ are selected,
it follows that $\Phi^{P,\mathrm{ext}}_G(\by) \le P^{\mathrm{ext}}_{S}/2$. Finally, 
\[
\sum_{i \in S} P_i = 2P^{\mathrm{int}}_{S} + P^{\mathrm{ext}}_{S}
\] 
holds, since each edge
connecting nodes in $S$ is counted twice in the sum $\sum_{i \in S} P_i$.
Putting everything together we obtain
%
%
\[
2P^{\mathrm{int}}_{S} + P^{\mathrm{ext}}_{S} = \sum_{i \in S} P_i
\le \frac{K}{n}\,\sum_{i \in V} P_i 
= \frac{2K}{n}\,\sum_{(i,j)\in E} p_{i,j}
= \frac{2K(n-1)}{n}
\]
the inequality following from the definition of $S$. Hence
%
%
\[
\bE\,\Phi_T(\by) =
\Phi^P_G(\by)
= \Phi^{P,\mathrm{int}}_G(\by) + \Phi^{P,\mathrm{ext}}_G(\by)
\le
P^{\mathrm{int}}_S + \frac{P^{\mathrm{ext}}_S}{2}
\le\frac{K(n-1)}{n}
< K
\]
concluding the proof.
\end{proof}

\section{The Weighted Tree Algorithm}
\label{s:alg}
%
We now describe the Weighted Tree Algorithm ($\wta$) for predicting the labels
of a weighted tree. In Section~\ref{s:gen} we show how to apply \wta\ to 
the more general weighted graph prediction problem.
\wta\ first transforms the tree into a line graph (i.e., a list),
then runs a fast nearest neighbor method to predict the labels of each
node in the line. Though this technique is similar to that one used
by~\cite{HLP09}, the fact that the tree is weighted makes the analysis
significantly more difficult, and the practical scope of our algorithm significantly wider.
Our experimental comparison in Section \ref{s:exp} confirms that exploiting the weight 
information is often beneficial in real-world graph prediction problem.

\begin{figure}
  \centering
\figscale{\imgdir/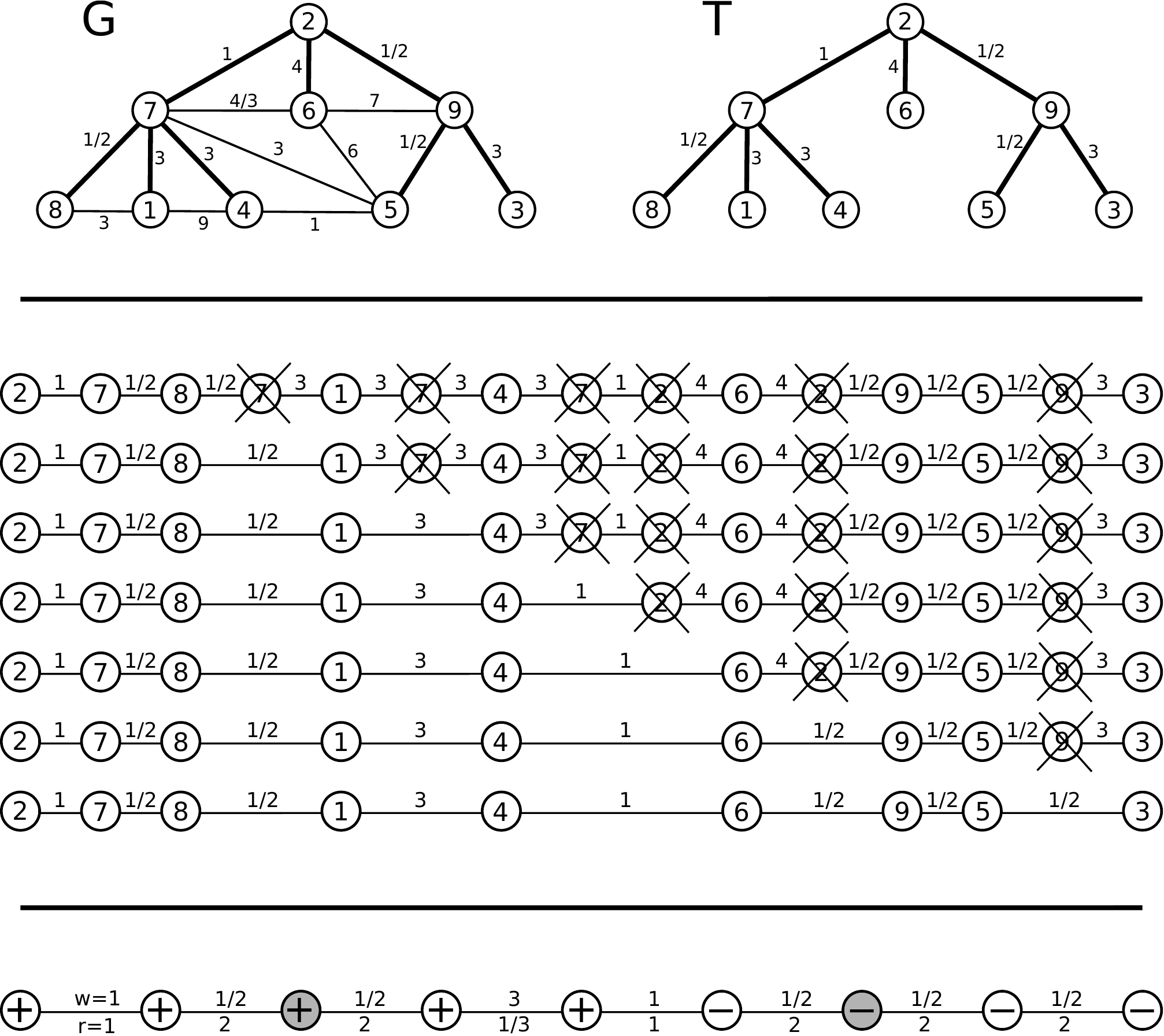}{0.52}
\caption{\label{f:3}
\textbf{Top:} A weighted graph $G$ with 9 nodes. 
Initially, \wta\ extracts a random spanning tree $T$ out of 
$G$. The weights on the edges in $T$ are the same as those of $G$.
\textbf{Middle:} The spanning tree $T$ is linearized through a depth-first traversal 
starting from an arbitrary node (node 2 in this figure). For simplicity, 
we assume the traversal visits the siblings from left to right.
As soon as a node is visited
it gets stored in a line graph $L'$ (first line graph from top). Backtracking steps 
produce duplicates in $L'$ of some of the nodes in $T$. For instance, node 7 is the
first node to be duplicated when the visit backtracks from node 8. 
The duplicated nodes are progressively eliminated from $L'$ in the order of their
insertion in $L'$.
Several iterations of this node elimination process are displayed from the top to
the bottom, showing how $L'$ is progressively shrunk to the final line $L$ (bottom line). 
Each line represents the elimination of a single duplicated node.
The crossed nodes in each line are the nodes which are scheduled
to be eliminated. Each time a new node $j$ is eliminated, its two adjacent nodes $i$ and
$k$ are connected by the lighter 
of the two edges $(i,j)$ and $(j,k)$. For instance: the left-most duplicated 7 
is dropped by directly connecting the two adjacent nodes 8 and 1 by an edge with 
weight $1/2$; the right-most node 2 is eliminated by directly connecting node 6 to 
node 9 with an edge with weight $1/2$, and so on.
Observe that this elimination procedure can be carried out 
{\em in any order} without changing the resulting list $L$.  
\textbf{Bottom:} We show \wta's prediction on the line $L$ so obtained. 
In this figure, the numbers above the edges 
denote the edge weights, the ones below are the resistors, i.e., weight reciprocals. 
We are at time step $t=3$ where two labels have so far been revealed (gray nodes).
\wta\ predicts on the remaining nodes according to a nearest neighbor rule on $L$, 
based on the resistance distance metric. All possible predictions made by \wta\ at this
time step are shown.
}
\end{figure}

Given a labeled weighted tree $(T,\by)$, the algorithm initially creates a weighted line
graph $L'$ containing some duplicates of the nodes in $T$. Then, each duplicate node
(together with its incident edges) is replaced by a single edge with a suitably chosen weight.
This results in the final weighted line graph $L$ which is then used for prediction.
%
In order to create $L$ from $T$, $\wta$ performs the
following \textsl{tree linearization} steps:
%
\begin{enumerate}
\item An arbitrary node $r$ of $T$ is chosen, and a line $L'$ containing only $r$ is created.
\item Starting from $r$, a depth-first visit of $T$ is performed.
Each time an edge $(i,j)$ is traversed (even in a backtracking step) from $i$ to $j$,
the edge is appended to $L'$ with its weight $w_{i,j}$, and $j$ becomes the current
terminal node of $L'$.
Note that backtracking steps can create in $L'$ at most one duplicate of each edge
in $T$, while nodes in $T$ may be duplicated several times in $L'$.
\item $L'$ is traversed once, starting from terminal $r$. During this traversal,
duplicate nodes are eliminated as soon as they are encountered. This works as follows.
Let $j$ be a duplicate node, and $(j',j)$ and $(j,j'')$ be the two incident
edges. The two edges are replaced by a new edge $(j',j'')$ having weight
$w_{j',j''} = \min\bigl\{w_{j',j}, w_{j,j''}\bigr\}$.\footnote
{
By iterating this elimination procedure, it might happen that more than two adjacent 
nodes get eliminated. In this case, the two surviving terminal nodes are connected 
in $L$ by the lightest edge among the eliminated ones in $L'$.
}
Let $L$ be the resulting line.
\end{enumerate}
The analysis of Section~\ref{ss:analysis} shows that this choice of $w_{j',j''}$
guarantees that the weighted cutsize of $L$ is smaller than twice the
weighted cutsize of $T$.
%

Once $L$ is created from $T$, the algorithm predicts the label 
of each node $i_t$ using a nearest-neighbor rule operating on $L$ with
a {\em resistance distance} metric. That is, the prediction on $i_t$ is the label of $i_{s^*}$, being
$s^* = \argmin_{s < t} d(i_s,i_t)$ the previously revealed node closest
to $i_t$, and
$
    d(i,j) = \sum_{s = 1}^k {1}/{w_{v_s,v_{s+1}}}
$
is the sum of the resistors (i.e., reciprocals of edge weights)
along the unique path $i = v_1 \rightarrow v_2 \rightarrow\cdots\rightarrow v_{k+1} = j$ 
connecting node $i$ to node $j$.
Figure \ref{f:3} gives an example of \wta\ at work.

\subsection{Analysis of WTA}\label{ss:analysis}
%
The following lemma gives a mistake bound on \wta\ run on any weighted line graph.
Given any labeled graph $(G,\by)$, we denote by $R^W_G$ the sum of resistors
of $\phi$-free edges in $G$,
\[
R^W_G = \sum_{(i,j)\in E \setminus E^{\phi}} \frac{1}{w_{i,j}}\ .
\]
Also, given any $\phi$-free edge subset $E' \subset E \setminus E^{\phi}$, we 
define $R^W_{G}(\neg E')$ as the sum of the resistors of 
all $\phi$-free edges in $E \setminus (E^{\phi} \cup E')$,
\[
R^W_{G}(\neg E') = \sum_{(i,j) \in E \setminus (E^{\phi} \cup E')} \frac{1}{w_{i,j}}\ .
\]
Note that $R^W_{G}(\neg E') \le R^W_G$, since we drop some edges from the sum in the defining formula.

Finally, we use $f \bigoheq g$ as shorthand for $f = \scO(g)$. 
The following lemma is the starting point of our theoretical investigation ---please
see Appendix~A for proofs.

\begin{lemma}\label{l:ub-L-to-T}
If \wta\ is run on a labeled weighted line graph $(L,\by)$, then the total number
$m_L$ of mistakes satisfies
\[
    m_L
\bigoheq
    \Phi_L(\by)
    \left(1 + \log\left(1+\frac{R^W_L(\neg E') \ \Phi^W_L(\by)}{\Phi_L(\by)}\right)
    \right)+|E'|
\]
for all subsets $E'$ of $E \setminus E^{\phi}$.
\end{lemma}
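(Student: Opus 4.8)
The plan is to analyze \wta\ running on the line graph $(L,\by)$ by partitioning the revealed-node sequence into \emph{runs} — maximal contiguous blocks of nodes that lie between two consecutive $\phi$-edges of $L$ (so each run is $\phi$-free internally, i.e.\ all nodes in a run have the same label). Since $\Phi_L(\by)$ counts the $\phi$-edges, there are at most $\Phi_L(\by)+1$ such runs. A mistake can only occur on a node whose nearest already-revealed neighbor (in resistance distance on $L$) carries the wrong label, which forces the nearest neighbor to lie in a \emph{different} run, hence on the other side of at least one $\phi$-edge. The first step is therefore to localize mistakes: show that the mistakes can be charged to the $\phi$-edges, and that within the region governed by a single $\phi$-edge the number of mistakes is controlled by how the nearest-neighbor rule ``fills in'' a $\phi$-free segment.

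Second, I would prove a per-segment bound of the following shape. Consider a $\phi$-free portion of $L$ of total resistance $R$ flanked by $\phi$-edges, and suppose the two flanking $\phi$-edges have weights contributing total $\phi$-weight at most $\Phi^W$. When the nearest-neighbor rule predicts a fresh node inside this segment, a mistake means the closest revealed node is across a $\phi$-edge at resistance distance $\le$ (distance to the nearest revealed same-label node). A standard doubling/halving argument on the resistance distance then shows that the number of mistakes charged to this segment is $\scO\!\left(1+\log\!\bigl(1 + R\,\Phi^W\bigr)\right)$ — each new mistake in the segment at least halves the ``uncontested'' resistance interval around it, so after $\scO(\log)$ mistakes the whole segment is pinned down by correctly-labeled neighbors. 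Summing over the at most $\Phi_L(\by)+1$ segments, using concavity of $\log$ to pull the per-segment resistances and $\phi$-weights into the aggregate quantities $R^W_L$ and $\Phi^W_L(\by)$ (via Jensen, with the $\Phi_L(\by)$ factor out front normalizing the averages), yields
\[
m_L \;\bigoheq\; \Phi_L(\by)\left(1+\log\!\left(1+\frac{R^W_L\,\Phi^W_L(\by)}{\Phi_L(\by)}\right)\right)~.
\]

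Third, to obtain the stated refinement with the arbitrary $\phi$-free subset $E'$, I would observe that an edge of very small weight (large resistance) acts almost like a disconnection of the line: deleting it splits $L$ into pieces, and \wta's nearest-neighbor rule on $L$ is only ``fooled'' across such an edge in a way that costs at most one extra mistake per deleted edge. Concretely, for any $E' \subseteq E \setminus E^{\phi}$, remove those edges to get a forest of sub-lines; apply the per-segment analysis inside each sub-line — which replaces $R^W_L$ by $R^W_L(\neg E')$ in every logarithm — and then add $|E'|$ to the bound to account for the at most one boundary mistake incurred at each removed edge when the true nearest same-label neighbor was on the far side. Taking the minimum over all such $E'$ is exactly the freedom asserted in the lemma.

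The main obstacle I anticipate is the per-segment logarithmic bound in the weighted setting: in the unweighted case of \cite{HLP09} the halving argument is clean because all edges have unit resistance, but here the resistance distances are heterogeneous, so I must be careful that a single mistake genuinely shrinks the relevant interval by a constant \emph{multiplicative} factor in resistance, not merely removes one node. The delicate point is handling a long chain of light edges abutting a single heavy $\phi$-edge: there the product $R\,\Phi^W$ inside the logarithm is doing real work, and I need the doubling argument to be phrased in terms of the resistance-distance-to-$\phi$-edge ratio so that both the accumulated $\phi$-free resistance and the $\phi$-edge weight enter multiplicatively. Getting the constants and the $+1$ terms to line up so that the final sum collapses cleanly under Jensen — rather than leaving a stray $\sum_i \log(\cdot)$ that does not telescope into the global quantities — is where the bookkeeping will be most demanding.
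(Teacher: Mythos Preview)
Your proposal is correct and follows essentially the same route as the paper: decompose $L$ into $\Phi_L(\by)+1$ maximal $\phi$-free clusters, run a resistance-distance halving argument inside each cluster (the paper splits each cluster at its first revealed node $v_0$ into two semi-clusters and bounds each by $\lfloor\log_2(1+R'_k w'_k)\rfloor$, exactly the $R\cdot\Phi^W$ product you anticipate), then aggregate via Jensen. For the $E'$ refinement the paper \emph{contracts} each edge of $E'$ rather than removing it---showing that the mistake bound on the contracted semi-cluster plus~$1$ dominates the original---which is a slightly cleaner way to justify the ``$+|E'|$'' than your removal-into-a-forest framing, but the two are morally identical and your intuition about one boundary mistake per dropped edge is the right one.
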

Note that the bound of Lemma~\ref{l:ub-L-to-T} implies that, for any $K=|E'| \geq 0$,
one can drop from the bound the contribution of any set of $K$ resistors in $R^W_L$
at the cost of adding $K$ extra mistakes.
We now provide an upper bound on the number of mistakes made by \wta\ on any
weighted tree $T = (V,E,W)$ in terms of the number of $\phi$-edges, the
weighted cutsize, and $R^W_T$.

\begin{theorem}\label{t:ub-tree}
If \wta\ is run on a labeled weighted tree $(T,\by)$, then the total number
$m_T$ of mistakes satisfies
\[
    m_T
\bigoheq
    \Phi_T(\by) \left(1 + \log\left(1+\frac{R^W_T(\neg E') \ \Phi_T^W(\by)}{\Phi_T(\by)}\right)
     \right) + |E'|
\]
for all subsets $E'$ of $E \setminus E^{\phi}$.
\end{theorem}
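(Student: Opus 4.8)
The plan is to derive Theorem~\ref{t:ub-tree} from Lemma~\ref{l:ub-L-to-T} by controlling how the tree-linearization step of \wta\ distorts the three quantities $\Phi$, $\Phi^W$ and $R^W$. Since \wta\ predicts $(T,\by)$ by running its line-graph predictor on the linearized line $L$, we have $m_T = m_L$, and $L$ is a path spanning all $n$ nodes of $T$; hence it suffices to pick a suitable $\phi$-free edge set $E'_L$ of $L$ and invoke Lemma~\ref{l:ub-L-to-T}. First I would fix notation for the linearization: the depth-first visit produces an intermediate line $L'$ together with the natural map $\pi$ sending each edge of $L'$ to the edge of $T$ it traverses; $\pi$ is $2$-to-$1$ because a depth-first visit crosses every tree edge exactly twice, and each edge of $L'$ inherits the weight of its $\pi$-image. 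The duplicate-elimination phase then partitions the edge set of $L'$ into blocks, one per edge of $L$, and the weight of an edge $e_L$ of $L$ is the minimum weight in its block (equivalently, its resistor is the maximum resistor in its block).

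Next I would record three structural inequalities. (i) $\Phi_L(\by) \le 2\,\Phi_T(\by)$: along the $L'$-path realizing the block of an edge $e_L=(a,b)$, the number of $\phi$-edges has the same parity as $\Ind{y_a \neq y_b}$, so every $\phi$-edge of $L$ owns at least one of the exactly $2\,\Phi_T(\by)$ $\phi$-edges of $L'$. (ii) $\Phi^W_L(\by) \le 2\,\Phi^W_T(\by)$: the weight of a $\phi$-edge $e_L$ is at most the weight of any $\phi$-edge of $L'$ in its block, hence at most the weight of a $\phi$-edge of $T$, and the resulting assignment is $2$-to-$1$ (this is the claim already announced in Section~\ref{s:alg}). (iii) Given the tree set $E' \ss E \setminus E^{\phi}$, let $E'_L$ be the set of $\phi$-free edges of $L$ whose block meets $\pi^{-1}(E^{\phi}\cup E')$; then $|E'_L| \le 2\bigl(\Phi_T(\by)+|E'|\bigr)$ by the same counting, while for any $\phi$-free $e_L \notin E'_L$ every edge of its block maps into $E \setminus (E^{\phi}\cup E')$, so the maximal resistor of the block equals $1/w_f$ for some $f \in E \setminus (E^{\phi}\cup E')$, and this map is again $2$-to-$1$; summing over $e_L$ gives $R^W_L(\neg E'_L) \le 2\,R^W_T(\neg E')$.

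Then I would apply Lemma~\ref{l:ub-L-to-T} to $(L,\by)$ with this $E'_L$ and substitute. The additive term $|E'_L|$ is $\scO\bigl(\Phi_T(\by)+|E'|\bigr)$ and is absorbed into the target bound; the prefactor satisfies $\Phi_L(\by)\le 2\,\Phi_T(\by)$; and with $Z := R^W_L(\neg E'_L)\,\Phi^W_L(\by) \le 4\,R^W_T(\neg E')\,\Phi^W_T(\by)$ the only delicate point is that $\Phi_L(\by)$ may be much smaller than $\Phi_T(\by)$, so one must not lower-bound the denominator of the logarithm by a constant. Instead I would use that $x\mapsto x\log(1+c/x)$ is nondecreasing on $x>0$ for every fixed $c\ge 0$, together with $\Phi_L(\by)\le 2\,\Phi_T(\by)$, to obtain
\[
\Phi_L(\by)\log\!\Bigl(1+\tfrac{Z}{\Phi_L(\by)}\Bigr)\;\le\;2\,\Phi_T(\by)\log\!\Bigl(1+\tfrac{Z}{2\,\Phi_T(\by)}\Bigr),
\]
and then, using $Z \le 4\,R^W_T(\neg E')\,\Phi^W_T(\by)$ and $\log(1+2a)\le 1+\log(1+a)$, bound the right-hand side by $2\,\Phi_T(\by)\bigl(1+\log(1+R^W_T(\neg E')\,\Phi^W_T(\by)/\Phi_T(\by))\bigr)$. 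Collecting the three contributions gives the claimed bound; the degenerate case $\Phi_T(\by)=0$ (where $L$ is monochromatic and \wta\ errs at most once) is dispatched separately.

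I expect the main obstacle to be the bookkeeping behind step (iii): setting up the block/partition structure of the linearization cleanly, checking that the assignments into $E^{\phi}$ and into $E\setminus(E^{\phi}\cup E')$ are $\scO(1)$-to-$1$, and --- \emph{crucially} --- realizing that the $\phi$-free edges of $L$ whose weight happens to be dictated by a $\phi$-edge of $T$ must be folded into $E'_L$, which is affordable only because there are at most $\scO(\Phi_T(\by))$ of them. The monotonicity argument for the logarithmic factor is the other step that is easy to overlook.
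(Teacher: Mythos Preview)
Your proposal is correct and follows essentially the same route as the paper: the paper factors the argument through two auxiliary lemmas (a structural ``partition into pairs'' lemma playing the role of your block/$\pi$ bookkeeping, and an intermediate bound that is exactly your application of Lemma~\ref{l:ub-L-to-T} with $E'_L = E_0(L)\cup E'(L)$), then substitutes $\Phi_L\le 2\Phi_T$ and $\Phi^W_L\le 2\Phi^W_T$. Your write-up is in fact more careful than the paper's on one point: the paper's final substitution tacitly relies on the monotonicity of $x\mapsto x\log(1+c/x)$ that you spell out, and your observation that the $\phi$-free $L$-edges whose weight is dictated by a $\phi$-edge of $T$ must be absorbed into $E'_L$ (costing only $\scO(\Phi_T(\by))$) is precisely the paper's set $E_0(L)$.
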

The logarithmic factor in the above bound shows 
that the algorithm takes advantage of labelings such that the weights of
$\phi$-edges are small (thus making $\Phi_T^W(\by)$ small) and the
weights of $\phi$-free edges are high (thus making $R^W_T$ small).
This matches the intuition behind $\wta$'s nearest-neighbor rule
according to which nodes that are close to each other are expected to have the same
label. In particular, observe that the way the above quantities are combined makes 
the bound independent of rescaling of the edge weights.
Again, this has to be expected, since $\wta$'s prediction is scale insensitive.
On the other hand, it may appear less natural that the mistake bound
also depends linearly on the cutsize $\Phi_T(\by)$, {\em independent of
the edge weights}. The specialization to trees of our lower bound (Theorem~\ref{th:lower}
in Section~\ref{s:lower}) implies that this linear dependence of mistakes on the unweighted cutsize is necessary
whenever the adversarial labeling is chosen from a set of labelings with bounded $\Phi_T(\by)$.

\section{Predicting a Weighted Graph}
\label{s:gen}
In order to solve the more general problem of predicting the labels 
of a weighted graph $G$, one can first generate a spanning
tree $T$ of $G$ and then run \wta\ directly on $T$. In this case,
it is possible to rephrase Theorem~\ref{t:ub-tree} in terms of the properties
of $G$. Note that for each spanning tree $T$ of $G$,
$\Phi_T^W(\by) \le \Phi_G^W(\by)$ and $\Phi_T(\by) \le \Phi_G(\by)$.
Specific choices of the spanning tree $T$ control in different ways
the quantities in the mistake bound of Theorem~\ref{t:ub-tree}.
For example, a minimum spanning tree tends to reduce the value of $\wt{R}^W_T$,
betting on the fact that $\phi$-edges are light. The next theorem relies
on {\em random} spanning trees.
%
%
%
%
\begin{theorem}
\label{th:graph}
If \wta\ is run on a random spanning tree $T$ of a labeled weighted graph
$(G,\by)$, then the total number $m_G$ of mistakes satisfies
\begin{equation}
\label{e:th_general_graphs}
    \bE\,m_G
\bigoheq
    \bE\bigl[\Phi_T(\by)\bigr]\Bigl(1 + \log \left(1 + \wmax^{\phi} \bE\bigl[R^W_T\bigr] \Bigr) \right)~,
\end{equation}
where ${\dt \wmax^{\phi} = \max_{(i,j) \in E^{\phi}} w_{i,j} }$.
\end{theorem}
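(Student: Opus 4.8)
The plan is to start from the per-tree bound in Theorem~\ref{t:ub-tree} and take expectations over the random draw of the spanning tree $T$, choosing the auxiliary edge set $E'$ inside the bound cleverly so that the $\log$ argument becomes controllable in expectation. Concretely, Theorem~\ref{t:ub-tree} gives, for \emph{every} realization of $T$ and \emph{every} $E'\subseteq E\setminus E^{\phi}$,
\[
    m_T \bigoheq \Phi_T(\by)\Bigl(1+\log\bigl(1+ R^W_T(\neg E')\,\Phi^W_T(\by)/\Phi_T(\by)\bigr)\Bigr) + |E'|~.
\]
The first simplification is to bound $\Phi^W_T(\by) = \sum_{(i,j)\in E^\phi\cap T} w_{i,j} \le \wmax^\phi \,\Phi_T(\by)$, which collapses the ratio $\Phi^W_T(\by)/\Phi_T(\by)$ to at most $\wmax^\phi$ and lets us write $m_T \bigoheq \Phi_T(\by)\bigl(1+\log(1+\wmax^\phi R^W_T(\neg E'))\bigr)+|E'|$. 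Taking $E'=\emptyset$ already yields $m_T \bigoheq \Phi_T(\by)\bigl(1+\log(1+\wmax^\phi R^W_T)\bigr)$ deterministically; the task is then to pass to expectations and pull the expectation \emph{inside} the logarithm.

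The second step is the concavity/Jensen move. Since $x\mapsto \log(1+\wmax^\phi x)$ is concave, and more to the point $\Phi_T(\by)$ and $R^W_T$ are not independent, the cleanest route is to first note $\Phi_T(\by)\le n-1$ is not good enough — instead I would bound the product $\Phi_T(\by)\log(1+\wmax^\phi R^W_T)$ by splitting: either invoke that $\log$ is subadditive-ish, or, better, observe that for fixed $\Phi_T(\by)$ the quantity is concave in $R^W_T$, and handle the coupling by Cauchy--Schwarz or by conditioning. The slick argument the authors most likely intend: use $\log(1+ab)\le \log(1+a)+\log(1+b)$ type manipulations sparingly, and instead apply Jensen directly to the concave function $u \mapsto \log(1+\wmax^\phi u)$ after factoring out $\Phi_T(\by)$ via the trivial bound $\Phi_T(\by)\log(\cdots)\le (\text{something})$. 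The honest statement is: because $\log(1+\wmax^\phi R^W_T)$ is concave in $R^W_T$ and nonneg, and because we ultimately only need the bound in the form $\bE[\Phi_T(\by)]\cdot(1+\log(1+\wmax^\phi \bE[R^W_T]))$, one expects to need an inequality of the shape $\bE[\Phi_T\log(1+\wmax^\phi R^W_T)] \bigoheq \bE[\Phi_T]\log(1+\wmax^\phi \bE[R^W_T])$, which follows if one can decouple $\Phi_T$ from the resistance term. One legitimate way to decouple: bound $\Phi_T(\by) \le \Phi_G(\by)$ (a constant, not random!) wherever it multiplies the logarithm, giving $\bE[m_G]\bigoheq \Phi_G(\by)\bigl(1+\bE\log(1+\wmax^\phi R^W_T)\bigr) \le \Phi_G(\by)(1+\log(1+\wmax^\phi\bE[R^W_T]))$ by Jensen on the concave $\log$; but the theorem statement keeps $\bE[\Phi_T(\by)]$, which is tighter, so some care is needed. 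The resolution is probably to keep one factor of $\Phi_T$ random and bounded by $n-1$ inside the log's multiplier via the elementary inequality $a\log(1+b) \le a\log(1+ab') \le \ldots$ — honestly, the cleanest is: write the RHS of Theorem~\ref{t:ub-tree} (with $E'=\emptyset$, and $\Phi^W_T \le \wmax^\phi \Phi_T$) as $\Phi_T(\by) + \Phi_T(\by)\log(1+\wmax^\phi R^W_T)$, bound the second summand's leading $\Phi_T(\by)$ by $\Phi_G(\by)$ only if necessary, but actually use that $\log(1+\wmax^\phi R^W_T)$ and $\Phi_T(\by)$ are (one hopes) positively or negatively correlated in a way that FKG or a direct computation resolves. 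I would present it via: $\bE\bigl[\Phi_T\log(1+\wmax^\phi R^W_T)\bigr] \le \sqrt{\bE[\Phi_T^2]}\cdot\sqrt{\bE[\log^2(\cdots)]}$ is too lossy; better to just use $\Phi_T \le \Phi_G$ deterministically in the log-multiplier, then Jensen, then note $\Phi_G = \scO(\bE[\Phi_T])$ is \emph{false} in general — so this does not work, and the genuine argument must keep $\Phi_T$ random.

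Therefore the actual plan, step by step, is: (i) apply Theorem~\ref{t:ub-tree} pointwise with $E'=\emptyset$ and substitute $\Phi^W_T(\by)\le\wmax^\phi\Phi_T(\by)$; (ii) take $\bE$ over $T$; (iii) for the term $\bE[\Phi_T(\by)\log(1+\wmax^\phi R^W_T)]$, use the bound $\log(1+\wmax^\phi R^W_T) \le \log\bigl(1+\wmax^\phi R^W_T \cdot \tfrac{n-1}{\Phi_T(\by)}\bigr)$ trivially (since $n-1\ge\Phi_T(\by)$ when $\Phi_T\ge 1$; the $\Phi_T=0$ case is trivial as then $m_T=0$), then pull $\Phi_T(\by)$ inside as $\Phi_T(\by)\log(1+\ldots) \le (n-1)\log(1+\ldots)$ — no, still lossy. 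The correct and intended move is almost certainly an application of Jensen's inequality to the jointly-defined concave function $(\phi, \rho) \mapsto \phi\log(1+\wmax^\phi \rho/\phi)$, which \emph{is} concave (perspective of a concave function), so $\bE[\phi\log(1+\wmax^\phi\rho/\phi)] \le \bE[\phi]\log(1+\wmax^\phi\bE[\rho]/\bE[\phi])$ — and combined with $\bE[\rho]=\bE[R^W_T]$ this gives exactly $\bE[\Phi_T]\log(1+\wmax^\phi\bE[R^W_T]/\bE[\Phi_T]) \le \bE[\Phi_T](1+\log(1+\wmax^\phi\bE[R^W_T]))$ after absorbing the $1/\bE[\Phi_T]$ factor (using $\bE[\Phi_T]\ge$ something, or just $\log(x/c)\le\log x$ when $c\ge 1$; here $\bE[\Phi_T]\ge 1$ in the nontrivial case, or one folds it into the additive $1$). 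So the \textbf{key step}, and the main obstacle, is recognizing and justifying that the perspective function $g(\phi,\rho)=\phi\log(1+c\rho/\phi)$ is concave on $\{\phi>0\}$ and applying the multivariate Jensen inequality to it; everything else is the routine substitution $\Phi^W_T\le\wmax^\phi\Phi_T$ and bookkeeping of the $\scO(\cdot)$ constants and the degenerate $\Phi_T=0$ case.
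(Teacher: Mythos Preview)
Your overall route --- Theorem~\ref{t:ub-tree} with $E'=\emptyset$, the substitution $\Phi^W_T(\by)\le\wmax^\phi\,\Phi_T(\by)$, then expectation and a Jensen step --- is exactly the paper's (carried out there via Lemma~\ref{l:graph_lemma} together with Lemma~\ref{lemma_tree_line}). The gap is in your step~(iii). After the substitution in~(i), the ratio $\Phi^W_T/\Phi_T$ has already collapsed to $\wmax^\phi$, so the quantity you must average is $\Phi_T(\by)\bigl(1+\log(1+\wmax^\phi R^W_T)\bigr)$, with \emph{no} $\Phi_T$ left inside the logarithm. The perspective function $(\phi,\rho)\mapsto\phi\log(1+c\rho/\phi)$ is concave, but it is not the function you actually have; and since $\phi\log(1+c\rho/\phi)\le\phi\log(1+c\rho)$ whenever $\phi\ge1$, you cannot dominate the latter by the former. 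If instead you skip the substitution and set $\rho=R^W_T\,\Phi^W_T(\by)$ so that the perspective form \emph{does} match, Jensen leaves you with $\bE[R^W_T\,\Phi^W_T]$ inside the logarithm, and you are back to a product of correlated random variables. Either way, the perspective-Jensen move as you describe it does not close the argument.

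For comparison, the paper at this exact step asserts that $(x,y)\mapsto x\bigl(1+\log(1+\wmax^\phi y)\bigr)$ is jointly concave on $x,y\ge0$ and applies bivariate Jensen to it. That function \emph{does} match the expression, but its Hessian has determinant $-\wmax^{\phi\,2}/(1+\wmax^\phi y)^2<0$, so it is not concave, and the paper's justification is itself incomplete at the same place. One clean way to repair both arguments: $\Phi_T(\by)$ is an increasing function of the indicators $\bigl(\Ind{e\in T}\bigr)_{e\in E^\phi}$, while $\log(1+\wmax^\phi R^W_T)$ is an increasing function of the disjoint family $\bigl(\Ind{e\in T}\bigr)_{e\in E\setminus E^\phi}$. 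Negative association of the edge indicators of a (weighted) random spanning tree then gives $\bE\bigl[\Phi_T\log(1+\wmax^\phi R^W_T)\bigr]\le\bE[\Phi_T]\,\bE\bigl[\log(1+\wmax^\phi R^W_T)\bigr]$, after which ordinary one-variable Jensen on the concave map $r\mapsto\log(1+\wmax^\phi r)$ yields the bound.
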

Note that the mistake bound in~(\ref{e:th_general_graphs})
is scale-invariant, since 
$\bE\bigl[\Phi_T(\by)\bigr]=\sum_{(i,j) \in E^{\phi}} w_{i,j} r_{i,j}^W$
cannot be affected by a uniform rescaling of the edge weights (as we said in Subsection~\ref{ss:not}), and so is
the product $w^{\phi}_{\max}\bE\bigl[R^W_T\bigr]=w^{\phi}_{\max}\sum_{(i,j) \in E 
\setminus E^{\phi}} r^W_{i,j}$. 

We now compare the mistake bound~(\ref{e:th_general_graphs}) 
to the lower bound stated in Theorem~\ref{th:lower}. In particular, we prove
that $\wta$ is optimal (up to $\log n$ factors) on every
weighted connected graph in which the $\phi$-edge weights 
are not ``superpolynomially overloaded'' w.r.t.\ the $\phi$-free edge weights.
In order to rule out pathological cases, when the weighted graph
is nearly disconnected, we impose the following mild assumption
on the graphs being considered.

We say that a graph is \textsl{polynomially connected} if the ratio of any pair of effective 
resistances (even those between nonadjacent nodes) in the graph is polynomial in the 
total number of nodes $n$.   
This definition essentially states that a weighted graph can be 
considered connected if no pair of nodes can be found which is substantially
less connected than any other pair of nodes. Again, as one would naturally expect,
this definition is independent of uniform weight rescaling.  
The following corollary shows that if \wta\ is not optimal on a polynomially connected
graph, then the labeling must be so irregular that the total weight of $\phi$-edges
is an overwhelming fraction of the overall weight.
%
\begin{cor}
\label{cor:upper}
Pick any polynomially connected weighted graph $G$ with $n$ nodes.
If the ratio of the total weight of $\phi$-edges to the total weight
of $\phi$-free edges is bounded by a polynomial in $n$, then
the total number of mistakes $m_G$ made by \wta\ 
when run on a random spanning tree $T$ of G
satisfies $\bE\,m_G \bigoheq \bE \bigl[\Phi_T(\by)\bigr] \log n$. 
\end{cor}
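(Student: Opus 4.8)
The plan is to derive Corollary~\ref{cor:upper} directly from Theorem~\ref{th:graph}. Its bound
\[
\bE\,m_G \;\bigoheq\; \bE\bigl[\Phi_T(\by)\bigr]\Bigl(1 + \log\bigl(1 + \wmax^{\phi}\,\bE[R^W_T]\bigr)\Bigr)
\]
already carries the correct prefactor $\bE[\Phi_T(\by)]$, so the whole task is to show that the logarithmic factor is $\scO(\log n)$; equivalently, that the scale-invariant product $\wmax^{\phi}\,\bE[R^W_T]$ is bounded by a polynomial in $n$. If $E^{\phi}=\emptyset$ there is nothing to prove, since then $\bE[\Phi_T(\by)]=0$ and hence $\bE\,m_G=0$; otherwise we may take $n\ge 2$ and let the $\scO(\cdot)$ constant absorb the additive $1$ against $\log n$.

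To bound $\wmax^{\phi}\,\bE[R^W_T]$ I would tie both the edge weights and the resistors to the common \emph{scale} of effective resistances in $G$. For the second factor, recall from the remark following Theorem~\ref{th:graph} that $\bE[R^W_T]=\sum_{(i,j)\in E\setminus E^{\phi}} r^W_{i,j}$; bounding each summand by the resistance diameter $D^W_G=\max_{i,j}r^W_{i,j}$ and using the crude count $|E|<n^2$ gives $\bE[R^W_T]<n^2 D^W_G$. For the first factor, pick a $\phi$-edge $(a,b)$ with $w_{a,b}=\wmax^{\phi}$; since the effective resistance between the endpoints of an edge never exceeds the resistance of that edge in isolation (removing the rest of the network only raises resistances, by Rayleigh monotonicity), $r^W_{a,b}\le 1/\wmax^{\phi}$, i.e.\ $\wmax^{\phi}\le 1/r^W_{a,b}$. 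Multiplying,
\[
\wmax^{\phi}\,\bE[R^W_T]\;<\;n^2\,\frac{D^W_G}{r^W_{a,b}}\;\le\;n^2\,\frac{\max_{i,j}r^W_{i,j}}{\min_{i,j}r^W_{i,j}}\,,
\]
since $r^W_{a,b}$ is the effective resistance of one particular pair of nodes. Polynomial connectivity of $G$ says exactly that this ratio of largest to smallest pairwise effective resistance is polynomial in $n$; hence $\wmax^{\phi}\,\bE[R^W_T]$ is polynomial in $n$, so $\log\bigl(1+\wmax^{\phi}\,\bE[R^W_T]\bigr)=\scO(\log n)$, and plugging this back into Theorem~\ref{th:graph} gives $\bE\,m_G\bigoheq\bE[\Phi_T(\by)]\log n$. (Polynomial connectivity is really what drives this; the bounded-weight-ratio hypothesis is the ingredient that lets one read the result as the dichotomy advertised before the statement --- \wta\ is near-optimal \emph{unless} the $\phi$-edge weight is an overwhelming share of the total --- e.g.\ by first normalizing $\wmax^{\phi}=1$ via scale-invariance.)

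The step I expect to be the crux is the one relating $\wmax^{\phi}$ to an effective resistance. Because every quantity in Theorem~\ref{th:graph} is invariant under uniform rescaling of the weights, neither $\wmax^{\phi}$ nor $\bE[R^W_T]$ can be controlled in isolation, and the inequality $r^W_{a,b}\le 1/\wmax^{\phi}$ is precisely what couples the ``weight scale'' to the ``resistance scale'' so that the polynomial-connectivity assumption can finish the job. The remaining ingredients --- rewriting $\bE[R^W_T]$ as a sum of effective resistances, the bound $|E|<n^2$, and the final substitution --- are routine, the only mild care being in the degenerate regimes ($E^{\phi}=\emptyset$, or very small $n$), where the additive $1$ in Theorem~\ref{th:graph} is handled by the $\scO(\cdot)$ constant rather than by $\log n$.
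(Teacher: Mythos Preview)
Your argument is correct, and it takes a genuinely different route from the paper's. The paper proceeds by contradiction: it assumes the conclusion fails, so that Theorem~\ref{th:graph} forces $\wmax^{\phi}\,\bE[R^W_T]>\mathrm{poly}(n)$; then, using polynomial connectivity together with $w_{i,j}r^W_{i,j}\le 1$ applied to each $\phi$-\emph{free} edge $(i,j)$, it deduces $\wmax^{\phi}/w_{i,j}>\mathrm{poly}(n)$ for all such edges, whence the total $\phi$-weight to $\phi$-free-weight ratio is superpolynomial, contradicting the second hypothesis. You instead argue directly, applying the very same Rayleigh-type inequality $w_{a,b}r^W_{a,b}\le 1$ to the heaviest $\phi$-edge rather than to the $\phi$-free edges; this couples $\wmax^{\phi}$ to the resistance scale via $\wmax^{\phi}\le 1/\min_{i,j}r^W_{i,j}$, and combined with $\bE[R^W_T]<n^2\max_{i,j}r^W_{i,j}$ the polynomial-connectivity assumption alone finishes the job. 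What your approach buys is a slightly stronger statement: as you correctly observe, the bounded-weight-ratio hypothesis is never actually invoked, so polynomial connectivity already suffices for the conclusion. What the paper's contradiction route buys is that it makes explicit the dichotomy stated just before the corollary (either \wta\ is near-optimal, or the $\phi$-edge weight overwhelms the $\phi$-free weight), which is the form one wants when describing the failure mode rather than merely the success condition.
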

Note that when the hypothesis of this corollary is not satisfied
the bound of \wta\ is not necessarly vacuous.
For example, $\bE\bigl[R^W_T\bigr]w^{\phi}_{\max} = n^{\mathrm{polylog}(n)}$
implies an upper bound which is optimal up to $\mathrm{polylog}(n)$ factors.
%
%
%
%
%
In particular, having a constant number of $\phi$-free edges with exponentially large 
resistance contradicts the assumption of polynomial connectivity, but it need not
lead to a vacuous bound in Theorem~\ref{th:graph}.
In fact, one can use Lemma~\ref{l:ub-L-to-T}
to drop from the mistake bound of Theorem~\ref{th:graph} 
the contribution of any set of $\scO(1)$ resistances in 
$\bE\bigl[R^W_T\bigr]=\sum_{(i,j) \in E \setminus E^{\phi}} r^W_{i,j}$ at the
cost of adding just $\scO(1)$ extra mistakes.
This could be seen as a robustness property of $\wta$'s bound 
against graphs that do not fully satisfy the connectedness assumption.

We further elaborate on the robustness properties of \wta\ in Section~\ref{s:robust}.
In the meanwhile, note how Corollary~\ref{cor:upper} compares to the expected mistake bound
of algorithms 
like graph Perceptron (see Section~\ref{s:rel}) 
on the same random spanning tree. 
This bound depends on the
expectation of the product $\Phi^W_T(\by)D^W_T$, where $D^W_T$ is the
diameter of $T$ in the resistance distance metric. Recall from the discussion in
Section~\ref{s:rel} that these two factors
are negatively correlated because $\Phi_T^W(\by)$ depends linearly on the edge
weights, while $D^W_T$ depends linearly on the reciprocal of these weights.
Moreover, for any given scale of the edge weights, $D^W_T$ can be linear in the
number $n$ of nodes.

Another interesting comparison is to the covering ball bounds of~\cite{Her08,HL09}.
Consider the case when $G$ is an unweighted tree with diameter $D$. 
Whereas the dual norm approach of~\cite{HL09} gives a mistake bound of the form 
$\Phi_G(\by)^2\,\log D$, our approach, as well as the one by~\cite{HLP09},
yields  $\Phi_G(\by)\,\log n$. Namely, the dependence on $\Phi_G(\by)$
becomes linear rather than quadratic,
but the diameter $D$ gets replaced by $n$, the number of nodes in $G$.
Replacing $n$ by $D$ seems to be a benefit brought by the covering ball approach.\footnote
{
As a matter of fact, a bound of the form $\Phi_G(\by)\,\log D$ on unweighted trees is 
also achieved by the direct analysis of~\cite{CGV09b}.
}  
More generally, one can say that the covering ball approach seems to allow to replace
the extra $\log n$ term contained in Corollary~\ref{cor:upper} by more refined 
structural parameters of the graph (like its diameter $D$), but it does so at the cost of
squaring the dependence on the cutsize. 
A typical (and unsurprising) example where the dual-norm covering ball bounds are better 
then the one in Corollary~\ref{cor:upper} is when the labeled graph is
well-clustered. One such example we already mentioned in Section~\ref{s:rel}:
On the unweighted barbell graph made up of $m$-cliques connected by $k \ll m$ $\phi$-edges, 
the algorithm of~\cite{HL09} has a {\em constant} 
bound on the number of mistakes (i.e., independent of both $m$ and $k$), the Pounce algorithm
has a {\em linear} bound in $k$, while Corollary \ref{cor:upper} delivers a {\em logarithmic}
bound in $m+k$. Yet, it is fair to point out that the bounds of~\cite{Her08,HL09}
refer to computationally heavier algorithms than \wta: Pounce has a deterministic initialization step that computes 
the inverse Laplacian matrix of the graph (this is cubic in $n$, or quadratic in the case of trees), the minimum $(\Psi,p)$-seminorm interpolation algorithm of~\cite{HL09} has no initialization,
but each step requires the solution of a constrained convex optimization problem (whose 
time complexity was not quantified by the authors).
Further comments on the time complexity of our algorithm are given in Section \ref{s:compl}.

\section{The Robustness of WTA to Label Perturbation}
\label{s:robust}
In this section we show that \wta\ is tolerant to noise, i.e., the number of mistakes made by \wta\ 
on most labeled graphs $(G,\by)$ does not significantly change 
if a small number of labels are perturbed 
before running the algorithm. This is especially the case if the input 
graph $G$ is polynomially connected (see Section~\ref{s:gen} for a definition).

As in previous sections, we start off from the case when the input 
graph is a tree, and then we extend the result to general graphs using random spanning trees.


Suppose that the labels $\by$ in the tree $(T,\by)$ used as input to the 
algorithm have actually been obtained from another labeling $\by'$ of $T$
through the perturbation (flipping) of some of its labels.
As explained at the beginning of Section~\ref{s:alg},
\wta\ operates on a line graph $L$ obtained through the linearization process
of the input tree $T$. The following theorem shows that, whereas the cutsize differences
$|\Phi^W_T(\by)-\Phi^W_T(\by')|$ and $|\Phi_T(\by)-\Phi_T(\by')|$ on tree $T$ can in principle
be very large, the cutsize differences
$|\Phi^W_L(\by)-\Phi^W_L(\by')|$ and 
$|\Phi_L(\by)-\Phi_L(\by')|$  on the line graph $L$ built by \wta\ are
always small.

In order to quantify the above differences, we need a couple of ancillary
definitions. Given a labeled tree $(T,\by)$, define $\zeta_T(K)$ to be
the sum of the weights of the $K$ heaviest edges 
in $T$,
\[
\zeta_T(K) = \max_{E' \subseteq E \,:\, |E'| = K} \sum_{(i,j) \in E'} w_{i,j}\ .
\]
If $T$ is unweighted we clearly have $\zeta_T(K)=K$.
Moreover, given any two labelings $\by$ and $\by'$ of $T$'s nodes, we let 
$\delta(\by,\by')$ be the number of nodes for which the two labelings
differ, i.e.,
\(
\delta(\by,\by') = \bigl|\{i=1,\dots,n \,:\, y_i\neq y_i'\}\bigr|\ .
\)
%
%
%
%
%
%
\begin{theorem}\label{t:robust}
On any given labeled tree $(T,\by)$ the tree linearization step of \wta\ 
generates a line graph $L$ such that:
\begin{enumerate}
\item ${\dt \Phi_L^W(\by) \le \min_{\by' \in \{-1,+1\}^n} 2\Bigl(\Phi^W_T(\by')+\zeta_T\bigl({\delta(\by,\by')}\bigr)\Bigr)}$~;
\item ${\dt \Phi_L(\by) \le \min_{\by' \in \{-1,+1\}^n} 2\left(\Phi_T(\by')+\delta(\by,\by')\right) }$~.
\end{enumerate}
\end{theorem}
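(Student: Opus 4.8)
The plan is to reduce both claims to a single structural fact about the linearization process and then invoke the analysis already promised in Section~\ref{ss:analysis}. First I would recall the cornerstone property of the tree linearization: the line $L$ built by \wta\ satisfies $\Phi_L(\by)\le 2\,\Phi_T(\by)$ and $\Phi^W_L(\by)\le 2\,\Phi^W_T(\by)$ for every labeling $\by$. The weighted version follows from the depth-first traversal creating at most one duplicate of each edge of $T$ (so each weight is counted at most twice in $L'$), together with the edge-replacement rule $w_{j',j''}=\min\{w_{j',j},w_{j,j''}\}$, which guarantees that eliminating a duplicate node never increases the weight of any surviving edge; hence every $\phi$-edge of $L$ can be charged to a $\phi$-edge of $T$ with no larger weight, and each $\phi$-edge of $T$ is charged at most twice. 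The unweighted version is the same bookkeeping ignoring weights. These are exactly the facts the paper already states it will prove, so I may assume them.

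Next I would exploit the fact that linearization is a \emph{fixed} combinatorial operation on $T$, independent of the labels: the same line graph $L$ (same nodes, same edges, same weights) arises whether we think of its nodes as carrying labels $\by$ or labels $\by'$. So fix any competitor labeling $\by'$ and apply the cornerstone property to $\by'$ on $L$: $\Phi^W_L(\by')\le 2\,\Phi^W_T(\by')$ and $\Phi_L(\by')\le 2\,\Phi_T(\by')$. It then remains to bound $\Phi^W_L(\by)$ in terms of $\Phi^W_L(\by')$, and $\Phi_L(\by)$ in terms of $\Phi_L(\by')$, purely on the line graph. Here the key observation is that flipping the label of a single node $i$ of $L$ changes the set of $\phi$-edges only among the (at most two) edges of $L$ incident to $i$; therefore $\big|\Phi_L(\by)-\Phi_L(\by')\big|$ is at most the number of edges of $L$ incident to nodes where $\by$ and $\by'$ disagree, and similarly $\big|\Phi^W_L(\by)-\Phi^W_L(\by')\big|$ is at most the total weight of those incident edges. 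Since $L$ is a line, each node has at most two incident edges; a careful accounting (each flipped node contributes its at most two incident $L$-edges) gives $\Phi_L(\by)\le \Phi_L(\by')+2\,\delta(\by,\by')$ in the unweighted count. For the weighted count I would bound the total weight of all $L$-edges incident to flipped nodes; because each node of $L$ came from $T$'s traversal and each edge of $L$ is charged (via the $\min$ rule) to an edge of $T$ of no smaller weight, the heaviest such $L$-edges are dominated by the heaviest edges of $T$, so this total weight is at most $2\,\zeta_T\big(\delta(\by,\by')\big)$ — the factor $2$ again because a line node has at most two incident edges and $\zeta_T(K)$ picks the $K$ heaviest edges of $T$.

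Combining, for the unweighted part: $\Phi_L(\by)\le \Phi_L(\by')+2\,\delta(\by,\by')\le 2\,\Phi_T(\by')+2\,\delta(\by,\by') = 2\big(\Phi_T(\by')+\delta(\by,\by')\big)$, and for the weighted part: $\Phi^W_L(\by)\le \Phi^W_L(\by')+2\,\zeta_T\big(\delta(\by,\by')\big)\le 2\,\Phi^W_T(\by')+2\,\zeta_T\big(\delta(\by,\by')\big) = 2\big(\Phi^W_T(\by')+\zeta_T\big(\delta(\by,\by')\big)\big)$. Taking the minimum over $\by'\in\spin^n$ yields both inequalities of the theorem. The main obstacle I anticipate is getting the constants and the edge-charging argument exactly right for the \emph{weighted} bound: I need to be sure that when several duplicate nodes are eliminated in sequence the $\min$ rule still guarantees every surviving $L$-edge is no heavier than some $T$-edge it replaces, so that ``heaviest $L$-edges incident to $K$ flipped nodes'' is genuinely controlled by $\zeta_T$ evaluated at a number proportional to $K$, rather than by something larger; the footnote in Section~\ref{s:alg} about chains of eliminations is precisely the delicate point to nail down here. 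The unweighted claim is comparatively routine once the linearization's $\Phi_L\le 2\Phi_T$ property is in hand.
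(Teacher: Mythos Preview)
Your proposal is correct and follows essentially the same route as the paper: apply the linearization bound $\Phi^W_L(\by')\le 2\Phi^W_T(\by')$ (Lemma~\ref{lemma_tree_line}) to the competitor labeling $\by'$, then control $\Phi^W_L(\by)-\Phi^W_L(\by')$ via the at most two $L$-edges incident to each flipped node, and finally charge the resulting $2\delta(\by,\by')$ edge weights in $L$ back to $T$. The one place you flag as delicate---showing that the sum of the $2\delta$ heaviest $L$-edge weights is at most $2\zeta_T(\delta)$---is exactly where the paper invokes the structural Lemma~\ref{line_edges} (the injective map $\mu_L:\scP_L\to E(T)$ with $|S|\le 2$ and matching weights), which makes your edge-charging argument rigorous.
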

%
%
In order to highlight the consequences of \wta's linearization step 
contained in Theorem~\ref{t:robust}, consider as a simple example an unweighted 
star graph $(T,\by)$ where all labels are $+1$ except for the central
node $c$ whose label is $-1$. We have $\Phi_T(\by)= n-1$, but
flipping the sign of $y_c$ we would obtain the star graph $(T,\by')$ with $\Phi_T(\by')=0$.
Using Theorem~\ref{t:robust} (item 2) we get $\Phi_L(\by) \leq 2$.
Hence, on this star graph \wta's linearization step
generates a line graph with a constant number of 
$\phi$-edges even if the input tree $T$ has no $\phi$-free edges. 
Because flipping the labels of a few nodes 
(in this case the label of $c$) we obtain a tree
with a much more regular labeling, the labels of those nodes 
can naturally be seen as corrupted by noise.


The following theorem quantifies to what extent the mistake bound
of \wta\ on trees can take advantage of the tolerance to label
perturbation contained in Theorem~\ref{t:robust}. 
Introducing shorthands for the right-hand side expressions in Theorem~\ref{t:robust},
\[
\wt{\Phi}^W_T(\by) = \min_{\by' \in \{-1,+1\}^n} 2\Bigl(\Phi^W_T(\by')+\zeta_T\bigl({\delta(\by,\by')}\bigr)\Bigr)
\]
and
\[
\wt{\Phi}_T(\by) = \min_{\by' \in \{-1,+1\}^n} 2\left(\Phi_T(\by')+\delta(\by,\by')\right)~,
\]
we have the following robust version of Theorem~\ref{t:ub-tree}.
\begin{theorem}\label{t:ub-tree-r}
If \wta\ is run on a weighted and labeled tree $(T,\by)$, then the total number
$m_T$ of mistakes satisfies
\[
    m_T
\bigoheq
    \wt{\Phi}_T(\by) \left(1 + \log\left(1+\frac{R^W_T(\neg E') \ \wt{\Phi}_T^W(\by)}{\wt{\Phi}_T(\by)}\right)
     \right) + \Phi_T(\by) + |E'|
\]
for all subsets $E'$ of $E \setminus E^{\phi}$.
\end{theorem}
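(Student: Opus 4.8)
\noindent\textbf{Proof plan for Theorem~\ref{t:ub-tree-r}.}
The plan is to rerun the argument behind Theorem~\ref{t:ub-tree} but to feed it the sharper cutsize estimates of Theorem~\ref{t:robust}. Since \wta\ makes exactly $m_L$ mistakes on the line graph $L$ it constructs from $T$, and $m_T=m_L$, I would start by applying Lemma~\ref{l:ub-L-to-T} to $(L,\by)$: for every subset $E''$ of $E_L\setminus E_L^{\phi}$,
\[
    m_L \bigoheq \Phi_L(\by)\left(1+\log\left(1+\frac{R^W_L(\neg E'')\,\Phi^W_L(\by)}{\Phi_L(\by)}\right)\right)+|E''|~.
\]
(I write $E''$ to distinguish it from the set $E'\subseteq E\setminus E^{\phi}$ of the statement, which lives in $T$.) The first move is then to replace $\Phi_L(\by)$ and $\Phi^W_L(\by)$ by $\wt{\Phi}_T(\by)$ and $\wt{\Phi}^W_T(\by)$, which is exactly what items~2 and~1 of Theorem~\ref{t:robust} supply, namely $\Phi_L(\by)\le\wt{\Phi}_T(\by)$ and $\Phi^W_L(\by)\le\wt{\Phi}^W_T(\by)$. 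Here I would record the elementary fact that, for fixed $c\ge 0$, the map $x\mapsto x\bigl(1+\log(1+c/x)\bigr)$ is nondecreasing in $x>0$, so the upper bound $\Phi_L(\by)\le\wt{\Phi}_T(\by)$ can be substituted even though $\Phi_L(\by)$ also sits in the denominator inside the logarithm; $\Phi^W_L(\by)$ occurs only in a numerator, so monotonicity there is immediate.

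It remains to bound $R^W_L(\neg E'')$ and $|E''|$ in terms of $R^W_T(\neg E')$, $|E'|$ and $\Phi_T(\by)$, and this is the combinatorial heart of the matter, essentially already established inside the proof of Theorem~\ref{t:ub-tree}. The key observation is that every edge of the intermediate line $L'$ is a copy, with the same weight, of a unique edge of $T$, and each edge of $T$ has at most two such copies in $L'$ (one from the forward visit, one from backtracking); moreover, since each node elimination keeps the \emph{heavier} of the two merged edges, every edge of $L$ inherits its resistor from exactly one surviving $L'$-copy. This defines an at most two-to-one map $\psi:E_L\to E$ with $1/w_e=1/w_{\psi(e)}$. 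A $\phi$-free edge $e$ of $L$ may however satisfy $\psi(e)\in E^{\phi}$, because a $\phi$-edge of $T$ can be ``masked'' when the depth-first visit backtracks over it inside an otherwise $\phi$-free stretch; as $\psi$ is at most two-to-one, there are at most $2\Phi_T(\by)$ such masked $\phi$-free edges. I would then take $E''$ to be the union of these masked edges with the $\phi$-free edges of $L$ whose $\psi$-image lies in $E'$; this gives $|E''|\le 2\Phi_T(\by)+2|E'|$, while every $\phi$-free edge of $L$ outside $E''$ has $\psi$-image in $E\setminus(E^{\phi}\cup E')$, so that $R^W_L(\neg E'')\le 2\,R^W_T(\neg E')$.

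Plugging the three estimates into the displayed inequality yields
\[
    m_T \bigoheq \wt{\Phi}_T(\by)\left(1+\log\left(1+\frac{2\,R^W_T(\neg E')\,\wt{\Phi}^W_T(\by)}{\wt{\Phi}_T(\by)}\right)\right)+2\Phi_T(\by)+2|E'|~,
\]
and absorbing the absolute constants into $\bigoheq$ (using $1+\log(1+2z)=\scO\bigl(1+\log(1+z)\bigr)$ and $\Phi_T(\by)\le\Phi_T(\by)\bigl(1+\log(\cdots)\bigr)$) gives the claimed bound for every $E'\subseteq E\setminus E^{\phi}$. The main obstacle I anticipate is precisely the bookkeeping of the previous paragraph: one has to verify that the linearization creates no more than $\scO(\Phi_T(\by))$ spurious $\phi$-free edges and that the resistor-preserving correspondence $\psi$ remains at most two-to-one through arbitrarily long chains of node eliminations. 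This is what makes the extra additive $\Phi_T(\by)$ — rather than something larger — the right price for the robustness gain, and it is the step where the choice $w_{j',j''}=\min\{w_{j',j},w_{j,j''}\}$ in the linearization is genuinely used.
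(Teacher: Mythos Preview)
Your proposal is correct and follows the same route as the paper: the paper's one-line proof simply invokes Lemma~\ref{l:ub_r} (which already packages, via Lemma~\ref{line_edges}, the $E''$-bookkeeping you spell out in your second paragraph) and then substitutes $\Phi_L(\by)\le\wt{\Phi}_T(\by)$ and $\Phi_L^W(\by)\le\wt{\Phi}_T^W(\by)$ from Theorem~\ref{t:robust}. One slip worth fixing: the linearization keeps the \emph{lighter} (min-weight) edge, not the heavier one, so each $L$-edge inherits the \emph{larger} of the two candidate resistors---but this is still the resistor of a single $T$-edge, so your map $\psi$ and the remainder of the argument go through unchanged.
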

As a simple consequence, we have the following corollary.
\begin{corollary}\label{c:ub-tree-r-poly}
If \wta\ is run on a weighted and polynomially connected labeled tree $(T,\by)$, then the total number
$m_T$ of mistakes satisfies
\[
    m_T
\bigoheq
\wt{\Phi}_T(\by)  \log n~.
\]
\end{corollary}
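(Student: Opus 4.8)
The plan is to argue directly on the line graph $L$ that \wta\ actually predicts on, feeding the perturbation estimate of Theorem~\ref{t:robust} into Lemma~\ref{l:ub-L-to-T}, and then cashing in polynomial connectivity only to tame a single logarithm. Concretely, I would apply Lemma~\ref{l:ub-L-to-T} to $(L,\by)$ with $E'=\emptyset$, obtaining
\[
m_T \;=\; m_L \;\bigoheq\; \Phi_L(\by)\left(1 + \log\left(1 + \frac{R^W_L\,\Phi^W_L(\by)}{\Phi_L(\by)}\right)\right).
\]
By Theorem~\ref{t:robust}(2) the leading factor satisfies $\Phi_L(\by)\le\wt{\Phi}_T(\by)$, so everything reduces to showing that the logarithmic argument is $n^{\scO(1)}$; then the whole bound collapses to $\wt{\Phi}_T(\by)\log n$, as claimed. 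The degenerate case $\Phi_L(\by)=0$ (all labels equal, so \wta\ errs at most once) is dispatched separately, and otherwise $\Phi_L(\by)\ge 1$, so that $R^W_L\,\Phi^W_L(\by)/\Phi_L(\by)\le R^W_L\,\Phi^W_L(\by)$ and it suffices to bound this product.

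For the product I would use only the crudest estimates. Since $L$ has at most $n-1$ edges, $R^W_L\le (n-1)/\wmin$ and $\Phi^W_L(\by)\le(n-1)\,\wmax$, where $\wmin$ and $\wmax$ denote the smallest and largest edge weights occurring in $L$; hence $R^W_L\,\Phi^W_L(\by)\le(n-1)^2\,\wmax/\wmin$. The key structural remark is that the linearization step of \wta\ builds every edge of $L$ with a weight that is either an original weight of some edge of $T$ or the minimum of several such weights, so the weight spread $\wmax/\wmin$ of $L$ can only be smaller than the ratio between the heaviest and the lightest edge weight of $T$ itself.

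Finally I would convert polynomial connectivity into a bound on that weight spread of $T$. Because $T$ is a tree, every edge $(i,j)$ is a bridge and therefore $r^W_{i,j}=1/w_{i,j}$; applying the polynomial-connectivity hypothesis to the pair consisting of the heaviest and the lightest edges of $T$ shows that their weight ratio, and hence $\wmax/\wmin$ for $L$, is polynomial in $n$. Thus $R^W_L\,\Phi^W_L(\by)=n^{\scO(1)}$, the logarithm is $\scO(\log n)$, and the corollary follows. I expect the one genuinely substantive point to be precisely this last translation --- from a hypothesis phrased in terms of effective resistances between arbitrary node pairs to a bound on individual edge weights --- which goes through cleanly for trees exactly because edges are bridges and because linearization can never manufacture a new weight extreme; every other estimate above is a deliberately lossy one-liner that is harmless only because it lives inside a logarithm.
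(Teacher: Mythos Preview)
Your proposal is correct and follows essentially the same route as the paper: apply Lemma~\ref{l:ub-L-to-T} with $E'=\emptyset$, bound $\Phi_L(\by)$ via Theorem~\ref{t:robust}, and use polynomial connectivity of the tree (edges are bridges, so $r^W_{i,j}=1/w_{i,j}$) to conclude that $R^W_L\,\Phi^W_L(\by)$ is polynomial in $n$. You spell out more carefully than the paper why the weight spread of $L$ cannot exceed that of $T$ and you handle the degenerate $\Phi_L(\by)=0$ case explicitly, but the skeleton of the argument is identical.
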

Theorem\ \ref{t:ub-tree-r} combines the result of Theorem~\ref{t:ub-tree}
with the robustness to label perturbation of \wta's tree linearization procedure.
Comparing the two theorems, we see that the 
main advantage of the tree linearization lies in the mistake bound dependence on the logarithmic
factors occurring in the formulas: Theorem~\ref{t:ub-tree-r} 
shows that, when $\wt{\Phi}_T(\by) \ll \Phi_T(\by)$, then the performance of \wta\ can be just 
linear in $\Phi_T(\by)$. Theorem~\ref{t:ub-tree}
shows instead that the dependence on 
$\Phi_T(\by)$ is in general superlinear even in cases when flipping few labels of $\by$ makes the 
cutsize $\Phi_T(\by)$ decrease in a substantial way.
In many cases, the tolerance to noise allows us to achieve even better results: 
Corollary~\ref{c:ub-tree-r-poly} states that, if $T$ is polynomially connected and 
there exists a labeling $\by'$ with small $\delta(\by,\by')$ such that 
$\Phi_T(\by')$ is much smaller than $\Phi_T(\by)$, then the performance of \wta\
is about the same as if the algorithm were run on $(T,\by')$. 
In fact, from Lemma~\ref{l:ub-L-to-T}
we know that when $T$ is polynomially connected the mistake bound of \wta\ mainly depends
on the number of $\phi$-edges in $(L,\by)$, which
can often be much smaller than those in $(T,\by)$. As a simple example,
let $T$ be an unweighted star graph with a labeling $\by$ and
$z$ be the difference between the number of $+1$ and the number of $-1$ in $\by$.
Then the mistake bound of $\wta$ is linear in $z \log n$ irrespective
of $\Phi_T(\by)$ and, specifically, irrespective of the label assigned to the central
node of the star, which can greatly affect the actual value of $\Phi_T(\by)$.

We are now ready to extend the above results to the case
when \wta\ operates on a general weighted graph $(G, \by)$ via
a uniformly generated random spanning tree $T$.
%
%
As before, we need some shorthand notation. Define $\Phi_G^*(\by)$ as
\[
\Phi_G^*(\by)  =   \min_{\by' \in \{-1, +1\}^n} \Bigl(\bE\bigl[\Phi_T(\by')\bigr]+\delta(\by,\by')\Bigr)~,
\]
where the expectation is over the random draw of a spanning tree $T$ of $G$.
The following are the robust versions of Theorem~\ref{th:graph} and Corollary~\ref{cor:upper}.
\begin{theorem}
\label{th:graph_r}
If \wta\ is run on a random spanning tree $T$ of a labeled weighted graph
$(G,\by)$, then the total number $m_G$ of mistakes satisfies
\[
    \bE\,m_G
\bigoheq
    \Phi_G^*(\by)\Bigl(1 + \log \left(1 + \wmax^{\phi} \bE\bigl[R^W_T\bigr] \Bigr) \right)+\bE\bigl[\Phi_T(\by)\bigr]~,
\]
where ${\dt \wmax^{\phi} = \max_{(i,j) \in E^{\phi}} w_{i,j} }$.
\end{theorem}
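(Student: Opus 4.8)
The plan is to combine the robust tree bound (Theorem~\ref{t:ub-tree-r}) with the averaging argument already used to pass from Theorem~\ref{t:ub-tree} to Theorem~\ref{th:graph}. First I would fix an arbitrary reference labeling $\by'\in\{-1,+1\}^n$ and run \wta\ on a uniformly drawn random spanning tree $T$ of $G$. Applying Theorem~\ref{t:ub-tree-r} to $(T,\by)$ with the empty set $E'=\emptyset$ gives, for each realization of $T$,
\[
m_T \bigoheq \wt{\Phi}_T(\by)\left(1+\log\left(1+R^W_T\,\wt{\Phi}^W_T(\by)\right)\right)+\Phi_T(\by)~.
\]
Since $\wt{\Phi}_T(\by)\le 2\bigl(\Phi_T(\by')+\delta(\by,\by')\bigr)$ and (in the unweighted-by-$\zeta$ bookkeeping) $\wt{\Phi}^W_T(\by)\le 2\bigl(\Phi^W_T(\by')+\zeta_T(\delta(\by,\by'))\bigr)$, I would bound the logarithm's argument using $\Phi^W_T(\by')\le\Phi^W_G(\by')$ and $\zeta_T(\delta)\le$ (sum of the $\delta$ heaviest edges of $G$) $\le \delta\,\wmax^{\phi}$ or more simply fold the perturbation weight into $\wmax^{\phi}$ times a resistance term; the cleanest route is to observe $R^W_T\,\wt{\Phi}^W_T(\by) = \scO\bigl(\wmax^{\phi} R^W_T\bigr)$ after the same rescaling argument used in the proof of Theorem~\ref{th:graph}, so the log factor becomes $1+\log(1+\wmax^{\phi}R^W_T)$ exactly as there.

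Next I would take expectations over the random spanning tree $T$. The term $\bE[\Phi_T(\by)]$ comes through untouched, matching the additive $\bE[\Phi_T(\by)]$ in the statement. For the leading term, I would use concavity of $x\mapsto \log(1+x)$ together with Jensen's inequality to pull the expectation inside the logarithm, obtaining a factor $1+\log(1+\wmax^{\phi}\bE[R^W_T])$, and then bound $\bE[\wt{\Phi}_T(\by)]\le 2\bigl(\bE[\Phi_T(\by')]+\delta(\by,\by')\bigr)$ by linearity of expectation (the perturbation count $\delta(\by,\by')$ is non-random). This yields
\[
\bE\,m_G \bigoheq \bigl(\bE[\Phi_T(\by')]+\delta(\by,\by')\bigr)\Bigl(1+\log\bigl(1+\wmax^{\phi}\bE[R^W_T]\bigr)\Bigr)+\bE[\Phi_T(\by)]~.
\]
Finally, since $\by'$ was arbitrary, I would take the minimum over $\by'$ inside the first parenthesized factor, which is exactly $\Phi^*_G(\by)$ by definition, giving the claimed bound.

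The main obstacle I anticipate is handling the interaction between the \emph{weighted} perturbation term $\zeta_T\bigl(\delta(\by,\by')\bigr)$ and the random tree: $\zeta_T(K)$ is the sum of the $K$ heaviest edges \emph{of the drawn tree}, so $\bE[\zeta_T(\delta)]$ is not simply $\delta$ times an average weight and could be dominated by heavy edges that appear with low probability. The fix is that these heavy edges enter only inside the logarithm, multiplied by $R^W_T$; bounding $\zeta_T(\delta)\le\delta\,\wmax^{\phi}$ (each tree edge has weight at most $\wmax$, and the $\phi$-edge maximum $\wmax^{\phi}$ suffices after the standard weight rescaling that normalizes the $\phi$-free edges) collapses the dependence to the same $\wmax^{\phi}\bE[R^W_T]$ appearing in Theorem~\ref{th:graph}, so no new quantity is needed. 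A secondary point to check carefully is that Jensen is applied in the correct direction — we need an upper bound on $\bE[\log(\cdots)]$, which concavity supplies — and that the $\scO(\cdot)$ constants from Theorem~\ref{t:ub-tree-r} survive the expectation, which they do since they are absolute.
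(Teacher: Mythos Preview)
Your plan has a genuine gap at the step where you try to simplify the logarithm. Starting from Theorem~\ref{t:ub-tree-r} means you have already replaced $\Phi^W_L(\by)$ inside the log by the robust quantity
\[
\wt{\Phi}^W_T(\by)=\min_{\by'}2\Bigl(\Phi^W_T(\by')+\zeta_T\bigl(\delta(\by,\by')\bigr)\Bigr),
\]
and you then need $\wt{\Phi}^W_T(\by)/\wt{\Phi}_T(\by)\le O(\wmax^{\phi})$ to recover the factor $1+\log(1+\wmax^{\phi}R^W_T)$. That inequality is false in general: $\zeta_T(K)$ sums the $K$ heaviest edges of $T$, and these can be $\phi$-\emph{free} edges whose weights are arbitrarily larger than $\wmax^{\phi}$. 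There is no ``standard weight rescaling'' in the proof of Theorem~\ref{th:graph} (and uniform rescaling cannot reduce $\phi$-free weights below $\wmax^{\phi}$ anyway), so the collapse you describe does not go through.

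The paper avoids this by controlling the weighted cutsize \emph{before} any robustness bound is applied. Lemma~\ref{l:graph_lemma} is the bound
\[
\bE\,m_G \bigoheq \bE[\Phi_L(\by)]\Bigl(1+\log\bigl(1+\wmax^{\phi}\bE[R^W_T]\bigr)\Bigr)+\bE[\Phi_T(\by)],
\]
whose proof uses the line-level inequality $\Phi^W_L(\by)\le\Phi_L(\by)\,\wmax^{\phi}$ (each $\phi$-edge of $L$ inherits its weight from a $\phi$-edge of $T\subseteq G$, hence is at most $\wmax^{\phi}$). Only \emph{after} the logarithm is already in the desired form does the paper invoke the robustness argument, and then only the unweighted part~2 of Theorem~\ref{t:robust}, via Lemma~\ref{exp_phi_L}, to get $\bE[\Phi_L(\by)]\bigoheq\Phi^*_G(\by)$. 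In short: apply $\Phi^W_L\le\Phi_L\wmax^{\phi}$ first, then robustify the unweighted $\Phi_L$ only; if you robustify first (as Theorem~\ref{t:ub-tree-r} does) you are stuck with $\zeta_T$ inside the log and cannot eliminate it.
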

\begin{corollary}
\label{c:graph_pol_r}
If \wta\ is run on a random spanning tree $T$ of a labeled weighted graph
$(G,\by)$ and the ratio of the weights of each pair of edges of $G$ is polynomial in $n$, 
then the total number $m_G$ of mistakes satisfies
\[
    \bE\,m_G
\bigoheq
    \Phi_G(\by)\log n~.
\]
\end{corollary}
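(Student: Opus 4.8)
The plan is to derive Corollary~\ref{c:graph_pol_r} from Theorem~\ref{th:graph_r} exactly as Corollary~\ref{cor:upper} was derived from Theorem~\ref{th:graph}, with the only substantive addition being the control of the extra additive term $\bE[\Phi_T(\by)]$ and of the quantity $\Phi_G^*(\by)$. First I would recall the two scale-invariant identities already noted in the paper: $\bE[\Phi_T(\by)] = \sum_{(i,j)\in E^\phi} w_{i,j} r^W_{i,j}$ and $\bE[R^W_T] = \sum_{(i,j)\in E\setminus E^\phi} r^W_{i,j}$, together with $\sum_{(i,j)\in E} w_{i,j} r^W_{i,j} = n-1$. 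The last identity immediately gives $\bE[\Phi_T(\by)] \le n-1$, so the additive term $\bE[\Phi_T(\by)]$ in Theorem~\ref{th:graph_r} is at most $n-1 = \scO\bigl(\Phi_G(\by)\log n\bigr)$ whenever $\Phi_G(\by)\ge 1$ (and if $\Phi_G(\by)=0$ there is nothing to prove, since $\by$ is constant and \wta\ makes at most one mistake); hence this term is absorbed into the claimed bound and may be ignored from here on.

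Next I would bound the logarithmic factor $1 + \log\bigl(1 + \wmax^\phi \bE[R^W_T]\bigr)$. Write $\wmax^\phi = \max_{(i,j)\in E^\phi} w_{i,j}$ and $\wmin = \min_{(i,j)\in E} w_{i,j}$. Since the paper's hypothesis is that the ratio of the weights of any two edges of $G$ is polynomial in $n$, we have $\wmax^\phi / \wmin \le n^c$ for some constant $c$. The polynomial-weight-ratio hypothesis also implies polynomial connectivity: the effective resistance between any pair of nodes lies between $(\text{shortest path length}) $ bounds that are, up to the $\le n$ path length, pinned by $1/\wmax$ and $1/\wmin$, so any ratio of effective resistances is polynomial in $n$. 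Therefore every $\phi$-free resistance satisfies $r^W_{i,j} \le \mathrm{poly}(n)\cdot r^W_{i',j'}$ for any other pair, and in particular $\bE[R^W_T] = \sum_{(i,j)\in E\setminus E^\phi} r^W_{i,j} \le |E|\cdot \max_{(i,j)} r^W_{i,j} \le \mathrm{poly}(n)$ after normalizing the weight scale (say $\wmin = 1$), which makes $\wmax^\phi$ itself polynomial in $n$. Combining, $\wmax^\phi \bE[R^W_T] = \mathrm{poly}(n)$, so $1 + \log\bigl(1 + \wmax^\phi\bE[R^W_T]\bigr) = \scO(\log n)$. This is the step most in need of care: one must make sure the two factors $\wmax^\phi$ and $\bE[R^W_T]$ are being estimated under the same (arbitrary but fixed) weight normalization, and that the polynomial-ratio hypothesis is used to control effective resistances of \emph{nonadjacent} pairs too — which is exactly why the intermediate notion of polynomial connectivity is invoked rather than working directly with the weight ratios.

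Finally I would bound $\Phi_G^*(\by)$. By definition $\Phi_G^*(\by) = \min_{\by'} \bigl(\bE[\Phi_T(\by')] + \delta(\by,\by')\bigr) \le \bE[\Phi_T(\by)]$, taking $\by' = \by$ so that $\delta(\by,\by')=0$. And $\bE[\Phi_T(\by)] = \sum_{(i,j)\in E^\phi} w_{i,j} r^W_{i,j} \le \Phi_G(\by)\cdot \max_{(i,j)\in E} w_{i,j} r^W_{i,j}$. The per-edge product $p_{i,j} = w_{i,j} r^W_{i,j}$ is a probability, hence at most $1$, so $\bE[\Phi_T(\by)] \le \Phi_G(\by)$ and therefore $\Phi_G^*(\by) \le \Phi_G(\by)$. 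Substituting into Theorem~\ref{th:graph_r} gives $\bE\,m_G \bigoheq \Phi_G(\by)\cdot \scO(\log n) + \scO(\Phi_G(\by)\log n) = \scO\bigl(\Phi_G(\by)\log n\bigr)$, which is the claim. The proof is thus a short chain of substitutions; the only place where real work happens is turning the polynomial-weight-ratio hypothesis into a $\mathrm{poly}(n)$ bound on $\wmax^\phi\bE[R^W_T]$, and I expect that to be the main (mild) obstacle.
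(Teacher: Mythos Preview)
Your overall route—substituting into Theorem~\ref{th:graph_r}—is sound and is essentially how the paper derives Theorem~\ref{th:graph_r} itself, but you have one genuine error. You claim that the additive term satisfies $\bE[\Phi_T(\by)] \le n-1 = \scO(\Phi_G(\by)\log n)$ whenever $\Phi_G(\by)\ge 1$. This is false: if $\Phi_G(\by)$ is, say, a constant while $n$ grows, then $n-1$ is certainly not $\scO(\log n)$. The right bound is the one you yourself establish later, namely $\bE[\Phi_T(\by)] = \sum_{(i,j)\in E^\phi} w_{i,j}r^W_{i,j} \le \Phi_G(\by)$ (each summand is a probability $\le 1$). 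Use that directly for the additive term and the gap disappears.

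Your treatment of the logarithmic factor is correct but more involved than necessary. Since the sum defining $\bE[R^W_T]$ runs only over \emph{edges} $(i,j)\in E$, you can use $r^W_{i,j}\le 1/w_{i,j}\le 1/\wmin$ for each term, giving $\wmax^{\phi}\,\bE[R^W_T]\le |E|\,\wmax/\wmin \le n^2\cdot\mathrm{poly}(n)$ directly from the edge-weight-ratio hypothesis; there is no need to pass through effective resistances of nonadjacent pairs.

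For comparison, the paper's proof takes a slightly different path: it first observes that the polynomial edge-weight ratio in $G$ forces any spanning tree $T$ to be polynomially connected, then invokes the tree-level result Corollary~\ref{c:ub-tree-r-poly} (whose proof gives $m_T \bigoheq \Phi_L(\by)\log n$), and finally takes expectations and applies Lemma~\ref{exp_phi_L} to bound $\bE[\Phi_L(\by)]$ by $\Phi_G^*(\by)$. Your route via Theorem~\ref{th:graph_r} is just as legitimate and arguably more in keeping with the section's logical flow; the paper's route avoids re-deriving the $\log n$ estimate by reusing Corollary~\ref{c:ub-tree-r-poly} wholesale.
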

The relationship between Theorem~\ref{th:graph_r} and Theorem~\ref{th:graph}
is similar to the one between Theorem~\ref{t:ub-tree-r} and Theorem~\ref{t:ub-tree}.
When there exists a labeling 
$\by'$ such that $\delta(\by,\by')$ is small and $\bE\bigl[\Phi_T(\by')\bigr] \ll \bE\bigl[\Phi_T(\by)\bigr]$, 
then Theorem~\ref{th:graph_r} allows a linear dependence on $\bE\bigl[\Phi_T(\by)\bigr]$.
Finally, Corollary~\ref{c:graph_pol_r} quantifies the advantages of \wta's noise tolerance under a similar (but stricter) assumption as the one contained in Corollary~\ref{cor:upper}.

\section{Implementation}\label{s:compl}
As explained in Section~\ref{s:alg}, $\wta$ runs in two phases: (i) a random spanning tree is drawn; (ii) the tree is linearized and labels are sequentially predicted. As discussed in Subsection~\ref{ss:not}, Wilson's algorithm can draw a random spanning tree of ``most'' unweighted graphs in expected time $\mathcal{O}(n)$. The analysis of running times on weighted graphs is significantly more complex, and outside the scope of this paper. A naive implementation of \wta's
second phase runs in time $\mathcal{O}(n \log n)$ and requires linear memory space when operating on a tree with $n$ nodes. We now describe how to implement the second phase to run in time $\mathcal{O}(n)$, i.e., in {\em constant} amortized time per prediction step.

Once the given tree $T$ is linearized into an $n$-node line $L$, 
we initially traverse $L$ from left to right. Call $j_0$ the left-most terminal node of $L$.
During this traversal, the resistance distance  $d(j_0,i)$ is incrementally computed for each node 
$i$ in $L$. This makes it possible to calculate $d(i,j)$ in constant time for any pair of nodes, 
since $d(i,j) = |d(j_0,i)-d(j_0,j)|$ for all $i,j \in L$.
On top of $L$, a complete binary tree $T'$ with $2^{\lceil \log_2 n \rceil}$ leaves
is constructed.\footnote
{
For simplicity, this description assumes $n$ is a power of $2$. 
If this is not the case, we could add dummy nodes to $L$
before building $T'$. 
}
The $k$-th leftmost leaf (in the usual tree representation) 
of $T'$ is the $k$-th node in $L$ (numbering the nodes of $L$ from left to right).
The algorithm maintains this data-structure in such a way that at time $t$: 
(i) the subsequence of leaves whose labels are revealed at time $t$ are connected through 
a (bidirectional) list $B$, and
(ii) all the ancestors in $T'$ of the leaves of $B$ are marked.
See Figure~\ref{f:fast-impl}.

\begin{figure}
  \centering
%
\figscale{\imgdir/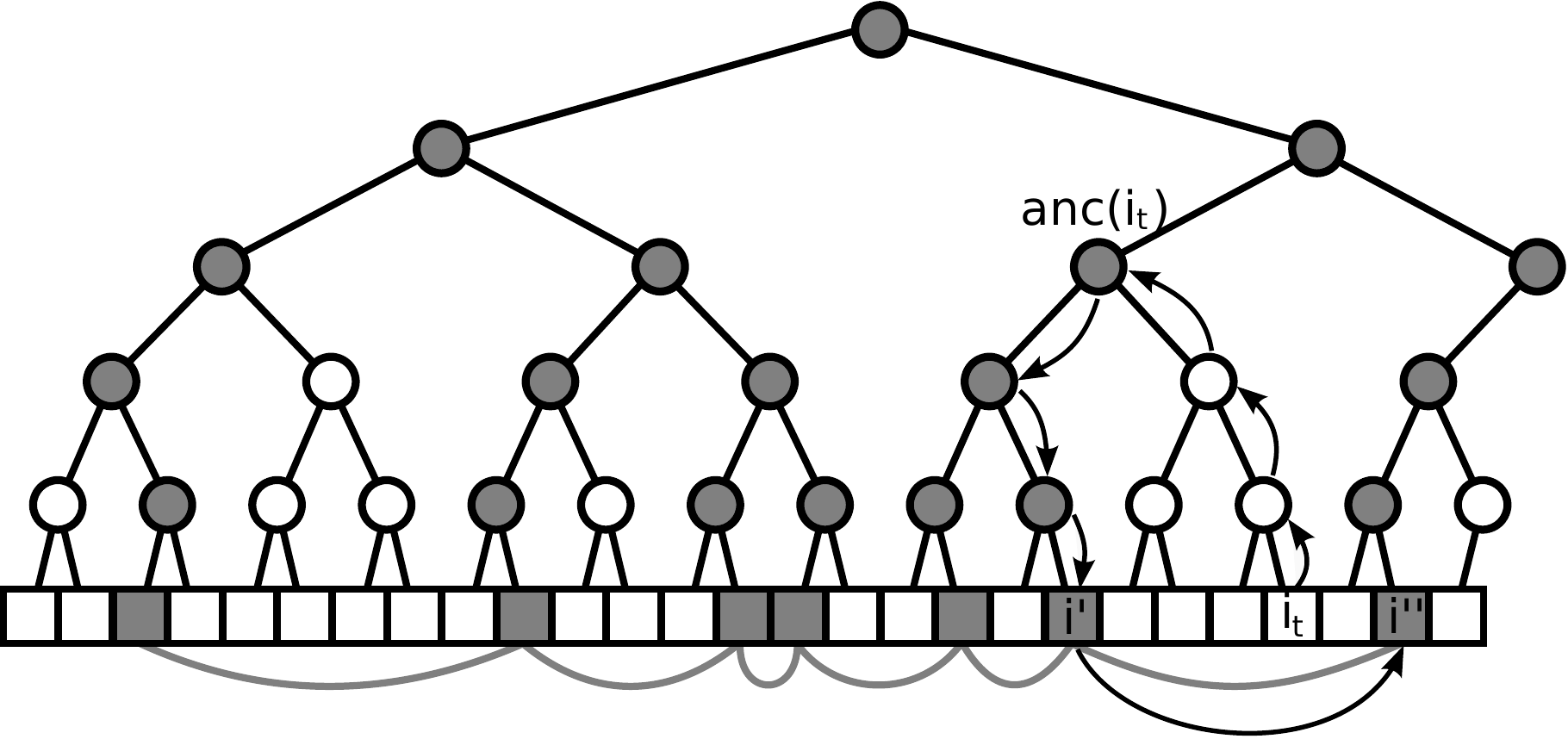}{0.60}
%
\caption{\label{f:fast-impl}
Constant amortized time implementation of $\wta$. The line $L$ has $n = 27$ nodes
(the adjacent squares at the bottom).
Shaded squares are the revealed nodes, connected through a dark grey doubly-linked list $B$.
The depicted tree $T'$ has both unmarked (white) and marked (shaded) nodes. The arrows indicate the
traversal operations performed by $\wta$ when predicting the label of node $i_t$: The upward traversal
stops as soon as a marked ancestor $\anc(i_t)$ is found, and then a downward traversal begins. Note
that $\wta$ first descends to the left, and then keeps going right all the way down.
Once $i'$ is determined, a single step within $B$ suffices to determine $i''$.
}
\end{figure}
When $\wta$ is required to predict the label $y_{i_t}$, the algorithm looks for the 
two closest revealed leaves $i'$ and $i''$ oppositely located in $L$ with respect to $i_t$. 
The above data structure supports this operation as follows.
$\wta$ starts from $i_t$ and goes upwards in $T'$ until the first marked ancestor $\anc(i_t)$ of 
$i_t$ 
is reached. During this upward traversal, the algorithm 
marks each internal node of $T'$ on the path connecting $i_t$ to $\anc(i_t)$. Then, $\wta$ 
starts from $\anc(i_t)$ and goes downwards in order to find the leaf $i' \in B$ closest to $i_t$. 
Note how the algorithm uses node marks for finding its way down: For instance,
in Figure~\ref{f:fast-impl} the algorithm goes left since $\anc(i_t)$ was reached from below through
the right child node, and then keeps right all the way down to $i'$.
Node $i''$ (if present) is then identified via the links in $B$. The two distances
$d(i_t,i')$ and $d(i_t,i'')$ are compared, and the closest node to $i_t$ within $B$ is then determined.
Finally, $\wta$ updates the links of $B$ by inserting $i_t$ between $i'$ and $i''$.

In order to quantify the amortized time per trial, the key observation is that
each internal node $k$ of $T'$ gets visited only twice during {\em upward} traversals 
over the $n$ trials:
The first visit takes place when $k$ gets marked for the first time,
the second visit of $k$ occurs when a subsequent upward visit also marks the other
(unmarked) child of $k$. Once both of $k$'s children are marked, we are guaranteed that
no further upward visits to $k$ will be performed.
Since the preprocessing operations take $\mathcal{O}(n)$, this shows that 
the total running time over the $n$ trials is linear 
in $n$, as anticipated. 
Note, however, that the worst-case time per trial is $\mathcal{O}(\log n)$.
For instance, on the very first trial $T'$ has to be traversed all the way up and down.

This is the way we implemented \wta\ on the experiments described in the next section.

\section{Experiments}\label{s:exp}
We now present the results of an experimental comparison on a number of
real-world weighted graphs from different domains:
text categorization, optical character recognition, spam detection and bioinformatics.
Although our theoretical analysis is for the sequential prediction model, all experiments
are carried out using a more standard train-test scenario. This makes it easy to compare
\wta\ against popular non-sequential baselines, such as Label Propagation.

We compare our algorithm to the following other methods, intended as 
representatives of two different ways of coping with the graph prediction problem:
global vs.\ local prediction.

\paragraph{Perceptron with Laplacian kernel.}
Introduced by \cite{HP07} and here abbreviated as $\gpa$ (graph Perceptron algorithm).
This algorithm sequentially predicts the nodes of a weighted graph $G = (V,E)$ after
mapping $V$ via the linear kernel based on $L_G^{+} + \bone\,\bone^{\top}$, where 
$L_G$ is the laplacian matrix of $G$. Following \cite{HPR09}, we run $\gpa$ on a
spanning tree $T$ of the original graph. This is because a careful computation of the Laplacian
pseudoinverse of a $n$-node tree takes time $\Theta(n + m^2 + mD)$ where $m$ is the number of 
training examples plus the number of test examples (labels to predict), and $D$ is the tree 
diameter ---see the work of~\cite{HPR09} for a proof of this fact. However, in most of our 
experiments $m=n$, implying a running time of $\Theta(n^2)$ for $\gpa$. 

Note that $\gpa$ is a global approach, in that the graph topology affects, 
via the inverse Laplacian, the prediction on all nodes. 

\paragraph{Weighted Majority Vote.} Introduced here and abbreviated as $\omv$.
Since the common underlying assumption to graph prediction algorithms is that adjacent
nodes are labeled similarly, a very intuitive and fast algorithm 
for predicting the label of a node $i$ is via a weighted majority vote on 
the available labels of the adjacent nodes. More precisely, \omv\ predicts using the sign of
\[
    \sum_{j \,:\, (i,j) \in E} y_{j} w_{i,j}
\]
where $y_{j} = 0$ if node $j$ is not available in the training set.
The overall time and space requirements are both of order $\Theta(|E|)$, since we 
need to read (at least once) the weights of all edges. 
$\omv$ is also a local approach, in the sense that prediction at each node
is only affected by the labels of adjacent nodes. 

\paragraph{Label Propagation.} Introduced by~\cite{ZGL03} and here abbreviated as $\labprop$.
%
This is a batch transductive learning method based on solving a (possibly sparse) 
linear system of equations which requires $\Theta(mn)$ time on an $n$-node
graph with $m$ edges. This bad scalability prevented us from carrying out
comparative experiments on larger graphs of $10^6$ or more nodes.
Note that \omv\ can be viewed as a fast approximation of \labprop.

In our experiments, we combined $\wta$ and $\gpa$ with spanning trees generated
in different ways (note that \omv\ and \labprop\ do not use spanning trees).

\paragraph{Random Spanning Tree} ($\rst$). 
Each spanning tree is taken with probability proportional to the product
of its edge weights ---see, e.g., \cite[Chapter 4]{LP09}.
In addition, we also tested \wta\ combined with \rst\ generated by ignoring
the edge weights (which were then restored before running \wta).
This second approach gives a prediction algorithm whose total expected running time, including the generation of the spanning tree, is $\Theta(n)$ on most graphs. We abbreviate this spanning tree as $\nwrst$ (non-weighted $\rst$).

\paragraph{Depth-first spanning tree} ($\dfst$). This spanning tree is created via
the following randomized depth-first visit: A root is selected at random, then each newly visited 
node is chosen with probability proportional to the weights of the edges connecting the
current vertex with the adjacent nodes that have not been visited yet.
This spanning tree is faster to generate than \rst, and can be viewed as an approximate
version of \rst.

\paragraph{Minimum Spanning Tree} ($\mst$). The spanning tree minimizing the sum of
the resistors of all edges. This is the tree whose Laplacian best approximates
the Laplacian of $G$ according to the trace norm criterion ---see, e.g., the paper of~\cite{HPR09}.

\paragraph{Shortest Path Spanning Tree} ($\spst$). \cite{HPR09} use the shortest path
tree because it has a small diameter (at most twice the diameter of $G$). This allows them to
better control the theoretical performance of $\gpa$. We generated several shortest path spanning
trees by choosing the root node at random, and then took the one with minimum diameter.

In order to check whether the information carried by the edge weight has predictive
value for a nearest neighbor rule like \wta, we also performed a test by ignoring the edge
weights during both the generation of the spanning tree and the running of \wta's nearest neighbor
rule.
This is essentially the algorithm analyzed by~\cite{HLP09}, and we denote it by
$\nwwta$ (non-weighted \wta). We combined \nwwta\ with weighted and unweighted spanning trees.
So, for instance, \nwwta+\rst\ runs a 1-NN rule (\nwwta) that does not
take edge weights into account (i.e., pretending that all weights are unitary) on a random 
spanning tree generated according to the actual edge weights. \nwwta+\nwrst\ runs \nwwta\ on a random spanning tree that also disregars edge weights.

Finally, in order to make the classifications based on \rst's more robust with respect to the variance associated with the random generation of the spanning tree, we also tested committees of \rst's. For example, K*\wta+\rst\ denotes the classifier obtained by drawing $K$ \rst's, running \wta\ on each one of them, and then aggegating the predictions of the $K$ resulting classifiers via a majority vote. For our experiments we chose $K=7,11,17$.

We ran our experiments on five real-world datasets:

\paragraph{RCV1.} The first 10,000 documents\footnote
{
Available at \texttt{trec.nist.gov/data/reuters/reuters.html}.
}
(in chronological order) of
Reuters Corpus Volume~1, with \textsc{tf-idf} preprocessing and Euclidean normalization.

\paragraph{USPS.} The USPS dataset\footnote
{ 
Available at \texttt{www-i6.informatik.rwth-aachen.de/$\tilde\ $keysers/usps.html}.
}
with features normalized into $[0,2]$.

\paragraph{KROGAN.} This is a high-throughput protein-protein interaction network for budding yeast. 
It has been used by~\cite{Kro06} and~\cite{PSGGK07}. 

\paragraph{COMBINED.} A second dataset from the work of~\cite{PSGGK07}.
It is a combination of three datasets: \cite{Gav02}'s, \cite{Ito01}'s, and~\cite{Uet00}'s.

\paragraph{WEBSPAM.} A large dataset (110,900 nodes and 1,836,136 edges) 
of inter-host links created for the\footnote
{
The dataset is available at \texttt{barcelona.research.yahoo.net/webspam/datasets/}.
We do not compare our results to those obtained in the challenge since we are only 
exploiting the graph (weighted) topology here, disregarding content features.
} 
Web Spam Challenge 2008~\cite{YWS07}. This is a weighted graph with binary labels 
and a pre-defined train/test split: 3,897 training nodes and 1,993 test nodes
(the remaining ones being unlabeled).

We created graphs from RCV1 and USPS with as many nodes as the total
number of examples $(\bx_i,y_i)$ in the datasets. That is, 10,000 nodes for RCV1 and 7291+2007 = 9298 for USPS.
Following previous experimental settings~\cite{ZGL03,BMN04}, the graphs were constructed using $k$-NN based on the
standard Euclidean distance $\norm{\bx_i-\bx_j}$ between node $i$ and node $j$. 
The weight $w_{i,j}$ was set to $w_{i,j} = \exp\bigl(-\norm{\bx_i-\bx_j}^2\big/\sigma_{i,j}^2\bigr)$,
if $j$ is one of the $k$ nearest neighbors of $i$, and 0 otherwise. 
To set $\sigma_{i,j}^2$, we first computed the average square distance 
between $i$ and its $k$ nearest neighbors (call it $\sigma_i^2$), then we computed
$\sigma_j^2$ in the same way, and finally set $\sigma_{i,j}^2 = \bigl(\sigma_i^2 + \sigma_j^2\bigr)\big/2$.
We generated two graphs for each dataset by running $k$-NN with $k = 10$ (RCV1-10 and USPS-10)
and $k = 100$ (RCV1-100 and USPS-100). The labels were set using the four most frequent categories
in RCV1  and all 10 categories in USPS.

In KROGAN and COMBINED we only considered the biggest connected components of both datasets, obtaining
2,169 nodes and 6,102 edges for KROGAN, and 2,871 nodes and 6,407 edges for COMBINED.
In these graphs, each node belongs to one or more classes, each class representing a 
gene function. We selected the set of functional labels at depth one in the 
FunCat classification scheme of the MIPS database~\cite{Rue04}, resulting
in seventeen classes per dataset. 

In order to associate binary classification tasks with the six non-binary datasets/graphs
(RCV1-10, RCV1-100, USPS-10, USPS-100, KROGAN, COMBINED) we binarized the corresponding
multiclass problems via a standard one-vs-rest scheme.
We thus obtained:
four binary classification tasks for RCV1-10 and RCV1-100, 
ten binary tasks for USPS-10 and USPS-100, seventeen binary tasks for both KROGAN and COMBINED.
For a given a binary task and dataset, we tried different proportions of training set and test set sizes.
In particular, we used training sets of size 5\%, 10\%, 25\% and 50\%. For any given size, the training sets were randomly selected.

We report error rates and F-measures on the test set, after macro-averaging over the binary tasks. 
The results are contained in Tables \ref{t:rcv1-k10}--\ref{t:spamresults} 
(Appendix \ref{app:tables}) and in Figures \ref{f:charts}--\ref{f:charts2}. 
Specifically, Tables \ref{t:rcv1-k10}--\ref{t:combined} contain results for all combinations 
of algorithms and train/test split for the first six datasets (i.e., all but WEBSPAM).

The WEBSPAM dataset is very large, and requires us a lot of computational resources 
in order to run experiments on this graph. Moreover, $\gpa$ has always shown inferior accuracy 
performance than the corresponding version of \wta\ (i.e., the one using the same kind of spanning tree) 
on all other datasets. Hence we decided not to go on any further with the refined implementation
of $\gpa$ on trees we mentioned above.
In Table~\ref{t:spamresults} we only report test error results on the four algorithms
\wta, \omv, \labprop, and \wta\ with a committee of seven (nonweighted) random spanning trees.

In our experimental setup we tried to control the sources of variance in the first six datasets as follows:
\begin{enumerate}
\item We first generated ten random permutations of the node indices for each one of the six graphs/datasets;
\item on each permutation we generated the training/test splits;
\item we computed $\mst$ and $\spst$ for each graph and made (for $\wta$, $\gpa$, $\omv$, and $\labprop$)
one run per permutation on each of the 4+4+10+10+17+17 = 62 binary problems, 
averaging results over permutations and splits;     
\item for each graph, we generated ten random instances for each one of $\rst$, $\nwrst$, $\dfst$, 
and then operated as in step~2, with a further averaging over the randomness in the tree generation. 
\end{enumerate}
Figure \ref{f:charts} extracts from Tables \ref{t:rcv1-k10}--\ref{t:combined} the error levels
of the best spanning tree performers,
and compared them to \omv\ and \labprop. For comparison purposes, we also displayed the error levels 
achieved by \wta\ operating on a committee of seventeen random spanning trees (see below).
Figure \ref{f:charts2} (left) contains the error level on WEBSPAM reported in Table~\ref{t:spamresults}.
Finally, Figure \ref{f:charts2} (right) is meant to emphasize the error rate differences between
\rst\ and \nwrst\ run with \wta.
\begin{figure}
\begin{center}
\begin{tabular}{c c}
&\\
\includegraphics[width=70mm]{./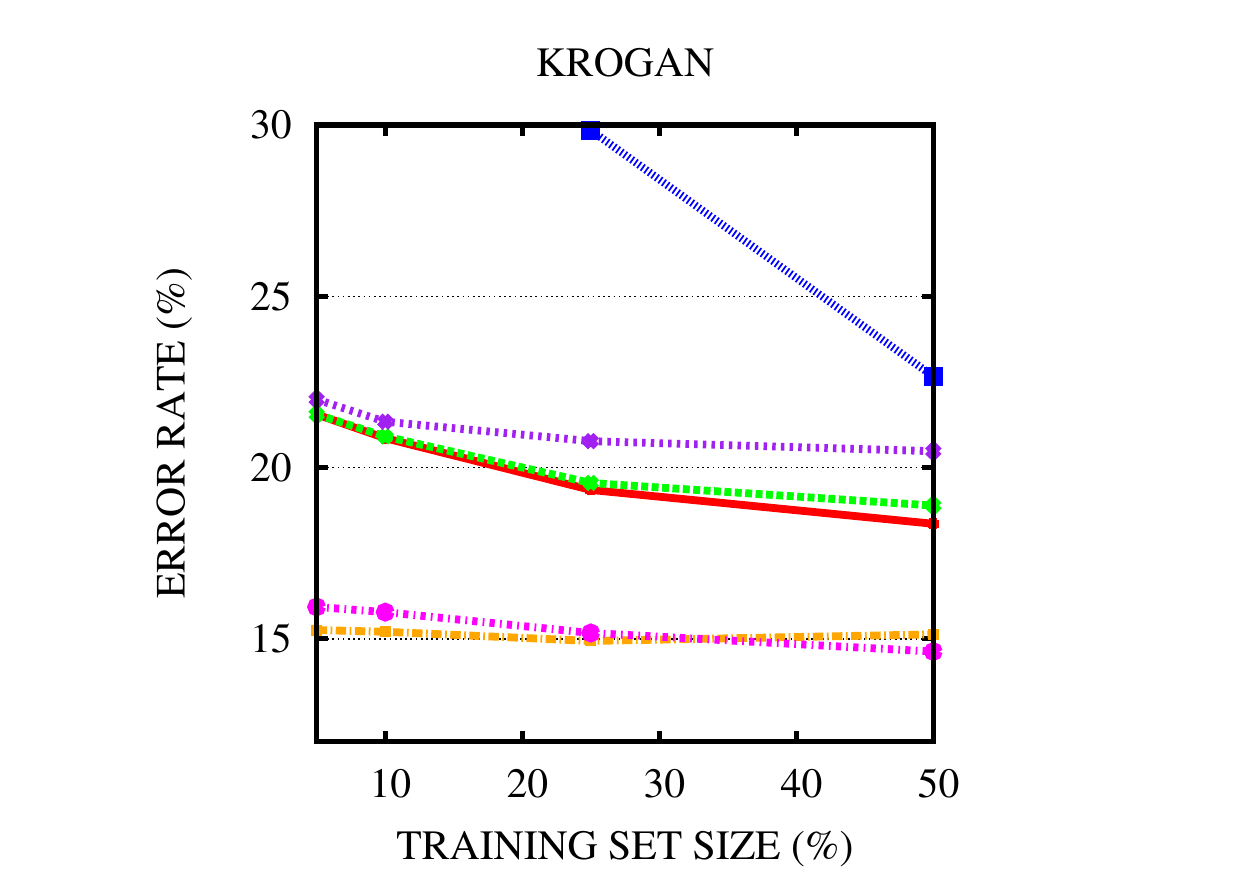}& \includegraphics[width=70mm]{./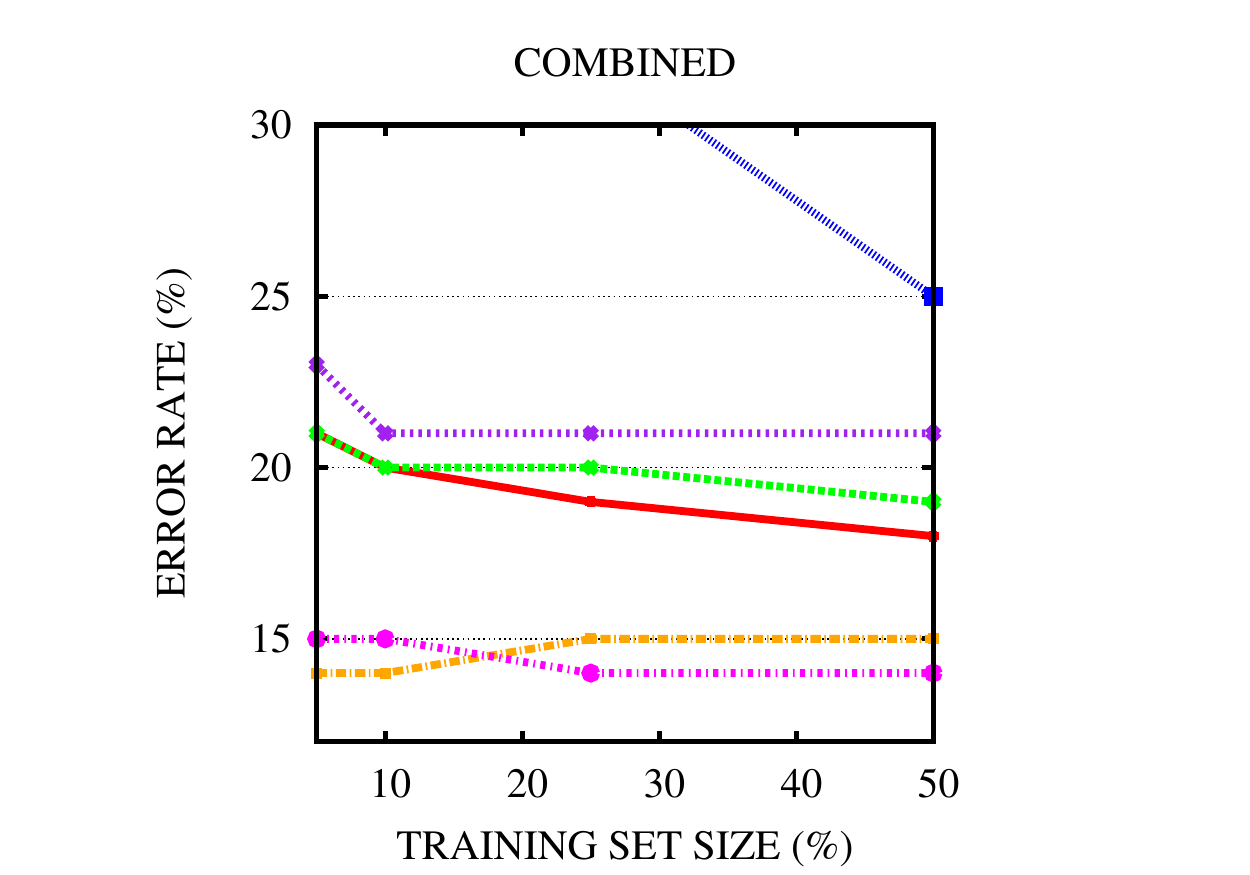}\\
\includegraphics[width=70mm]{./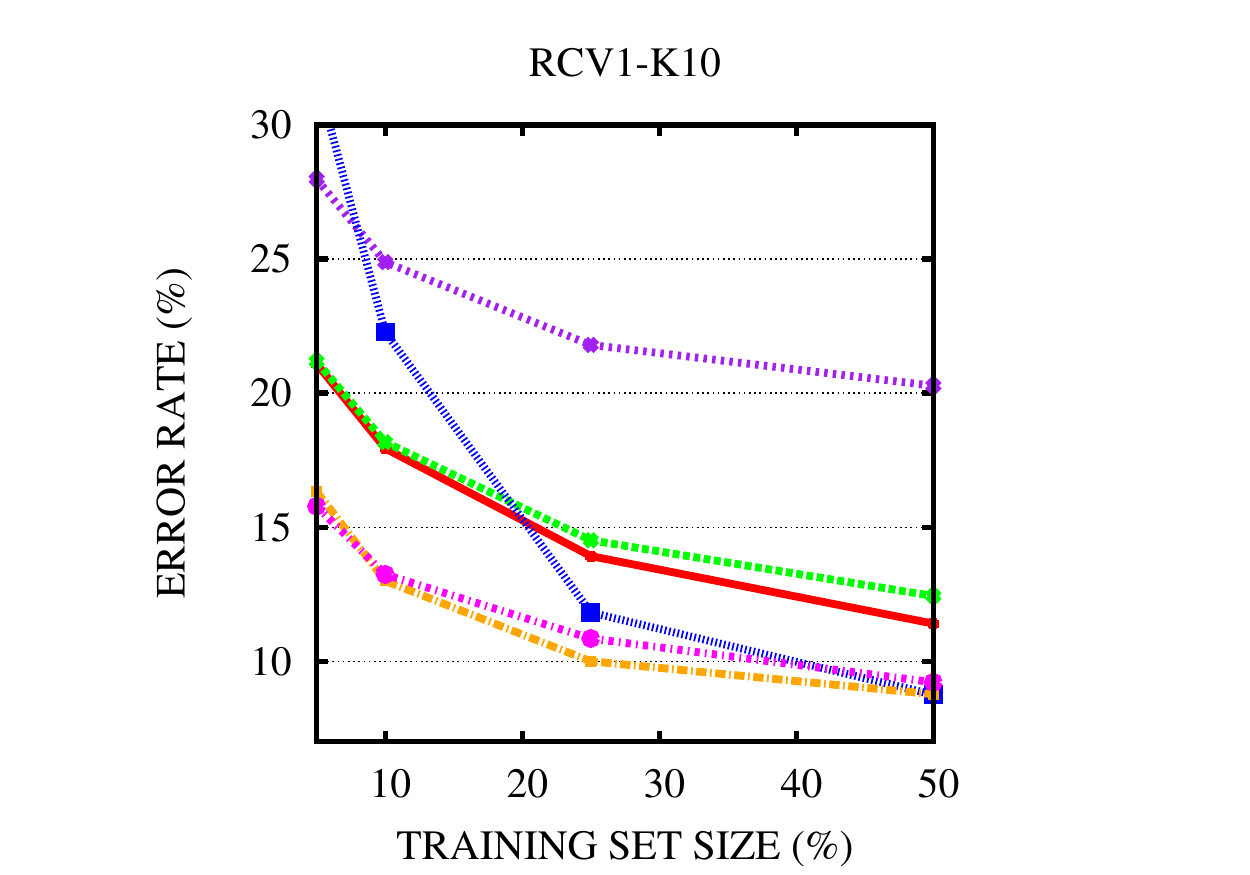}& \includegraphics[width=70mm]{./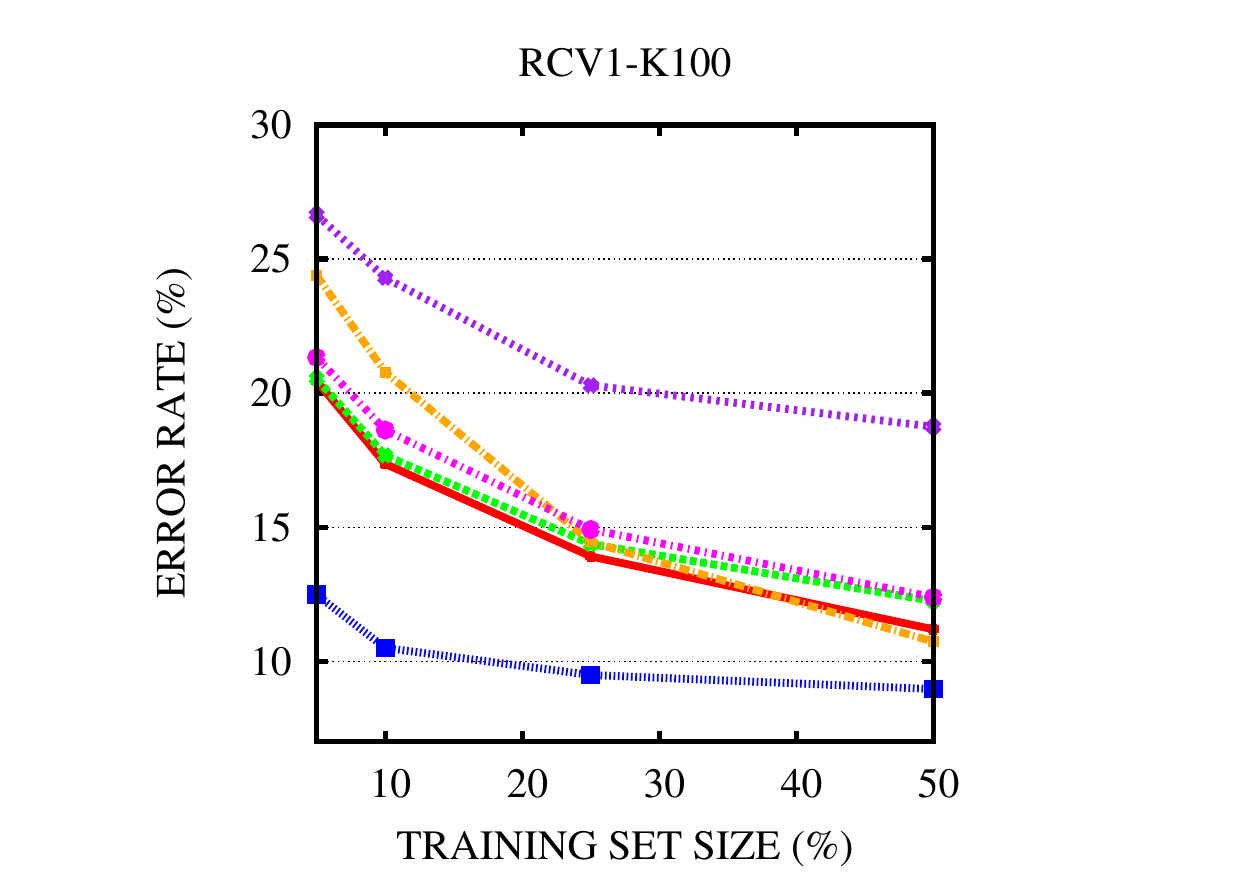}\\
\includegraphics[width=70mm]{./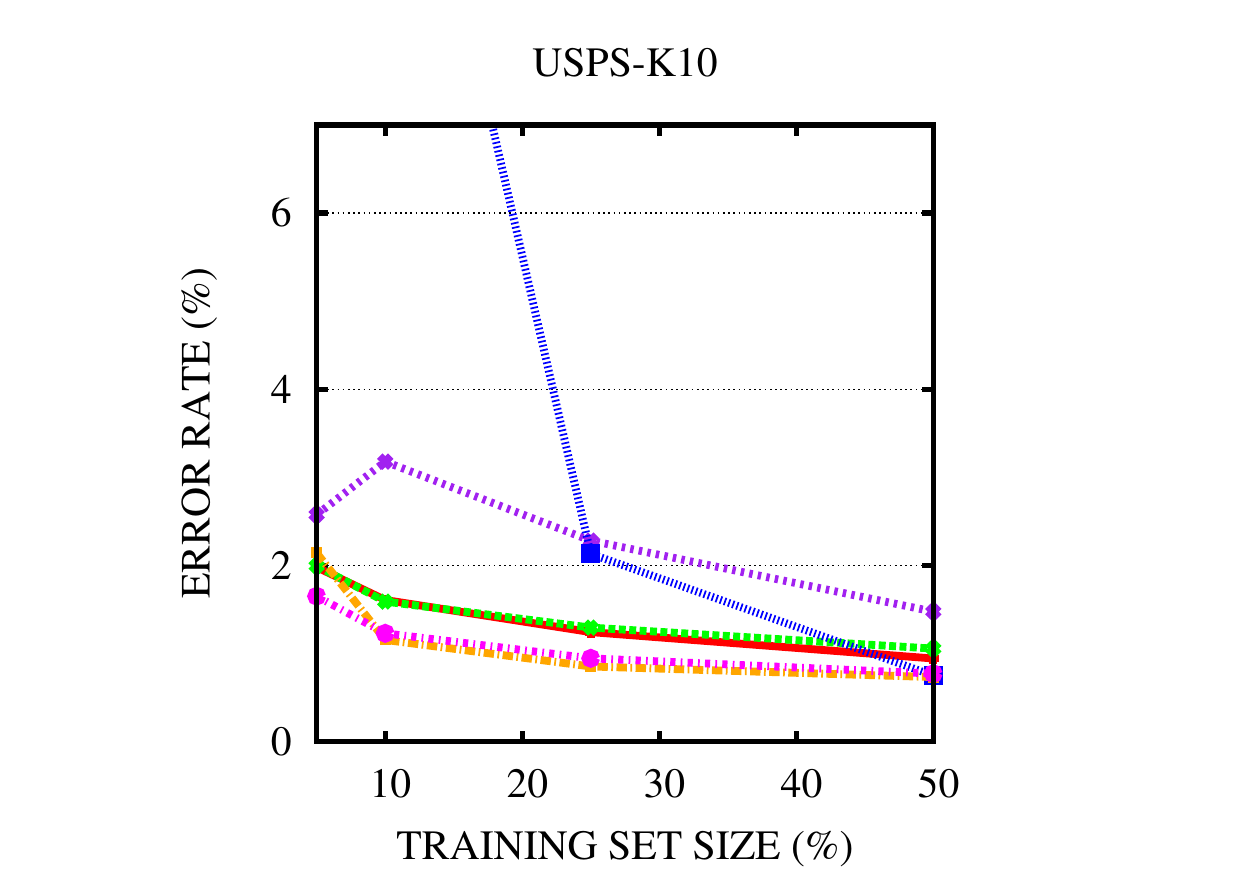}& \includegraphics[width=70mm]{./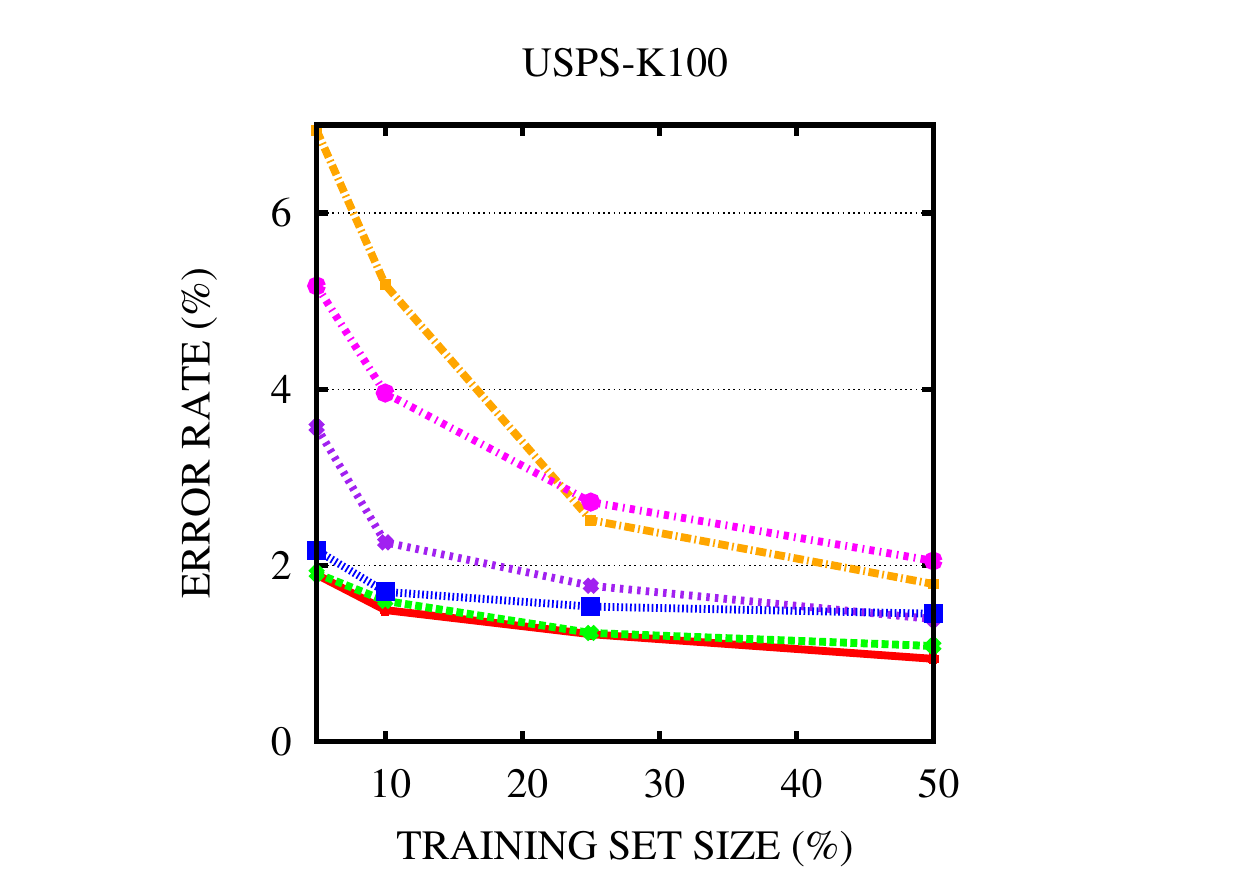}\\
\multicolumn{2}{c}{\includegraphics[width=70mm]{./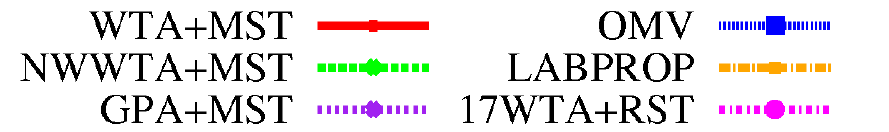}}
\end{tabular}
\end{center}
\caption{\label{f:charts}
Macroaveraged test error rates on the first six datasets as a function of the training set size.
The results are extracted from Tables~\ref{t:rcv1-k10}--\ref{t:combined} in Appendix B.
Only the best performing spanning tree (i.e., \mst) is shown for the algorithms that use spanning trees. These results are compared to \omv, \labprop, and 17*\wta+\rst.
}
\end{figure}
\begin{figure}
\begin{center}
\begin{tabular}{c c}
\includegraphics[width=70mm]{./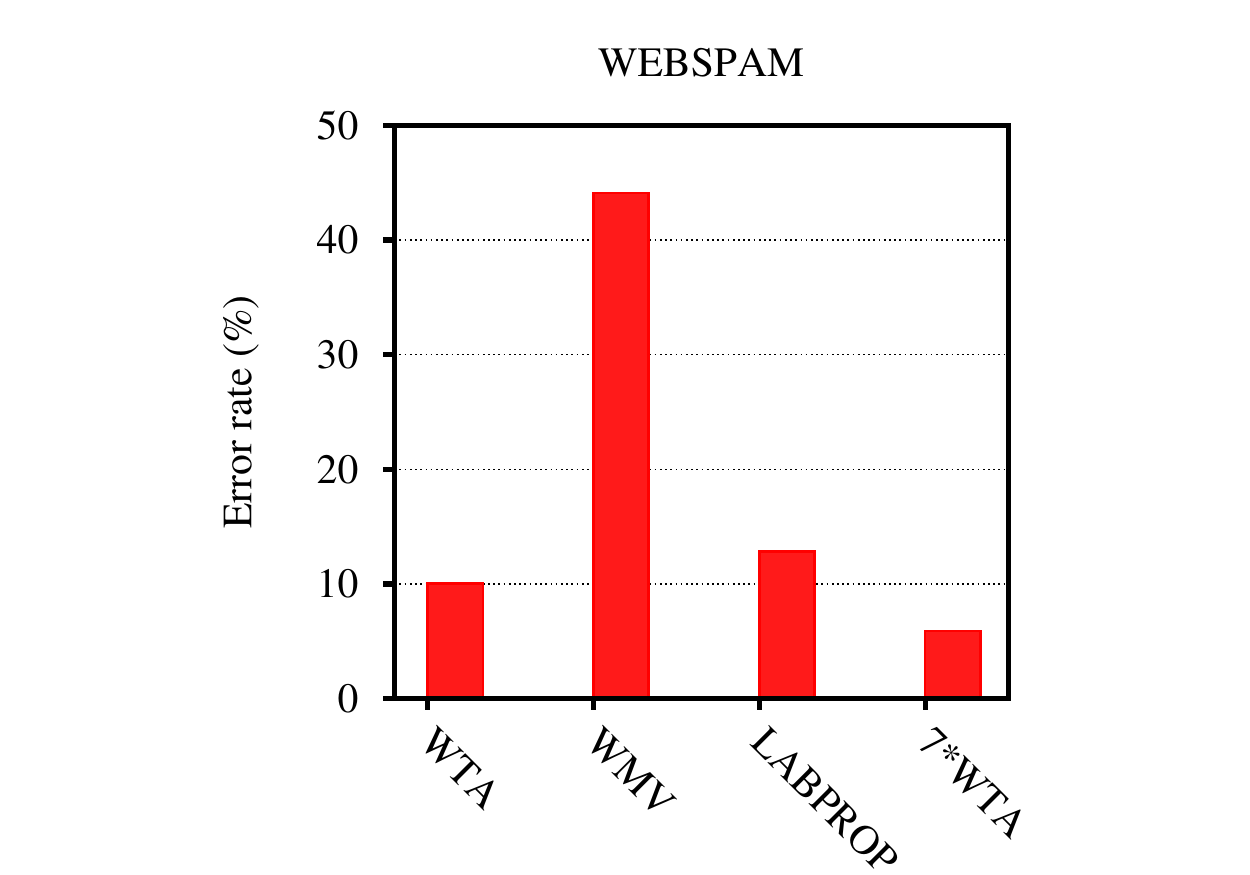}& \includegraphics[width=70mm]{./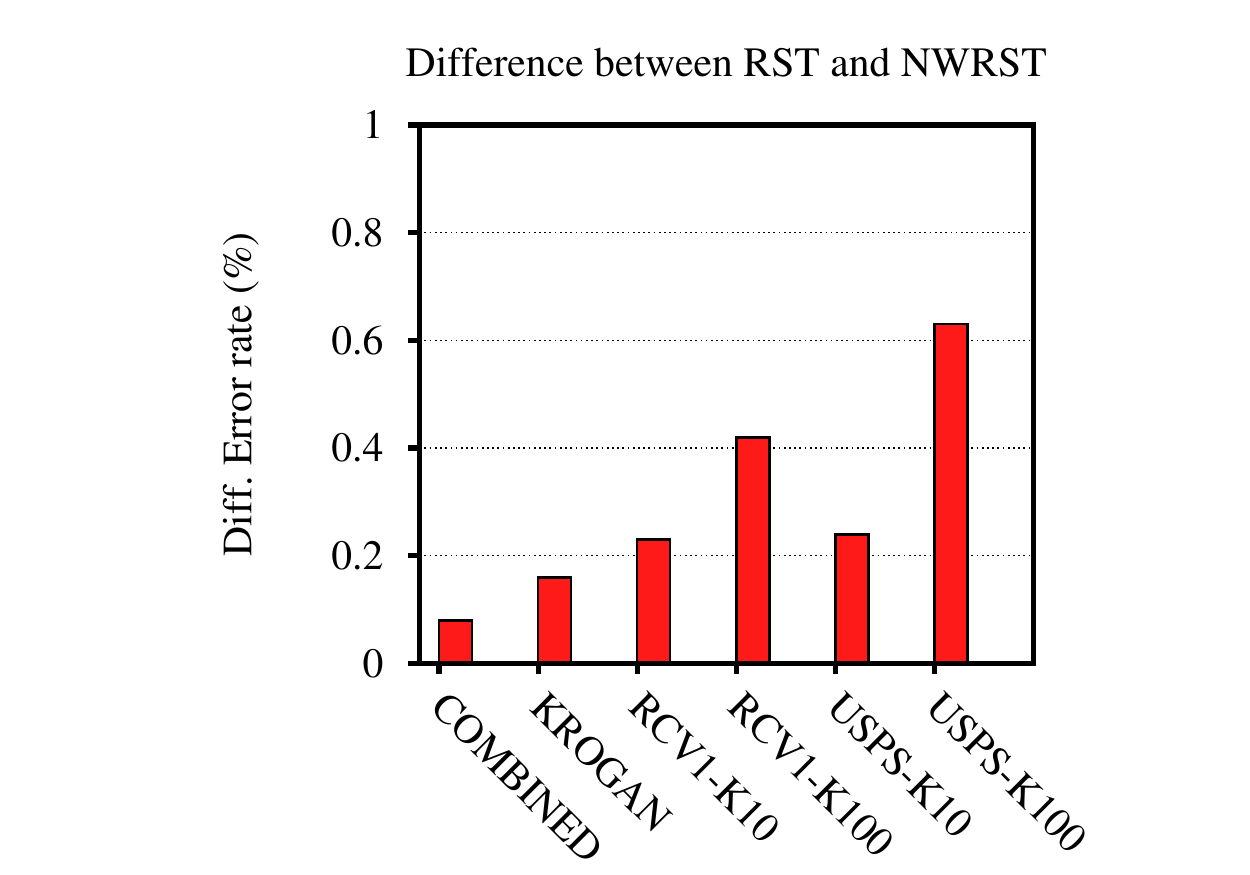}
\end{tabular}
\end{center}
\caption{\label{f:charts2}
\textbf{Left:} Error rate levels on WEBSPAM taken from Table \ref{t:spamresults} in Appendix \ref{app:tables}.
\textbf{Right:} Average error rate difference across datasets when using \wta+\nwrst\ rather than \wta+\rst.
}
\end{figure}
%


Several interesting observations and conclusions can be drawn from our experiments.
\begin{enumerate}
\item $\wta$ outperforms $\gpa$ on all datasets and with all spanning tree combinations. 
In particular, though we only reported aggregated results,
the same relative performance pattern among the two algorithms repeats systematically 
over all binary classification problems. In addition, $\wta$ runs significantly faster than $\gpa$,
requires less memory storage (linear in $n$, rather than quadratic), and is also fairly easy
to implement.
\item By comparing $\nwwta$ to $\wta$, we see that the edge weight information in the nearest neighbor 
rule increases accuracy, though only by a small amount.
\item $\omv$ is a fast and accurate approximation to $\labprop$ when either the graph is dense (RCV1-100, and
USPS-100) or the training set is comparatively large (25\%--50\%), although neither of the two situations often 
occurs in real-world applications.
\item The best performing spanning tree for both $\wta$ and $\gpa$ is $\mst$. 
This might be explained by the fact that $\mst$ tends to select light $\phi$-edges of the 
original graph.
\item $\nwrst$ and \dfst\ are fast approximations to $\rst$. 
Though the use of $\nwrst$ and \dfst\ does not provide theoretical performance guarantees as for $\rst$, 
in our experiments they do actually perform comparably. Hence, in practice, \nwrst\ and \dfst\ might 
be viewed as fast and practical ways to generate spanning trees for \wta.
\item The prediction performance of $\wta$+$\mst$ is sometimes slightly inferior to \labprop's.
However, it should be stressed that \labprop\ takes time $\Theta(mn)$, where $m$ is the number of edges, 
whereas a single sweep of $\wta$+$\mst$ over the graph just takes
time ${\mathcal{O}}(m\log n)$.\footnote
{
The $\mst$ of a graph $G = (V,E)$ can be computed in time ${\mathcal{O}}(|E|\log |V|)$. Slightly faster
implementations do actually exist which rely on Fibonacci heaps.
}
Committees of spanning trees are a simple way to make \wta\ approach, and sometimes surpass, the performance of \labprop.
One can see that on sparse graphs using committees gives a good performances improvement. 
In particular, committees of $\wta$ can reach the same performances of $\labprop$ while adding just a constant 
factor to their (linear) time complexity.
%
%
\end{enumerate}

\section{Conclusions and Open Questions}\label{s:concl}
We introduced and analyzed \wta, a randomized online prediction algorithm for weighted graph
prediction. The algorithm uses random spanning trees and has nearly optimal performance guarantees in terms of expected prediction accuracy. The expected running time of \wta\ is optimal when the random spanning tree is drawn ignoring edge weigths. Thanks to its linearization phase, the algorithm is also provably robust to label noise.

Our experimental evaluation shows that \wta\ outperforms other previously
proposed online predictors.
Moreover, when combined with an aggregation of random spanning trees, \wta\
also tends to beat standard batch predictors, such as label propagation.
These features make \wta\ (and its combinations) suitable to large scale applications. 

There are two main directions in which this work can improved. First, previous analyses~\cite{CGV09b}
reveal that \wta's analysis is loose, at least when the input graph is an unweighted tree with small diameter.
This is the main source of the $\Omega(\ln|V|)$ slack between \wta\ upper bound and the general lower bound
of Theorem~\ref{th:lower}. So we ask whether, at least in certain cases, this slack could be reduced.
Second, in our analysis we express our upper and lower bounds in terms
of the cutsize. One may object that a more natural quantity for our setting is the weighted cutsize, as this
better reflects the assumption that $\phi$-edges tend to be light, a natural notion of bias for weighted graphs.
In more generality, we ask what are other criteria that make a notion of bias better than another one. For
example, we may prefer a bias which is robust to small perturbations of the problem instance. In this sense
$\Phi_G^*$, the cutsize robust to label perturbation introduced in Section~\ref{s:robust}, is a better bias
than $\bE\,\Phi_T$. We thus ask whether there is a notion of bias, more natural and robust than $\bE\,\Phi_T$, which
captures as tightly as possible the optimal number of online mistakes on general weighted graphs. A partial
answer to this question is provided by the recent work of~\cite{VCGZ12}.
It would also be nice to tie this machinery with recent results in the active node classification setting
on trees contained in \cite{CGVZ10b}.

\ \\
\noindent{\bf Acknowledgments }
This work was supported in part by
Google Inc.\ through a Google Research Award, and by
the PASCAL2 Network of Excellence under EC grant~216886.
This publication only reflects the authors’ views.

\appendix
\section*{Appendix A}
\label{app:theorem}
\newtheorem{fact}{Fact}


This appendix contains the proofs of Lemma~\ref{l:ub-L-to-T},
Theorem~\ref{t:ub-tree}, Theorem~\ref{th:graph}, Corollary~\ref{cor:upper},
Theorem~\ref{t:robust}, Theorem~\ref{t:ub-tree-r}, Corollary~\ref{c:ub-tree-r-poly},
Theorem~\ref{th:graph_r}, and Corollary~\ref{c:graph_pol_r}.
Notation and references are as in the main text.
We start by proving Lemma~\ref{l:ub-L-to-T}.

\smallskip\noindent
\begin{proof}[Lemma \ref{l:ub-L-to-T}]
Let a {\em cluster} be any maximal sub-line of $L$ whose edges
are all $\phi$-free. Then $L$ contains exactly $\Phi_L(\by) + 1$ clusters,
which we number consecutively, starting from one of the two terminal nodes.
Consider the $k$-th cluster $c_k$. 
Let $v_0$ be the first node of $c_k$ whose label
is predicted by \wta. After $y_{v_0}$ is revealed, the cluster splits into 
two edge-disjoint sub-lines
$c'_k$ and $c''_k$, both having $v_0$ as terminal node.\footnote
{
With no loss of generality, we assume that neither of the two sub-lines is empty,
so that $v_0$ is not a terminal node of $c_k$.
}
Let $v'_k$ and $v''_k$ be the closest nodes to $v_0$ such that (i) 
$y_{v'_k} = y_{v''_k} \neq y_{v_0}$ and 
(ii) $v'_k$ is adjacent to a terminal node of $c'_k$, and 
$v''_k$ is adjacent to a terminal node of $c''_k$.
The nearest neighbor prediction rule of
\wta\ guarantees that the first mistake made on $c'_k$ (respectively, $c''_k$) 
must occur on a node $v_1$ such that $d(v_0, v_1) \ge d(v_1, v'_k)$
(respectively, $d(v_0, v_1) \ge d(v_1, v''_k)$). 
By iterating this argument for the subsequent mistakes we
see that the total number of mistakes made on cluster $c_k$ is bounded by
\[
1+\left\lfloor \log_2 \frac{R'_k+(w'_k)^{-1}}{(w'_k)^{-1}} \right\rfloor
 + \left\lfloor \log_2 \frac{R''_k+(w''_k)^{-1}}{(w''_k)^{-1}}\right\rfloor
\]
where $R'_k$ is the resistance diameter of sub-line $c'_k$, and $w'_k$
is the weight of the $\phi$-edge between $v'_k$ and the terminal
node of $c'_k$ closest to it ($R''_k$ and $w''_k$ are defined similarly).
Hence, summing the above displayed expression over clusters $k = 1, \ldots, \Phi_L(\by) + 1$ 
we obtain
\begin{align*}
    m_L
&\bigoheq
    \Phi_L(\by)
    + \sum_{k} \Bigl(\log \bigl(1+R'_k w'_k\bigr)
    + \log \bigl(1+R'_k w''_k\bigr)\Bigr)
\\ &\bigoheq
    \Phi_L(\by) \left(1 + \log\left(1+\frac{1}{\Phi_L(\by)}\sum_{k}R'_k w'_k
    \right)\right.
 + \left. \log \left(1 + \frac{1}{\Phi_L(\by)}\sum_{k}R''_k w''_k
    \right)\right)
\\ &\bigoheq
    \Phi_L(\by) \left(1 + \log \left(1+\frac{R^W_L\Phi_L^W(\by)}{\Phi_L(\by)}\right) \right)
\end{align*}
where in the second step we used Jensen's inequality and in the last 
one the fact that $\sum_k (R'_k + R''_k) = R^W_L$ and
$\max_k w'_k \bigoheq \Phi_L^W(\by)$, $\max_k w''_k \bigoheq \Phi_L^W(\by)$.
This proves the lemma in the case $E' \equiv \emptyset$.

In order to conclude the proof, observe that if we take any semi-cluster
$c'_k$ (obtained, as before, by splitting cluster $c_k$, being $v_0 \in c_k$ the 
first node whose label is predicted by \wta), and pretend
to split it into two sub-clusters connected by a $\phi$-free edge, we could repeat
the previous dichotomic argument almost verbatim on the two sub-clusters at the cost of
adding an extra mistake. 
%
We now make this intuitive argument more precise. Let $(i,j)$ be a $\phi$-free edge belonging to semi-cluster $c'_k$, and suppose
without loss of generality that $i$ is closer to $v_0$ than to $j$. If we remove edge
$(i,j)$ then $c'_k$ splits into two subclusters: $c'_k(v_0)$ and $c'_k(j)$, containing node
$v_0$ and $j$, respectively (see Figure~\ref{f:plusk}). Let  $m_{c'_k}$, $m_{c'_k(v_0)}$ and $m_{c'_k(j)}$ be  the number of mistakes
made on $c'_k$, $c'_k(v_0)$ and $c'_k(j)$, respectively.
We clearly have $m_{c'_k} = m_{c'_k(v_0)} + m_{c'_k(j)}$. 

\begin{figure}[h!]
  \centering
\figscale{\imgdir/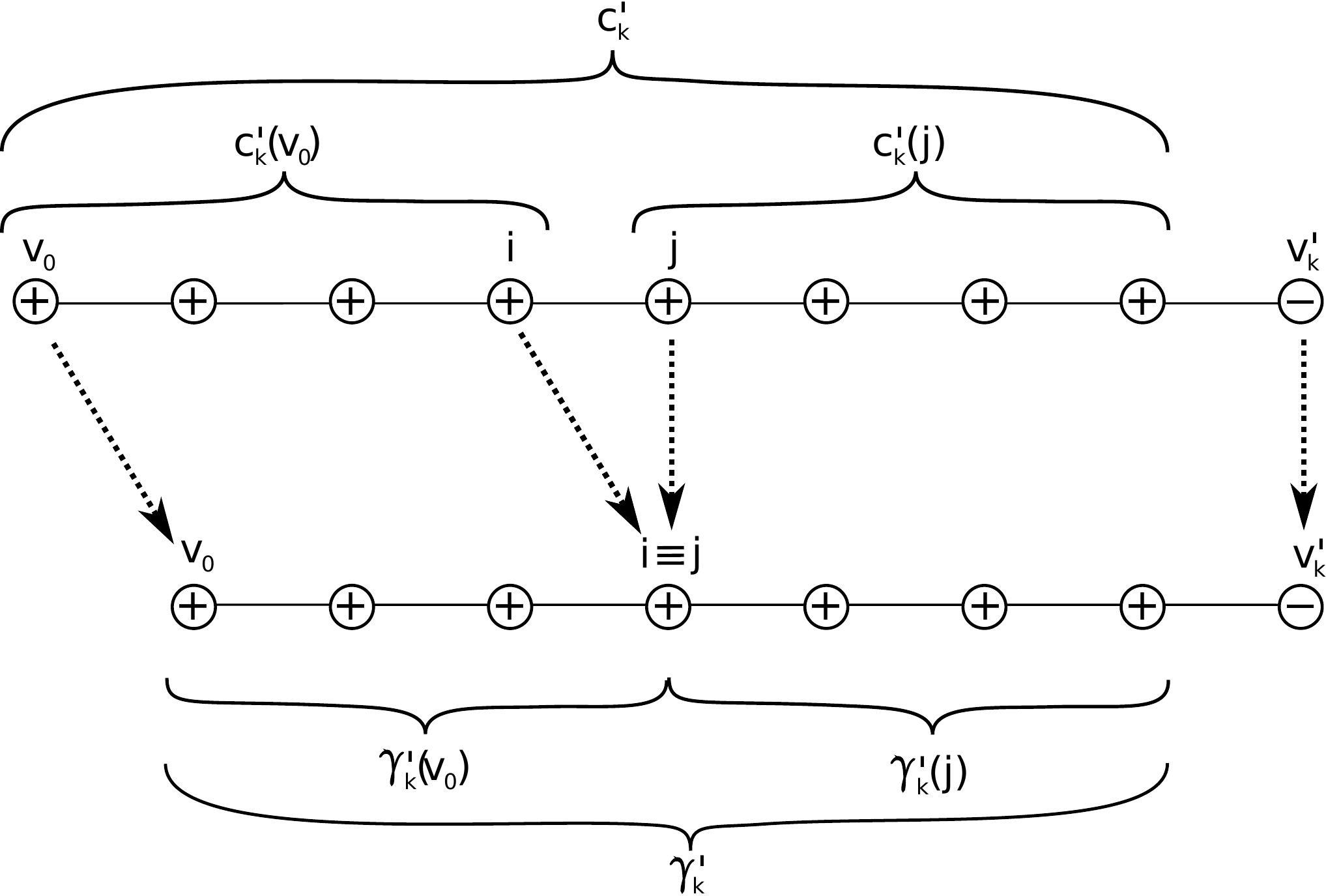}{0.5}
\caption{\label{f:plusk} We illustrate the way we bound the number of mistakes
on semi-cluster $c'_k$ by dropping the resistance contribution of any (possibly very light) 
edge $(i,j)$, at the cost of increasing the mistake bound on $c'_k$ by $1$. 
The removal of $(i,j)$ makes $c'_k$ split into subclusters $c'_k(v_0)$ and $c'_k(j)$.  
We can then 
drop edge $(i,j)$ by making node $i$ coincide with node $j$. The resulting 
semi-cluster is denoted $\gamma'_k$. 
This shortened version of $c'_k$ can be viewed as split into  sub-cluster $\gamma'_k(v_0)$ and subcluster
$\gamma'_k(j)$, corresponding to $c'_k(v_0)$ and $c'_k(j)$, respectively. 
Now, the number of mistakes made on $c'_k(v_0)$ and $c'_k(j)$ can be bounded
by those made on $\gamma'_k(v_0)$ and $\gamma'_k(j)$. Hence, we can bound
the mistakes on $c'_k$ through the ones made on $\gamma'_k$, with the addition of a single 
mistake, rather than two, due to the double node $i \equiv j$ of $\gamma'_k$.
}
\end{figure}

Let now $\gamma'_k$ be the semi-cluster obtained from $c'_k$ by contracting edge 
$(i,j)$ so as to make $i$ coincide with $j$ (we sometimes write $i \equiv j$). 
Cluster $\gamma'_k$ can be split into 
two parts which overlap only at node $i \equiv j$: $\gamma'_k(v_0)$, with terminal nodes $v_0$
and $i$ (coinciding with node $j$), and $\gamma'_k(j)$. 
In a similar fashion, let $m_{\gamma'_k}$, $m_{\gamma'_k(v_0)}$,
and $m_{\gamma'_k(j)}$ be the number of mistakes
made on $\gamma'_k$, $\gamma'_k(v_0)$ and $\gamma'_k(j)$, respectively.
We have $m_{\gamma'_k} = m_{\gamma'_k(v_0)} + m_{\gamma'_k(j)} -1$, where the $-1$ 
takes into account that $\gamma'_k(v_0)$ and $\gamma'_k(j)$ 
overlap at node $i \equiv j$.

Observing now that, for each node $v$ belonging to $c'_k(v_0)$ (and $\gamma'_k(v_0))$, 
the distance $d(v,v'_k)$ is smaller on $\gamma_k$ than on $c'_k$,
we can apply the abovementioned dichotomic argument 
to bound the mistakes made on $c'_k$, obtaining $m_{\gamma'_k(v_0)} \le m_{c'_k(v_0)}$.
Since $m_{c'_k(j)} = m_{\gamma'_k(j)}$, we can finally write
$
m_{c'_k} = m_{c'_k(v_0)} + m_{c'_k(j)} \le  m_{\gamma'_k(v_0)} +  m_{\gamma'_k(j)}    = m_{\gamma'_k} + 1
$.
Iterating this argument for all edges in $E'$ concludes the proof.
\end{proof}
In view of proving Theorem~\ref{t:ub-tree}, we now prove the following two lemmas.
\begin{lemma}\label{line_edges}
Given any tree $T$, let $E(T)$ be the edge set of $T$, and let $E(L')$ and $E(L)$ be 
the edge sets of line graphs $L'$ and $L$ obtained via \wta's tree linearization of $T$.
Then the following holds.
\begin{enumerate}
\item 
There exists a partition $\scP_{L'}$ of $E(L')$ in pairs and a bijective mapping $\mu_{L'}: \scP_{L'} \to E(T)$ such that the weight of both edges in each pair $S' \in \scP_{L'}$ is equal to the weight of the edge $\mu_{L'}(S')$.
\item
There exists a partition $\scP_{L}$ of $E(L)$ in sets $S$ such that $|S|\le 2$, and there exists an injective mapping $\mu_{L} : \scP_{L} \to E(T)$ such that the weight of the edges in each pair $S \in \scP_{L}$ is equal to the weight of the edge $\mu_{L}(S)$.
\end{enumerate}
\end{lemma}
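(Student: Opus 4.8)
The plan is to prove the two parts in order, obtaining part 2 from part 1. For part 1, I would observe that a depth-first visit of the tree $T$ crosses each edge of $T$ exactly twice — once on the way down into the subtree hanging from that edge, and once when backtracking out of it — and that both crossings of an edge $(i,j)$ append to $L'$ a distinct edge carrying weight $w_{i,j}$. Hence $|E(L')| = 2(n-1)$, and $E(L')$ is naturally partitioned into $n-1$ pairs, one per edge of $T$, with both edges of the pair associated with $(i,j)$ weighing $w_{i,j}$. Taking $\scP_{L'}$ to be this set of pairs and $\mu_{L'}$ the obvious correspondence $S'\mapsto$ ``the $T$-edge whose two copies form $S'$'' settles part 1.

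For part 2, the crucial structural observation is that each edge $e$ of $L$ is obtained by contracting a nonempty run $R(e)$ of \emph{consecutive} edges of $L'$ — namely the $L'$-edges lying between two surviving endpoints with only eliminated nodes in between (with $|R(e)|=1$ when $e$ survived from $L'$ untouched); that these runs $R(e)$, for $e$ ranging over $E(L)$, are pairwise disjoint subsets of $E(L')$; and that, by the weight rule $w_{j',j''}=\min\{w_{j',j},w_{j,j''}\}$ iterated (the ``several adjacent nodes'' footnote), $w(e)=\min_{f\in R(e)} w(f)$. The only edges of $L'$ that belong to no run are those incident to an eliminated \emph{terminal} duplicate of $L'$ — in fact just the single edge incident to the duplicated occurrence of the root $r$ at the far end of $L'$ — so dropping them leaves at most two copies of each $T$-edge among the $L'$-edges that do lie in some run (there were exactly two, by part 1).

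Given this, I would finish as follows. For each $e\in E(L)$ fix a minimizing edge $f_e\in R(e)$, let $S'_e\in\scP_{L'}$ be the pair containing $f_e$, and set $\psi(e):=\mu_{L'}(S'_e)\in E(T)$; then $w(e)=w(f_e)=w(\psi(e))$, so $\psi$ is weight-preserving. Moreover $\psi$ is at most $2$-to-$1$: if $\psi(e)=\psi(e')=g$ with $e\ne e'$, then $f_e$ and $f_{e'}$ both lie in the two-element set $\mu_{L'}^{-1}(g)$, and the disjointness of $R(e)$ and $R(e')$ forces $f_e\ne f_{e'}$, so $\{f_e,f_{e'}\}$ exhausts $\mu_{L'}^{-1}(g)$ and no third preimage can exist. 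Define $\scP_L$ to be the collection of nonempty fibers $\psi^{-1}(g)$, $g\in E(T)$; this is a partition of $E(L)$ into sets of size at most $2$, and $\mu_L(\psi^{-1}(g)):=g$ is injective (distinct fibers carry distinct values of $g$) and, by the previous sentence, equals the common weight of the edges in each block. This proves part 2.

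I expect the only real work to be in the second paragraph: turning the informal ``eliminate duplicates in any order'' procedure into the precise statement that $E(L)$ is in weight-min correspondence with a disjoint family of runs inside $E(L')$, and correctly accounting for the handful (in fact one) of $L'$-edges removed together with terminal duplicates. Once that bookkeeping is pinned down, the counting against part 1 and the fiber argument are entirely routine.
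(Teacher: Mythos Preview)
Your proposal is correct and follows essentially the same idea as the paper. The paper is terser for Part~2: it simply says to form $\scP_L$ from $\scP_{L'}$ by deleting from each pair the edges ``eliminated when $L'$ is transformed into $L$,'' which implicitly identifies each edge of $L$ with a particular edge of $L'$ (the minimum-weight edge in its run). You make that identification explicit through $f_e$ and then take fibers of $\psi$; unwinding the definitions, your $\scP_L$ and $\mu_L$ coincide with the paper's. Your version has the virtue of spelling out why the weight condition transfers to $\mu_L$, which the paper leaves as a one-line remark.

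One small inaccuracy that does not affect your argument: the $L'$-edges lying in no run are not ``just the single edge'' incident to the terminal duplicate of $r$. The last first-occurrence in $L'$ is the final leaf reached by the depth-first visit, and the entire backtracking tail from that leaf to the root (which may have length greater than~$1$) lies outside every run. Your fiber argument is unaffected, since it only uses that the runs $R(e)$ are pairwise disjoint and that each pair in $\scP_{L'}$ has exactly two elements.
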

\begin{proof}
We start by defining the bijective mapping $\mu_{L'}: \scP_{L'} \to E(T)$. Since each edge $(i,j)$ of $T$ is traversed exactly twice
in the depth-first visit that generates $L'$,\footnote
{
For the sake of simplicity, we are assuming here that the depth-first visit of $T$ 
terminates by backtracking over all nodes on the path between the last node visited
in a forward step and the root.
} 
once in a forward step and once in a backward step, we partition $E(L')$ in pairs $S'$ such that $\mu_{L'}(S')=(i,j)$ if and only if $S'$ contains the pair of distinct edges created in $L'$ by the two traversals of $(i,j)$. By construction, the edges in each pair $S'$ have weight equal to $\mu_{L'}(S')$. Moreover, this mapping is clearly bijective, since any edge of $L'$ is created by a single traversal of an edge in $T$. The second mapping $\mu_{L}: \scP(L) \to E(T)$ is created as follows. $\scP_L$ is created from $\scP_{L'}$ by removing from each $S'\in\scP_{L'}$ the edges that are eliminated when $L'$ is transformed into $L$. Note that we have $\bigl|\scP_L\bigr|\le\bigl|\scP_{L'}\bigr|$ and for any $S\in\scP_L$ there is a unique $S'\in\scP_{L'}$ such that $S \ss S'$. Now, for each $S\in\scP_L$ let $\mu_L(S) = \mu_{L'}(S')$, where $S'$ is such that $S \ss S'$. Since $\mu_{L'}$ is bijective, $\mu_L$ is injective. Moreover, since the edges in $S'$ have the same weight as the edge $\mu_{L'}(S')$, the same property holds for $\mu_L$.
%
%
\end{proof}
%
%
\begin{lemma}
\label{lemma_tree_line}
Let $(T,\by)$ be a labeled tree, let $(L,\by)$ be the linearization of $T$, 
and let $L'$ be the line graph with duplicates (as described above). 
Then the following holds.\footnote
{
Item 2 in this lemma is essentially contained in the paper by~\cite{HLP09}.
}
\begin{enumerate}
\item $\Phi_L^W(\by) \le \Phi_{L'}^W(\by) \le 2 \Phi_T^W(\by)$;
\item $\Phi_L(\by) \le \Phi_{L'}(\by) \le 2\Phi_T(\by)$.
\end{enumerate}
\end{lemma}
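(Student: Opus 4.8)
The plan is to prove each of the two displayed chains by breaking it at the intermediate line graph $L'$. For the right-hand halves, $\Phi_{L'}^W(\by)\le 2\Phi_T^W(\by)$ and $\Phi_{L'}(\by)\le 2\Phi_T(\by)$, I would invoke part~1 of Lemma~\ref{line_edges}: it supplies a partition $\scP_{L'}$ of $E(L')$ into pairs together with a bijection $\mu_{L'}\colon\scP_{L'}\to E(T)$ under which both edges of a pair $S'$ carry the weight of $\mu_{L'}(S')$. Recalling from the construction in that lemma's proof that the two edges of $S'$ are exactly the two copies produced by the two depth-first traversals of $\mu_{L'}(S')=(i,j)$ — so each joins a copy of $i$ to a copy of $j$, and copies inherit the labels of their originals — both edges of $S'$ are $\phi$-edges of $L'$ precisely when $(i,j)$ is a $\phi$-edge of $T$, and both have weight $w_{i,j}$. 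Summing over $S'\in\scP_{L'}$ and using that $\mu_{L'}$ ranges over all of $E(T)$ yields $\Phi_{L'}^W(\by)\le 2\Phi_T^W(\by)$ and $\Phi_{L'}(\by)\le 2\Phi_T(\by)$ (indeed equalities, under the bijection of Lemma~\ref{line_edges}).

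For the left-hand halves, $\Phi_L^W(\by)\le\Phi_{L'}^W(\by)$ and $\Phi_L(\by)\le\Phi_{L'}(\by)$, I would show that the passage from $L'$ to $L$ never increases the cutsize. By the remark closing the description of the linearization in Section~\ref{s:alg}, the line $L$ is independent of the order in which duplicate nodes are eliminated, so I may assume they are removed one at a time. A single elimination step takes a duplicate node $j$ with current neighbours $j'$ and $j''$, deletes $(j',j)$ and $(j,j'')$, and inserts one edge $(j',j'')$ of weight $\min\{w_{j',j},w_{j,j''}\}$, leaving every other edge untouched; thus it suffices to bound the change in (weighted) cutsize caused by this local surgery. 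A short case analysis on $(y_{j'},y_j,y_{j''})\in\spin^3$ does it: if all three labels agree, both deleted edges and the new edge are $\phi$-free, so nothing changes; if $y_{j'}=y_{j''}\neq y_j$, two $\phi$-edges of total weight $w_{j',j}+w_{j,j''}$ are replaced by a $\phi$-free edge; and if $y_{j'}\neq y_{j''}$, then exactly one of $(j',j),(j,j'')$ is a $\phi$-edge — say of weight $w$ — and it is replaced by a single $\phi$-edge of weight $\min\{w_{j',j},w_{j,j''}\}\le w$. In every case the number of $\phi$-edges does not increase and the total $\phi$-weight does not increase, so iterating over all eliminations gives $\Phi_L(\by)\le\Phi_{L'}(\by)$ and $\Phi_L^W(\by)\le\Phi_{L'}^W(\by)$. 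Chaining with the right-hand halves finishes both parts.

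The only delicate point is the third case above combined with the chain-collapse phenomenon flagged in the footnote to step~3 of the linearization, where more than two adjacent duplicates may disappear at once and the survivors are joined by the lightest of the deleted edges: the reason the $\phi$-edge case goes through is exactly that the replacement edge is assigned the \emph{minimum} of the weights involved, hence is never heavier than the $\phi$-edge it stands in for, and the multi-node case is reduced to the single-node case by the order-independence of the elimination. Everything else is the bookkeeping already packaged by Lemma~\ref{line_edges}.
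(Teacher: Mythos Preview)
Your proof is correct and follows essentially the same approach as the paper: you invoke Lemma~\ref{line_edges} for the inequalities $\Phi_{L'}^W(\by)\le 2\Phi_T^W(\by)$ and $\Phi_{L'}(\by)\le 2\Phi_T(\by)$, and then argue that each duplicate-elimination step cannot increase the (weighted) cutsize because the new edge $(j',j'')$ is a $\phi$-edge only if one of the removed edges is, and in that case its weight (the minimum) is no larger. The paper's write-up is terser---it omits the explicit three-way case split and the appeal to order-independence of the elimination---but the substance is identical.
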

\begin{proof}
From Lemma~\ref{line_edges} (part~1) we know that $L'$ contains a duplicated edge for each edge of $T$. This immediately implies  $\Phi_{L'}(\by) \le 2 \Phi_T(\by)$ and $\Phi^W_{L'}(\by) \le 2\Phi^W_T(\by)$.

To prove the remaining inequalities, note that from the description of \wta\ in Section~\ref{s:alg} (step~{3}), we see that when $L'$ is transformed into $L$ the pair of edges $(j',j)$ and $(j,j'')$ of $L'$, which are
incident to a duplicate node $j$, gets replaced in $L$ (together with $j$) by a single edge $(j',j'')$.
Now each such edge $(j',j'')$ cannot be a
$\phi$-edge in $L$ unless either $(j,j')$ or $(j,j'')$ is a $\phi$-edge in $L'$, and this establishes $\Phi_L(\by) \le \Phi_{L'}(\by)$. Finally, if $(j',j'')$ is a $\phi$-edge in $L$, then its weight is not larger than the
weight of the associated $\phi$-edge in $L'$ (step~{3} of $\wta$), and this establishes $\Phi_L^W(\by) \le \Phi_{L'}^W(\by)$.
\end{proof}
Recall that, given a labeled graph $G = (V,E)$ and any $\phi$-free edge subset $E' \subset E \setminus E^{\phi}$, the quantity $R^W_{G}(\neg E')$ is the sum of the resistors of all $\phi$-free edges in $E \setminus (E^{\phi} \cup E')$.
\begin{lemma}\label{l:ub_r}
If \wta\ is run on a weighted line graph $(L,\by)$ obtained
through the linearization of a given labeled tree $(T,\by)$ with edge set $E$, then the total number
$m_T$ of mistakes satisfies
\[
    m_T
\bigoheq
    \Phi_L(\by)
    \left(1 + \log_2\left(1+\frac{R^W_T(\neg E') \ \Phi^W_L(\by)}{\Phi_L(\by)}\right)
    \right)+\Phi_T(\by)+|E'|~,
\]
where $E'$ is an arbitrary subset of $E \setminus E^{\phi}$.
\end{lemma}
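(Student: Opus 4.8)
The plan is to apply Lemma~\ref{l:ub-L-to-T} to the line graph $L$ on which \wta\ actually runs, and then to translate the line-graph resistance quantity $R^W_L(\neg \cdot)$ appearing there into the tree quantity $R^W_T(\neg E')$ using the weight-preserving edge correspondence of Lemma~\ref{line_edges}. Note first that $m_T=m_L$, since \wta\ predicts the nodes of $T$ precisely by running on $L$, and $L$ contains every node of $T$ exactly once. Also observe that, unlike the route to Theorem~\ref{t:ub-tree}, here we do \emph{not} replace the line-graph cutsizes $\Phi_L(\by),\Phi^W_L(\by)$ by tree quantities; only the resistor sum gets rewritten.

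The core step is the translation of the resistor sum. Fix $E' \ss E\setminus E^\phi$ as in the statement. By Lemma~\ref{line_edges}(2) there is a partition $\scP_L$ of $E(L)$ into blocks $S$ with $|S|\le 2$ and an injection $\mu_L:\scP_L\to E(T)$ such that every edge of a block $S$ has the same weight, hence the same resistance, as the tree edge $\mu_L(S)$. Classify each $\phi$-free edge $e$ of $L$, lying in its block $S$, by the value of $\mu_L(S)$: \emph{type A} if $\mu_L(S)\in(E\setminus E^\phi)\setminus E'$, \emph{type B} if $\mu_L(S)\in E'$, \emph{type C} if $\mu_L(S)\in E^\phi$. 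Since $\mu_L$ is injective and $|S|\le 2$, at most two $\phi$-free edges of $L$ map out of any fixed block, so there are at most $2|E'|$ type-B edges and at most $2\,\Phi_T(\by)$ type-C edges (recall $\Phi_T(\by)=|E^\phi|$ on the tree). Put $E'_L:=\{\text{all type-B and type-C }\phi\text{-free edges of }L\}$; then $E'_L\ss E(L)\setminus E^\phi(L)$ and $|E'_L|\le 2|E'|+2\,\Phi_T(\by)$. The $\phi$-free edges of $L$ outside $E'_L$ are exactly the type-A edges, and summing their resistances — each $1/w_{\mu_L(S)}$ with $\mu_L(S)$ ranging over $\phi$-free tree edges not in $E'$ with multiplicity at most $2$ — gives $R^W_L(\neg E'_L)\le 2\,R^W_T(\neg E')$.

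Plugging $E'_L$ into Lemma~\ref{l:ub-L-to-T} yields
\[
m_L \bigoheq \Phi_L(\by)\left(1+\log_2\!\left(1+\frac{R^W_L(\neg E'_L)\,\Phi^W_L(\by)}{\Phi_L(\by)}\right)\right)+|E'_L|~.
\]
Using $R^W_L(\neg E'_L)\le 2\,R^W_T(\neg E')$ together with $1+2x\le(1+x)^2$ for $x\ge0$, the logarithmic factor is at most $2\log_2\!\bigl(1+R^W_T(\neg E')\Phi^W_L(\by)/\Phi_L(\by)\bigr)$, which the $\bigoheq$ absorbs; and $|E'_L|\le 2|E'|+2\,\Phi_T(\by)=\bigoheq|E'|+\Phi_T(\by)$. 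Combining with $m_T=m_L$ gives the claimed bound. The only genuinely delicate point I would spell out in full is the counting and resistance estimate in the middle paragraph — in particular that $\phi$-free edges of $L$ whose block maps to a $\phi$-edge of $T$ number only $O(\Phi_T(\by))$, so that dropping their contribution via the $|E'_L|$ term costs just $O(\Phi_T(\by))$ extra mistakes; the rest is routine bookkeeping.
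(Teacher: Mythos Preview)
Your proposal is correct and follows essentially the same route as the paper's proof: both invoke Lemma~\ref{line_edges}(2), drop from $L$ the $\phi$-free edges whose block maps under $\mu_L$ either to $E'$ (your type~B, the paper's $E'(L)$) or to a $\phi$-edge of $T$ (your type~C, the paper's $E_0(L)$), bound the remaining line-graph resistance by $2R^W_T(\neg E')$ via the weight-preserving injection, and then apply Lemma~\ref{l:ub-L-to-T}. Your version is slightly more explicit in restricting the dropped set to $\phi$-free edges of $L$ and in handling the constant factor inside the logarithm, but these are cosmetic differences.
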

%
%
\begin{proof}
Lemma~\ref{line_edges} (Part~2), exhibits an injective mapping $\mu_L : \scP \to E$, where $\scP$ is a partition of the edge set $E(L)$ of $L$, such that every $S\in\scP$ satisfies $|S|\le 2$. Hence, we have $|E'(L)| \le 2|E'|$, where $E'(L)$ is the union of the pre-images of edges in $E'$ according to $\mu_L$ ---note that some edge in $E'$ might not have a pre-image in $E(L)$. By the same argument, we also establish $|E_0(L)| \le 2\Phi_T$, where $E_0(L)$ is the set of $\phi$-free edges of $L$ that belong to elements $S$ of the partition $\scP_L$ such that $\mu_L(S) \in E^{\phi}$.

Since the edges of $L$ that are neither in $E_0(L)$ nor in $E'(L)$ are partitioned by $\scP_L$
in edge sets having cardinality at most two, which in turn can be injectively mapped
via $\mu_L$ to $E \setminus (E^{\phi} \cup E')$, we have
$
    R^W_L\Bigl(\neg \bigl(E'(L) \cup E_0(L)\bigr)\Bigr) \le 2 R^W_T(\neg E')~.
$
Finally, we use $|E'(L)| \le 2|E'|$ and $|E_0(L)| \le 2\Phi_{T}(\by)$  (which we just established) and apply Lemma~\ref{l:ub-L-to-T} with $E' \equiv E'(L) \cup E_0(L)$. This
concludes the proof.
\end{proof}
\begin{proof}[of Theorem \ref{t:ub-tree}]
We use Lemma~\ref{lemma_tree_line} to establish $\Phi_L(\by) \le 2\Phi_T(\by)$ and $\Phi_L^W(\by) \le 2\Phi_T^W(\by)$.
We then conclude with an application of Lemma~\ref{l:ub_r}.
\end{proof}
%
%
%
%
\begin{lemma}\label{l:graph_lemma}
If \wta\ is run on a weighted line graph $(L,\by)$ obtained
through the linearization of random spanning tree $T$ of a labeled weighted graph
$(G,\by)$, then the total number $m_G$ of mistakes satisfies%
\[
    \bE\,m_G
\bigoheq
    \bE\bigl[\Phi_L(\by)\bigr]\left(1 + \log \left(1 + \wmax^{\phi} \bE\bigl[R^W_T\bigr] \right) + \bE\bigl[\Phi_T(\by)\bigr]\right)~,
\]
where $\wmax^{\phi} = \max_{(i,j) \in E^{\phi}} w_{i,j}$.
\end{lemma}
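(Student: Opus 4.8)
The plan is to obtain Lemma~\ref{l:graph_lemma} by averaging, over the random spanning tree $T$, the deterministic tree-level bound of Lemma~\ref{l:ub_r}. Applying Lemma~\ref{l:ub_r} with $E'=\emptyset$ to the line graph $L$ obtained by linearizing $T$, and noting that $m_G$ is exactly the number of mistakes \wta\ makes on $L$, we have, pointwise in $T$,
\[
m_G \bigoheq \Phi_L(\by)\left(1+\log\left(1+\frac{R^W_T\,\Phi^W_L(\by)}{\Phi_L(\by)}\right)\right)+\Phi_T(\by)~.
\]
We may assume $\by$ is non-constant (otherwise $m_G\le 1$ and the statement is immediate), so that $\Phi_L(\by)\ge 1$ always. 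Taking expectations, the additive term becomes $\bE[\Phi_T(\by)]$, and the task reduces to showing that $\bE\bigl[\Phi_L(\by)\log\bigl(1+R^W_T\Phi^W_L(\by)/\Phi_L(\by)\bigr)\bigr]\bigoheq\bE[\Phi_L(\by)]\log\bigl(1+\wmax^{\phi}\bE[R^W_T]\bigr)+\bE[\Phi_T(\by)]$.

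The first step I would take is to push the expectations inside the logarithm. Since $p(a,b)=a\log(1+b/a)$ is the perspective of the concave function $t\mapsto\log(1+t)$, it is jointly concave on $\{a>0,\,b\ge 0\}$; Jensen's inequality applied with $a=\Phi_L(\by)$ and $b=R^W_T\,\Phi^W_L(\by)$ then yields
\[
\bE\!\left[\Phi_L(\by)\log\!\left(1+\frac{R^W_T\,\Phi^W_L(\by)}{\Phi_L(\by)}\right)\right]\le\bE\bigl[\Phi_L(\by)\bigr]\log\!\left(1+\frac{\bE\bigl[R^W_T\,\Phi^W_L(\by)\bigr]}{\bE\bigl[\Phi_L(\by)\bigr]}\right)~,
\]
reducing everything to an upper bound on the cross term $\bE[R^W_T\,\Phi^W_L(\by)]$.

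To control $\bE[R^W_T\,\Phi^W_L(\by)]$ I would first invoke Lemma~\ref{lemma_tree_line} to replace $\Phi^W_L(\by)$ by $2\Phi^W_T(\by)$, and then use that $R^W_T=\sum_{(i,j)\in E\setminus E^{\phi}}\tfrac{1}{w_{i,j}}\Ind{(i,j)\in T}$ and $\Phi^W_T(\by)=\sum_{(i,j)\in E^{\phi}}w_{i,j}\Ind{(i,j)\in T}$ are nonnegative linear combinations of indicators of \emph{disjoint} edge sets. Since the edges of a (weighted) random spanning tree are pairwise negatively correlated, $\mathbb{P}[(i,j)\in T,\,(r,s)\in T]\le p_{i,j}p_{r,s}$ for $(i,j)\ne(r,s)$ ---see \cite{LP09}--- this gives $\bE[R^W_T\,\Phi^W_T(\by)]\le\bE[R^W_T]\,\bE[\Phi^W_T(\by)]$, and by~(\ref{e:expected}) we have $\bE[\Phi^W_T(\by)]=\sum_{(i,j)\in E^{\phi}}w_{i,j}p_{i,j}\le\wmax^{\phi}\sum_{(i,j)\in E^{\phi}}p_{i,j}=\wmax^{\phi}\bE[\Phi_T(\by)]$. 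Hence $\bE[R^W_T\,\Phi^W_L(\by)]\le 2\wmax^{\phi}\bE[R^W_T]\bE[\Phi_T(\by)]$.

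Finally, plugging this into the Jensen estimate and using the elementary inequalities $\log(1+xy)\le\log(1+x)+\log(1+y)$ and $a\log(1+b/a)\le b$ (for $a,b\ge 0$), the logarithmic contribution splits as
\[
\bE[\Phi_L(\by)]\log\!\left(1+\frac{2\wmax^{\phi}\bE[R^W_T]\bE[\Phi_T(\by)]}{\bE[\Phi_L(\by)]}\right)\bigoheq\bE[\Phi_T(\by)]+\bE[\Phi_L(\by)]\log\bigl(1+\wmax^{\phi}\bE[R^W_T]\bigr)~,
\]
which, together with the $\bE[\Phi_L(\by)]$ coming from the leading ``$1$'' and the $\bE[\Phi_T(\by)]$ from the additive term, proves the lemma. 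The hard part is precisely this decoupling: Lemma~\ref{l:ub_r} entangles $\Phi_L(\by)$, $\Phi^W_L(\by)$, and $R^W_T$ inside a single logarithm, and a naive treatment (replacing $\Phi^W_L/\Phi_L$ by $\wmax^{\phi}$ and bounding $\Phi_L\le 2\Phi_T$) would only deliver $\bE[\Phi_T(\by)]$ in front of the logarithm; recovering the sharper $\bE[\Phi_L(\by)]$ there is exactly what forces the joint use of the perspective-function form of Jensen together with the negative correlation of random-spanning-tree edges.
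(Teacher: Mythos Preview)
Your argument is correct, and it actually takes a genuinely different---and in one respect more solid---route than the paper's own proof.

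The paper proceeds by first applying, \emph{pointwise} in $T$, the elementary bound $\Phi^W_L(\by)\le\Phi_L(\by)\,\wmax^{\phi}$ (a direct consequence of Lemma~\ref{line_edges}: every $\phi$-edge of $L$ has weight at most that of some $\phi$-edge of $T\subseteq G$). This kills the ratio inside the logarithm immediately, yielding
\[
m_G \bigoheq \Phi_L(\by)\bigl(1+\log\bigl(1+R^W_T\,\wmax^{\phi}\bigr)\bigr)+\Phi_T(\by)~.
\]
The paper then takes expectations and invokes Jensen's inequality for the map $(x,y)\mapsto x\bigl(1+\log(1+y\,\wmax^{\phi})\bigr)$, which it asserts is concave. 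That assertion is false: along the diagonal $x=y=t$ the second derivative of $t(1+\log(1+ct))$ is strictly positive for $c>0$, so the function is not jointly concave and the Jensen step as written does not go through.

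Your approach sidesteps this: you apply Jensen to the \emph{perspective} $a\log(1+b/a)$, which \emph{is} jointly concave, obtaining a bound in terms of the cross moment $\bE[R^W_T\,\Phi^W_L(\by)]$, and then you decouple the two factors via the negative correlation of edges in a weighted uniform spanning tree (legitimate here because $R^W_T$ and $\Phi^W_T$ are supported on disjoint edge sets). The final splitting of the logarithm is clean. Your closing remark is therefore slightly off: the ``naive'' replacement $\Phi^W_L/\Phi_L\le\wmax^{\phi}$ is exactly what the paper does, and it does not by itself force $\bE[\Phi_T]$ in front of the logarithm---rather, it still leaves the product $\Phi_L\cdot R^W_T$ to be decoupled in expectation, which is precisely where your perspective-plus-negative-correlation machinery is doing the real work that the paper's argument glosses over.
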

\begin{proof}
Using Lemma~\ref{l:ub_r} with $E' \equiv \emptyset$ we can write
\begin{align*}
    \bE\,m_G
&\bigoheq
    \bE\Biggl[\Phi_L(\by)\left(1 + \log \left(1+\frac{R^W_T \Phi^W_L(\by)}{\Phi_L(\by)} \right) \right)+\Phi_T\Biggr]
\\ &\bigoheq
    \bE\Bigl[\Phi_L(\by)\Bigl(1 + \log \left(1+R^W_T \wmax^{\phi} \Bigr) \right)+\Phi_T\Bigr] 
\\ &\bigoheq
    \bE\bigl[\Phi_L(\by)\bigr]\Bigl(1 + \log \left(1+\bE\bigl[R^W_T\bigr] \wmax^{\phi} \Bigr) \right) + \bE\bigl[\Phi_T(\by)\bigr]~,
\end{align*}
where the second equality follows from the fact that $\Phi^W_L(\by) \le \Phi_L(\by)\wmax^{\phi}$, which in turn follows from Lemma~\ref{line_edges},
and the third one follows from Jensen's inequality applied to the concave function
$(x,y) \mapsto x\Bigl(1 + \log \left(1+y\,\wmax^{\phi} \Bigr) \right)$ for $x,y \geq 0$.
\end{proof}

\medskip

\begin{proof}[Theorem \ref{th:graph}]
We apply Lemma~\ref{l:graph_lemma} and then Lemma~\ref{lemma_tree_line} to get $\Phi_L(\by) \le 2 \Phi_T(\by)$.
\end{proof}
\begin{proof}[Corollary \ref{cor:upper}]
%
Let $f > \mathrm{poly(n)}$ denote a function growing faster than
any polynomial in $n$.
Choose a polynomially connected graph $G$ and a labeling $\by$.
For the sake of contradiction, assume that $\wta$ makes more than 
$\scO(\bE \bigl[\Phi_T(\by)\bigr] \log n)$
mistakes on $(G,\by)$. Then Theorem\ \ref{th:graph} implies
$w^{\phi}_{\max}\bE\bigl[R^W_T\bigr] > \mathrm{poly}(n)$.
Since $\bE\bigl[R^W_T\bigr] = \sum_{(i,j) \in E \setminus E^{\phi}} r^W_{i,j}$,
we have that $w^{\phi}_{\max} \max_{(i,j) \in E \setminus E^{\phi}}r^W_{i,j} > \mathrm{poly}(n)$.
Together with the assumption of polynomial connectivity for $G$, this implies
$w^{\phi}_{\max} r^W_{i,j} > \mathrm{poly}(n)$ for all $\phi$-free edges $(i,j)$.
By definition of effective resistance, $w_{i,j} r^W_{i,j} \le 1$ for all $(i,j) \in E$.
This gives $w^{\phi}_{\max}/w_{i,j} > \mathrm{poly}(n)$ for all $\phi$-free edges $(i,j)$,
which in turn implies
\[
\frac{\sum_{(i,j) \in E^{\phi}} w_{i,j}}{\sum_{(i,j) \in E \setminus E^{\phi}} w_{i,j}} > \mathrm{poly}(n)~.
\]
As this contradicts our hypothesis, the proof is concluded.
\end{proof}
\begin{proof}[Theorem \ref{t:robust}]
We only prove the first part of the theorem. The proof of the second part corresponds to the special case when all weights are equal to $1$.

Let $\Delta(\by,\by') \subseteq V$ be the set of nodes $i$
such that $y_i \neq y_i'$. We therefore have $\delta(\by,\by') = |\Delta(\by,\by')|$.
Since in a line graph each node is adjacent to at most two other nodes,
the label flip of any node $j \in \Delta(\by,\by')$
can cause an increase of the weighted cutsize
of $L$ by at most $w_{i',j} + w_{j,i''}$,
where $i'$ and $i''$ are the two nodes adjacent to $j$ in $L$.\footnote
{
In the special case when $j$ is terminal node we can set $w_{j,i''} = 0$.
}
Hence, flipping the labels of all nodes in $\Delta(\by,\by')$,
we have that the total cutsize increase is bounded by the sum
of the weights of the $2 \delta(\by,\by')$
heaviest edges in $L$, which implies 
\[
\Phi^W_L(\by) \le \Phi^W_L(\by') + \zeta_L\bigl(2\delta(\by,\by')\bigr)~.
\]
By Lemma~\ref{lemma_tree_line}, $\Phi^W_L(\bu) \le 2\Phi^W_T(\bu)$.
Moreover, Lemma~\ref{line_edges} gives an injective mapping $\mu_L : \scP_L \to E$ ($E$ is the edge set of $T$)
such that the elements of $\scP$ have cardinality at most two, and the weight of each edge $\mu_L(S)$ is the same as
the weights of the edges in $S$. Hence, the total weight of the $2\delta(\by,\by')$ heaviest edges in $L$ is at
most twice the total weight of the $\delta(\by,\by')$ heaviest edges in $T$.
Therefore $\zeta_L\bigl(2\delta(\by,\by')\bigr) \le 2\zeta_T\bigl(\delta(\by,\by')\bigr)$.
Hence, we have obtained
\[
	\Phi^W_L(\by) \le 2\Phi^W_T(\by')+ 2\zeta_T\bigl(\delta(\by,\by')\bigr)~,
\]
concluding the proof.
%
\end{proof}
\begin{proof}[Theorem \ref{t:ub-tree-r}]
We use Theorem~\ref{t:robust} to bound $\Phi_L(\by)$ and $\Phi_L^W(\by)$ 
in the mistake bound of Lemma~\ref{l:ub_r}.
\end{proof}
\begin{proof}[Corollary \ref{c:ub-tree-r-poly}]
Recall that the resistance between two nodes $i$ and $j$ of any tree is simply
the sum of the inverse weights over all edges on the path connecting
the two nodes. Since $T$ is polynomially connected, we know
that the ratio of any pair of edge weights is polynomial in $n$.
This implies that $R^W_L \Phi^W_L(\by)$ is polynomial in $n$, too.
We apply Theorem~\ref{t:robust} 
to bound $\Phi_L(\by)$
in the mistake bound of Lemma~\ref{l:ub-L-to-T} with $E' = \emptyset$. 
This concludes the proof.
\end{proof}
\begin{lemma}\label{exp_phi_L}
If \wta\ is run on a line graph $L$ obtained by linearizing a random spanning tree $T$ 
of a labeled and weighted graph $(G,\by)$, then we have
\[
	\bE\bigl[\Phi_L(\by)\bigr] \bigoheq \Phi_G^*(\by)~.
\]
\end{lemma}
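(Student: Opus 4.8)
The plan is to derive the bound by applying the second item of Theorem~\ref{t:robust} to each realization of the random spanning tree $T$ and then averaging. Fix an arbitrary comparison labeling $\by' \in \{-1,+1\}^n$. For every spanning tree $T$ of $G$, \wta's linearization step produces a line graph $L = L(T)$ (the bound below does not depend on the arbitrary choices of root and traversal order made during the linearization), and item~2 of Theorem~\ref{t:robust}, applied to the labeled tree $(T,\by)$, gives
\[
\Phi_L(\by) \le 2\bigl(\Phi_T(\by') + \delta(\by,\by')\bigr),
\]
where we have used that the inequality there holds for \emph{every} labeling, in particular for the $\by'$ we fixed. Observe that $\delta(\by,\by')$ depends only on the two labelings and is therefore a constant with respect to the random draw of $T$, whereas $\Phi_T(\by')$ does depend on the drawn tree.

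Taking the expectation over the random draw of $T$ on both sides, and using linearity of expectation together with the fact that $\delta(\by,\by')$ is deterministic, we obtain
\[
\bE\bigl[\Phi_L(\by)\bigr] \le 2\Bigl(\bE\bigl[\Phi_T(\by')\bigr] + \delta(\by,\by')\Bigr).
\]
Since this holds for every $\by' \in \{-1,+1\}^n$, we are free to take the minimum of the right-hand side over all such $\by'$, which yields
\[
\bE\bigl[\Phi_L(\by)\bigr] \le 2 \min_{\by' \in \{-1,+1\}^n} \Bigl(\bE\bigl[\Phi_T(\by')\bigr] + \delta(\by,\by')\Bigr) = 2\,\Phi_G^*(\by),
\]
which is precisely the claim $\bE\bigl[\Phi_L(\by)\bigr] \bigoheq \Phi_G^*(\by)$.

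The argument is short once Theorem~\ref{t:robust} is available; the only point deserving attention is the order in which the two operations are performed. One must apply Theorem~\ref{t:robust} pointwise (separately for each realization of $T$) keeping the comparison labeling $\by'$ as a free parameter, then take expectations, and only afterwards pass to the minimum over $\by'$. This ordering is what makes the bound come out with $\bE[\Phi_T(\by')]$ appearing \emph{inside} the minimum over $\by'$, rather than the weaker-looking $\bE\bigl[\min_{\by'}(\Phi_T(\by')+\delta(\by,\by'))\bigr]$; the two would anyway coincide up to the constant factor, but the stated form matches the definition of $\Phi_G^*$ directly. There is no genuine obstacle here — the substantive work was already carried out in establishing Theorem~\ref{t:robust}, which in turn rests on Lemma~\ref{line_edges} and Lemma~\ref{lemma_tree_line}.
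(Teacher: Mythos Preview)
Your proof is correct and essentially the same as the paper's. The paper's two-line argument applies part~2 of Theorem~\ref{t:robust} with the minimum over $\by'$ still inside, takes expectations, and then invokes $\bE[\min X]\le\min\bE[X]$; your version simply fixes $\by'$ before taking expectations and minimizes at the end, which is an equivalent reordering that sidesteps the need to state that inequality explicitly.
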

\begin{proof}
Recall that Theorem~\ref{t:robust} holds for any spanning tree $T$ of $G$.
Thus it suffices to apply part~2 of Theorem~\ref{t:robust} and use $\bE\bigl[ \min X \bigl] \leq \min \bE[X]$~.
\end{proof}
\begin{proof}[Theorem \ref{th:graph_r}]
We apply Lemma~\ref{exp_phi_L} to bound $\bE\bigl[\Phi_L(\by)\bigr]$ in Lemma~\ref{l:graph_lemma}.
\end{proof}
\begin{proof}[Corollary \ref{c:graph_pol_r}]
Since the ratio of the weights of any pair of edges in $G$ is 
polynomial in $n$, the spanning tree $T$ must be polynomially connected.
Thus we can use Corollary~\ref{c:ub-tree-r-poly}, and bound $\bE\bigl[\Phi_L(\by)\bigr]$ via 
Lemma~\ref{exp_phi_L}.
\end{proof}

\section*{Appendix B}\label{app:tables}
This appendix summarizes all our experimental results. For each combination of dataset, algorithm, and
train/test split, we provide macro-averaged error rates and F-measures on the test set.
The algorithms are \wta, \nwwta, and \gpa\ (all combined with various spanning trees), \omv,
\labprop, and \wta\ run with committees of random spanning trees.
WEBSPAM was too large a dataset to perform as thorough an investigation.
Hence we only report test error results on the four algorithms \wta, \omv, \labprop, and \wta\ with
a committee of 7 (nonweighted) random spanning trees.

\begin{table}[h]
\begin{center}
\begin{small}
\begin{tabular}{l|c c|c c|c c|c c}
\ \ \ \ \ Train/test split&\multicolumn{2}{|c|}{5\%}&\multicolumn{2}{|c|}{10\%}&\multicolumn{2}{|c|}{25\%}&\multicolumn{2}{|c}{50\%}\\
Predictors&Error&F&Error&F&Error&F&Error&F\\
\hline
\wta+\rst&25.54&0.81&22.67&0.84&19.06&0.86&16.57&0.88\\
\wta+\nwrst&25.81&0.81&22.70&0.83&19.24&0.86&17.00&0.87\\
\wta+\mst&21.09&0.84&17.94&0.87&13.93&0.90&11.40&0.91\\
\wta+\spst&25.47&0.81&22.65&0.83&19.31&0.86&17.24&0.87\\
\wta+\dfst&26.02&0.81&22.34&0.84&17.73&0.87&14.89&0.89\\
\hline
\nwwta+\rst&25.28&0.81&22.45&0.84&19.12&0.86&17.16&0.87\\
\nwwta+\nwrst&25.97&0.81&23.14&0.83&19.54&0.86&17.84&0.87\\
\nwwta+\mst&21.18&0.84&18.17&0.87&14.51&0.89&12.44&0.91\\
\nwwta+\spst&25.49&0.81&22.81&0.83&19.64&0.86&17.55&0.87\\
\nwwta+\dfst&26.08&0.81&22.82&0.83&17.93&0.87&15.64&0.88\\
\hline
\gpa+\rst&32.75&0.75&29.85&0.78&27.67&0.80&24.44&0.82\\
\gpa+\nwrst&34.27&0.74&30.36&0.78&28.90&0.79&25.99&0.81\\
\gpa+\mst&27.98&0.79&24.89&0.82&21.80&0.84&20.27&0.85\\
\gpa+\spst&27.18&0.79&25.13&0.82&22.20&0.84&20.27&0.85\\
\gpa+\dfst&47.11&0.61&45.65&0.64&43.08&0.66&38.20&0.71\\
\hline
7*\wta+\rst&17.40&0.87&14.85&0.90&12.15&0.91&10.39&0.92\\
7*\wta+\nwrst&17.81&0.87&15.15&0.89&12.51&0.91&10.92&0.92\\
\hline
11*\wta+\rst&16.40&0.88&13.86&0.90&11.38&0.92&9.71&0.93\\
11*\wta+\nwrst&16.78&0.88&14.22&0.90&11.73&0.92&10.20&0.93\\
\hline
17*\wta+\rst&15.78&0.89&13.23&0.91&10.85&0.92&9.22&0.94\\
17*\wta+\nwrst&16.07&0.89&13.55&0.90&11.18&0.92&9.65&0.93\\
\hline
\omv&31.82&0.76&22.27&0.84&11.82&0.91&8.76&0.93\\
\hline
\hline
\labprop&16.33&0.89&13.00&0.91&10.00&0.93&8.77&0.94

\end{tabular}
\end{small}
\end{center}
\caption{
\label{t:rcv1-k10}RCV1-10 - Average error rate and F-measure on 4 classes.
}
\end{table}

\begin{table}[h]
\begin{center}
\begin{small}
\begin{tabular}{l|c c|c c|c c|c c}
\ \ \ \ \ Train/test split&\multicolumn{2}{|c|}{5\%}&\multicolumn{2}{|c|}{10\%}&\multicolumn{2}{|c|}{25\%}&\multicolumn{2}{|c}{50\%}\\
Predictors&Error&F&Error&F&Error&F&Error&F\\
\hline
\wta+\rst&32.03&0.77&29.36&0.79&26.09&0.81&23.25&0.83\\
\wta+\nwrst&32.05&0.77&29.89&0.78&26.65&0.80&23.82&0.83\\
\wta+\mst&20.45&0.85&17.36&0.87&13.91&0.90&11.19&0.92\\
\wta+\spst&29.26&0.79&27.06&0.80&24.96&0.82&23.17&0.83\\
\wta+\dfst&32.03&0.77&28.89&0.79&24.18&0.82&20.57&0.85\\
\hline
\nwwta+\rst&31.72&0.77&29.46&0.78&26.20&0.81&24.04&0.82\\
\nwwta+\nwrst&32.52&0.76&29.95&0.78&26.88&0.80&24.84&0.82\\
\nwwta+\mst&20.54&0.85&17.68&0.87&14.37&0.89&12.25&0.91\\
\nwwta+\spst&29.28&0.79&27.13&0.80&25.16&0.82&23.72&0.83\\
\nwwta+\dfst&32.05&0.77&28.81&0.79&24.14&0.82&21.28&0.84\\
\hline
\gpa+\rst&36.47&0.73&35.33&0.74&33.81&0.75&32.32&0.76\\
\gpa+\nwrst&38.26&0.72&35.91&0.73&35.20&0.74&32.73&0.76\\
\gpa+\mst&26.65&0.81&24.30&0.82&20.29&0.85&18.75&0.86\\
\gpa+\spst&32.43&0.74&28.00&0.78&26.61&0.79&25.77&0.80\\
\gpa+\dfst&48.35&0.61&47.85&0.61&44.78&0.65&41.12&0.68\\
\hline
7*\wta+\rst&23.30&0.84&20.55&0.86&16.87&0.88&14.34&0.90\\
7*\wta+\nwrst&23.64&0.84&20.77&0.86&17.27&0.88&14.81&0.90\\
\hline
11*\wta+\rst&22.06&0.85&19.39&0.87&15.63&0.89&13.20&0.91\\
11*\wta+\nwrst&22.29&0.85&19.54&0.87&16.09&0.89&13.61&0.91\\
\hline
17*\wta+\rst&21.33&0.86&18.62&0.88&14.91&0.90&12.39&0.92\\
17*\wta+\nwrst&21.49&0.86&18.86&0.87&15.29&0.89&12.78&0.91\\
\hline
\omv&12.48&0.91&10.50&0.93&9.49&0.93&8.96&0.94\\
\hline
\hline
\labprop&24.39&0.85&20.78&0.87&14.45&0.91&10.73&0.93
\end{tabular}
\end{small}
\end{center}
\caption{\label{t:rcv1k100}RCV1-100 - Average error rate and F-measure on 4 classes.
}
\end{table}

\begin{table}[h]
\begin{center}
\begin{small}
\begin{tabular}{l|c c|c c|c c|c c}
\ \ \ \ \ Train/test split&\multicolumn{2}{|c|}{5\%}&\multicolumn{2}{|c|}{10\%}&\multicolumn{2}{|c|}{25\%}&\multicolumn{2}{|c}{50\%}\\
Predictors&Error&F&Error&F&Error&F&Error&F\\
\hline
\wta+\rst&5.32&0.97&4.28&0.98&3.08&0.98&2.36&0.99\\
\wta+\nwrst&5.65&0.97&4.51&0.97&3.29&0.98&2.56&0.98\\
\wta+\mst&1.98&0.99&1.61&0.99&1.24&0.99&0.94&0.99\\
\wta+\spst&6.25&0.97&4.72&0.97&3.37&0.98&2.60&0.99\\
\wta+\dfst&6.43&0.96&4.60&0.97&2.92&0.98&2.04&0.99\\
\hline
\nwwta+\rst&5.31&0.97&4.25&0.98&3.19&0.98&2.70&0.99\\
\nwwta+\nwrst&5.95&0.97&4.65&0.97&3.45&0.98&2.92&0.98\\
\nwwta+\mst&1.99&0.99&1.59&0.99&1.29&0.99&1.06&0.99\\
\nwwta+\spst&6.30&0.96&4.83&0.97&3.50&0.98&2.84&0.98\\
\nwwta+\dfst&6.49&0.96&4.59&0.97&3.09&0.98&2.35&0.99\\
\hline
\gpa+\rst&12.64&0.93&8.53&0.95&6.65&0.96&5.65&0.97\\
\gpa+\nwrst&12.53&0.93&9.05&0.95&6.90&0.96&5.19&0.97\\
\gpa+\mst&2.58&0.99&3.18&0.98&2.28&0.99&1.48&0.99\\
\gpa+\spst&7.64&0.96&6.26&0.96&4.13&0.98&3.55&0.98\\
\gpa+\dfst&42.77&0.70&39.39&0.73&32.38&0.79&20.53&0.87\\
\hline
7*\wta+\rst&2.09&0.99&1.56&0.99&1.14&0.99&0.90&0.99\\
7*\wta+\nwrst&2.35&0.99&1.75&0.99&1.26&0.99&1.02&0.99\\
\hline
11*\wta+\rst&1.84&0.99&1.35&0.99&1.01&0.99&0.82&1.00\\
11*\wta+\nwrst&2.05&0.99&1.53&0.99&1.14&0.99&0.91&0.99\\
\hline
17*\wta+\rst&1.65&0.99&1.23&0.99&0.95&0.99&0.77&1.00\\
17*\wta+\nwrst&1.87&0.99&1.39&0.99&1.06&0.99&0.85&1.00\\
\hline
\omv&24.84&0.85&12.28&0.93&2.13&0.99&0.75&1.00\\
\hline
\hline
\labprop&2.14&0.99&1.16&0.99&0.85&0.99&0.73&1.00
\end{tabular}
\end{small}
\end{center}
\caption{\label{t:uspsk10}USPS-10 - Average error rate and F-measure  on 10 classes.
}
\end{table}

\begin{table}[h]
\begin{center}
\begin{small}
\begin{tabular}{l|c c|c c|c c|c c}
\ \ \ \ \ Train/test split&\multicolumn{2}{|c|}{5\%}&\multicolumn{2}{|c|}{10\%}&\multicolumn{2}{|c|}{25\%}&\multicolumn{2}{|c}{50\%}\\
Predictors&Error&F&Error&F&Error&F&Error&F\\
\hline
\wta+\rst&9.62&0.95&8.29&0.95&6.55&0.96&5.36&0.97\\
\wta+\nwrst&10.32&0.94&9.00&0.95&7.17&0.96&5.83&0.97\\
\wta+\mst&1.90&0.99&1.49&0.99&1.22&0.99&0.94&0.99\\
\wta+\spst&8.68&0.95&7.27&0.96&5.78&0.97&4.88&0.97\\
\wta+\dfst&10.36&0.94&8.13&0.96&5.62&0.97&4.21&0.98\\
\hline
\nwwta+\rst&9.71&0.95&8.38&0.95&6.78&0.96&5.89&0.97\\
\nwwta+\nwrst&10.39&0.94&9.08&0.95&7.46&0.96&6.45&0.96\\
\nwwta+\mst&1.91&0.99&1.60&0.99&1.23&0.99&1.09&0.99\\
\nwwta+\spst&8.76&0.95&7.46&0.96&5.94&0.97&5.28&0.97\\
\nwwta+\dfst&10.46&0.94&8.30&0.95&6.00&0.97&4.65&0.97\\
\hline
\gpa+\rst&14.81&0.91&13.38&0.92&11.94&0.93&9.81&0.94\\
\gpa+\nwrst&17.34&0.90&13.68&0.92&11.39&0.94&11.46&0.94\\
\gpa+\mst&3.57&0.98&2.26&0.99&1.77&0.99&1.39&0.99\\
\gpa+\spst&8.42&0.95&7.94&0.95&7.20&0.96&5.71&0.97\\
\gpa+\dfst&46.09&0.67&42.59&0.71&37.66&0.75&28.45&0.82\\
\hline
7*\wta+\rst&5.28&0.97&4.24&0.98&3.05&0.98&2.37&0.99\\
7*\wta+\nwrst&5.82&0.97&4.73&0.97&3.48&0.98&2.69&0.98\\
\hline
11*\wta+\rst&5.07&0.97&3.96&0.98&2.76&0.99&2.11&0.99\\
11*\wta+\nwrst&5.55&0.97&4.38&0.98&3.14&0.98&2.40&0.99\\
\hline
17*\wta+\rst&5.17&0.97&3.96&0.98&2.72&0.99&2.05&0.99\\
17*\wta+\nwrst&7.60&0.96&6.38&0.97&4.68&0.97&3.32&0.98\\
\hline
\omv&2.17&0.99&1.70&0.99&1.53&0.99&1.45&0.99\\
\hline
\hline
\labprop&6.94&0.96&5.19&0.97&2.51&0.99&1.79&0.99
\end{tabular}
\end{small}
\end{center}
\caption{\label{t:uspsk100}USPS-100 - Average error rate and F-measure on 10 classes.
}
\end{table}

\begin{table}[h]
\begin{center}
\begin{small}
\begin{tabular}{l|c c|c c|c c|c c}
\ \ \ \ \ Train/test split&\multicolumn{2}{|c|}{5\%}&\multicolumn{2}{|c|}{10\%}&\multicolumn{2}{|c|}{25\%}&\multicolumn{2}{|c}{50\%}\\
Predictors&Error&F&Error&F&Error&F&Error&F\\
\hline
\wta+\rst&21.73&0.86&21.37&0.86&19.89&0.87&19.09&0.88\\
\wta+\nwrst&21.86&0.86&21.50&0.86&20.03&0.87&19.33&0.88\\
\wta+\mst&21.55&0.86&20.86&0.87&19.35&0.88&18.36&0.88\\
\wta+\spst&21.86&0.86&21.58&0.86&20.38&0.87&19.40&0.88\\
\wta+\dfst&21.78&0.86&21.22&0.86&19.88&0.87&18.60&0.88\\
\hline
\nwwta+\rst&21.83&0.86&21.43&0.86&20.08&0.87&19.64&0.88\\
\nwwta+\nwrst&21.98&0.86&21.55&0.86&20.26&0.87&19.75&0.87\\
\nwwta+\mst&21.55&0.86&20.91&0.87&19.55&0.88&18.89&0.88\\
\nwwta+\spst&21.86&0.86&21.57&0.86&20.50&0.87&19.81&0.87\\
\nwwta+\dfst&21.79&0.86&21.33&0.86&20.00&0.87&19.09&0.88\\
\hline
\gpa+\rst&22.70&0.85&22.75&0.85&22.14&0.86&21.28&0.86\\
\gpa+\nwrst&23.83&0.84&23.28&0.85&22.48&0.85&21.53&0.86\\
\gpa+\mst&21.99&0.86&21.34&0.86&20.77&0.86&20.48&0.87\\
\gpa+\spst&22.33&0.84&21.34&0.86&20.71&0.86&20.74&0.86\\
\gpa+\dfst&39.77&0.72&31.93&0.78&25.70&0.83&24.09&0.84\\
\hline
7*\wta+\rst&16.83&0.90&16.63&0.90&15.78&0.90&15.29&0.90\\
7*\wta+\nwrst&16.85&0.90&16.60&0.90&15.89&0.90&15.41&0.90\\
\hline
11*\wta+\rst&16.28&0.90&16.11&0.90&15.36&0.91&14.92&0.91\\
11*\wta+\nwrst&16.28&0.90&16.08&0.90&15.55&0.90&14.99&0.91\\
\hline
17*\wta+\rst&15.93&0.90&15.78&0.90&15.17&0.91&14.63&0.91\\
17*\wta+\nwrst&15.98&0.90&15.69&0.91&15.23&0.91&14.68&0.91\\
\hline
\omv&42.98&0.70&38.88&0.73&29.85&0.80&22.66&0.85\\
\hline
\hline
\labprop&15.26&0.91&15.21&0.91&14.94&0.91&15.13&0.91
\end{tabular}
\end{small}
\end{center}
\caption{\label{t:krogan}KROGAN - Average error rate and F-measure on 17 classes.
}
\end{table}

\begin{table}[h]
\begin{center}
\begin{small}
\begin{tabular}{l|c c|c c|c c|c c}
\ \ \ \ \ Train/test split&\multicolumn{2}{|c|}{5\%}&\multicolumn{2}{|c|}{10\%}&\multicolumn{2}{|c|}{25\%}&\multicolumn{2}{|c}{50\%}\\
Predictors&Error&F&Error&F&Error&F&Error&F\\
\hline
\wta+\rst&21.68&0.86&21.05&0.87&20.08&0.87&18.99&0.88\\
\wta+\nwrst&21.47&0.87&21.29&0.86&20.18&0.87&19.17&0.88\\
\wta+\mst&21.57&0.86&20.63&0.87&19.61&0.88&18.37&0.88\\
\wta+\spst&21.39&0.87&21.34&0.86&20.52&0.87&19.57&0.88\\
\wta+\dfst&21.88&0.86&21.09&0.87&19.82&0.87&18.83&0.88\\
\hline
\nwwta+\rst&21.50&0.87&21.15&0.87&20.43&0.87&19.95&0.87\\
\nwwta+\nwrst&21.61&0.86&21.26&0.87&20.52&0.87&20.09&0.87\\
\nwwta+\mst&21.53&0.86&20.95&0.87&20.35&0.87&19.81&0.88\\
\nwwta+\spst&21.37&0.87&21.06&0.87&20.55&0.87&20.06&0.87\\
\nwwta+\dfst&21.88&0.86&21.05&0.87&20.50&0.87&19.74&0.88\\
\hline
\gpa+\rst&23.56&0.85&22.27&0.86&21.86&0.86&21.68&0.86\\
\gpa+\nwrst&23.91&0.85&23.11&0.85&22.47&0.86&21.30&0.86\\
\gpa+\mst&23.32&0.85&21.60&0.86&21.77&0.86&21.67&0.86\\
\gpa+\spst&22.55&0.85&21.89&0.85&21.64&0.85&21.70&0.85\\
\gpa+\dfst&41.69&0.71&30.82&0.79&26.75&0.82&23.56&0.84\\
\hline
7*\wta+\rst&16.39&0.90&16.09&0.90&15.77&0.91&15.29&0.91\\
7*\wta+\nwrst&16.35&0.90&16.10&0.90&15.77&0.90&15.47&0.91\\
\hline
11*\wta+\rst&15.89&0.91&15.61&0.91&15.32&0.91&14.84&0.91\\
11*\wta+\nwrst&15.82&0.91&15.57&0.91&15.34&0.91&14.98&0.91\\
\hline
17*\wta+\rst&15.54&0.91&15.31&0.91&14.97&0.91&14.55&0.91\\
17*\wta+\nwrst&15.45&0.91&15.29&0.91&15.05&0.91&14.66&0.91\\
\hline
\omv&44.74&0.68&40.75&0.72&32.97&0.78&25.28&0.84\\
\hline
\hline
\labprop&14.93&0.91&14.98&0.91&15.23&0.91&15.31&0.90
\end{tabular}
\end{small}
\end{center}
\caption{\label{t:combined}COMBINED - Average error rate and F-measure on 17 classes.
}
\end{table}

\begin{table}[h]
\begin{small}
\begin{center}
\begin{tabular}{ l|c|c }

&\multicolumn{2}{c}{ }\\
Predictors &Error &F\\
\hline
\wta+\nwrst&10.03&0.95\\
3*\wta+\nwrst&6.44&0.97\\
7*\wta+\nwrst&5.91&0.97\\
\hline
\omv&44.1&0.71\\
\hline
\hline
\labprop&12.84&0.93
\end{tabular}
\end{center}
\end{small}
\caption{\label{t:spamresults}WEBSPAM - Test set error rate and F-measure. $\wta$ operates only on \nwrst.
}
\end{table}



\begin{thebibliography}{10}
 
\bibitem{AAKKLT08}
N.~Alon, C.~Avin, M.~Kouck\'{y}, G.~Kozma, Z.~Lotker, and M.R. Tuttle.
\newblock Many random walks are faster than one.
\newblock In \emph{Proc. of the 20th Annual ACM Symposium on Parallel
  Algorithms and Architectures}, pages 119--128. Springer, 2008.

\bibitem{BMN04}
M.~Belkin, I.~Matveeva, and P.~Niyogi.
\newblock Regularization and semi-supervised learning on large graphs.
\newblock In \emph{Proc. of the 17th Annual Conference on Learning
  Theory}, pages 624--638. Springer, 2004.

\bibitem{BDL06}
Y.~Bengio, O.~Delalleau, and N.~{Le Roux}.
\newblock Label propagation and quadratic criterion.
\newblock In \emph{Semi-Supervised Learning}, pages 193--216. {MIT} Press,
  2006.

\bibitem{BC01}
A.~Blum and S.~Chawla.
\newblock Learning from labeled and unlabeled data using graph mincuts.
\newblock In \emph{Proc. of the 18th International Conference on Machine
  Learning}, pages 19--26. Morgan Kaufmann, 2001.

\bibitem{BLRR04}
A.~Blum, J.~Lafferty, M.~Rwebangira, and R.~Reddy.
\newblock Semi-supervised learning using randomized mincuts.
\newblock In \emph{Proc. of the 21st International Conference on Machine
  Learning}, pages 97--104, 2004.

\bibitem{Bro89}
A.~Broder.
\newblock Generating random spanning trees.
\newblock In \emph{Proc. of the 30th Annual Symposium on Foundations of
  Computer Science}, pages 442--447. IEEE Press, 1989.

\bibitem{CGV09b}
N.~Cesa-Bianchi, C.~Gentile, and F.~Vitale.
\newblock Fast and optimal prediction of a labeled tree.
\newblock In \emph{Proceedings of the 22nd Annual Conference on Learning
  Theory}. Omnipress, 2009.

\bibitem{CGVZ10}
N.~Cesa-Bianchi, C.~Gentile, F.~Vitale, and G. Zappella.
\newblock Random spanning trees and the prediction of weighted graphs.
\newblock In \emph{Proceedings of the 27th International Conference on Machine Learning (27th ICML)}, 2010.

\bibitem{CGVZ10b}
N.~Cesa-Bianchi, C.~Gentile, F.~Vitale, and G. Zappella.
Active learning on trees and graphs.
In {\em Proceedings of the 23rd Conference on Learning Theory (23rd COLT)}, 2010. 

\bibitem{CY06}
H.~Chang and D.Y. Yeung.
\newblock Graph {L}aplacian kernels for object classification from a single
  example.
\newblock In \emph{Proc. IEEE Computer Society Conference on Computer Vision and
  Pattern Recognition}, pages 2011--2016. IEEE Press, 2006.

\bibitem{FK08}
J.~Fakcharoenphol and B.~Kijsirikul.
\newblock Low congestion online routing and an improved mistake bound for
  online prediction of graph labeling.
\newblock \emph{CoRR}, abs/0809.2075, 2008.

\bibitem{Gav02}
A.-C. Gavin et~al.
\newblock Functional organization of the yeast proteome by systematic analysis
  of protein complexes.
\newblock \emph{Nature}, 415\penalty0 (6868):\penalty0 141--147, 2002.

\bibitem{GZ04}
A.~Goldberg and X.~Zhu.
\newblock Seeing stars when there aren't many stars: Graph-based
  semi-supervised learning for sentiment categorization.
\newblock In \emph{HLT-NAACL 2006 Workshop on Textgraphs: Graph-based
  algorithms for Natural Language Processing}, 2004.

\bibitem{Her08}
M.~Herbster.
\newblock Exploiting cluster-structure to predict the labeling of a graph.
\newblock In \emph{Proc. of the 19th International Conference on
  Algorithmic Learning Theory}, pages 54--69. Springer, 2008.

\bibitem{HL09}
M.~Herbster and G.~Lever.
\newblock Predicting the labelling of a graph via minimum $p$-seminorm
  interpolation.
\newblock In \emph{Proc. of the 22nd Annual Conference on Learning
  Theory}. Omnipress, 2009.

\bibitem{HP07}
M.~Herbster and M.~Pontil.
\newblock Prediction on a graph with the {P}erceptron.
\newblock In \emph{Advances in Neural Information Processing Systems 21}, pages
  577--584. MIT Press, 2007.

\bibitem{HPW05}
M.~Herbster, M.~Pontil, and L.~Wainer.
\newblock Online learning over graphs.
\newblock In \emph{Proc. of the 22nd International Conference on Machine
  Learning}, pages 305--312. ACM Press, 2005.

\bibitem{HLP09}
M.~Herbster, G.~Lever, and M.~Pontil.
\newblock Online prediction on large diameter graphs.
\newblock In \emph{Advances in Neural Information Processing Systems 22}, pages 649--656. 
MIT Press, 2009.

\bibitem{HPR09}
M.~Herbster, M.~Pontil, and S.~Rojas-Galeano.
\newblock Fast prediction on a tree.
\newblock In \emph{Advances in Neural Information Processing Systems 22}, pages 657--664.
MIT Press, 2009.

\bibitem{Ito01}
T.~Ito, T.~Chiba, R.~Ozawa, M.~Yoshida, M.~Hattori, and Y.~Sakaki.
\newblock A comprehensive two-hybrid analysis to explore the yeast protein
  interactome.
\newblock {\em Proceedings of the National Academy of Sciences of the United States of America}, 
98\penalty0 (8):\penalty0 4569--4574, 2001.

\bibitem{Kro06}
N.J. Krogan et~al.
\newblock Global landscape of protein complexes in the yeast {S}accharomyces
  cerevisiae.
\newblock \emph{Nature}, 440:\penalty0 637--643, 2006.

\bibitem{LP09}
R.~Lyons and Y.~Peres.
\newblock Probability on trees and networks.
\newblock Manuscript, 2009.

\bibitem{PSGGK07}
G.~Pandey, M.~Steinbach, R.~Gupta, T.~Garg, and V.~Kumar.
\newblock Association analysis-based transformations for protein interaction
  networks: a function prediction case study.
\newblock In \emph{Proc. of the 13th ACM SIGKDD International Conference
  on Knowledge Discovery and Data Mining}, pages 540--549. ACM Press, 2007.

\bibitem{YWS07}
Yahoo! Research (Barcelona) and Laboratory of~Web~Algorithmics (Univ.~of Milan). 
\newblock {Web Spam Collection}.
\newblock URL \texttt{barcelona.research.yahoo.net/webspam/datasets/}.

\bibitem{Rue04}
A.~Ruepp.
\newblock The {F}un{C}at, a functional annotation scheme for systematic
  classification of proteins from whole genomes.
\newblock \emph{Nucleic Acids Research}, 32\penalty0 (18):\penalty0 5539--5545,
  2004.

\bibitem{SS08}
D.A. Spielman and N.~Srivastava.
\newblock Graph sparsification by effective resistances.
\newblock In \emph{Proc. of the 40th annual ACM symposium on Theory of
  computing}, pages 563--568. ACM Press, 2008.

\bibitem{STS09}
H.~Shin~K. Tsuda and B.~Sch{\"o}lkopf.
\newblock Protein functional class prediction with a combined graph.
\newblock \emph{Expert Systems with Applications}, 36:\penalty0 3284--3292,
  2009.

\bibitem{Uet00}
P.~Uetz et~al.
\newblock A comprehensive analysis of protein-protein interactions in
  {S}accharomyces cerevisiae.
\newblock \emph{Nature}, 6770\penalty0 (403):\penalty0 623--627, 2000.

\bibitem{VCGZ12}
F.~Vitale, N.~Cesa-Bianchi, C.~Gentile, and G.~Zappella.
\newblock {See the tree through the lines: the {S}hazoo algorithm}.
\newblock In \emph{Proc. of the 25th Annual Conference on Neural
  Information Processing Systems}, pages 1584-1592. Curran Associates, 2012.

\bibitem{Wil96}
D.B. Wilson.
\newblock Generating random spanning trees more quickly than the cover time.
\newblock In \emph{Proc. of the 28th ACM Symposium on the Theory of
  Computing}, pages 296--303. ACM Press, 1996.

\bibitem{ZGL03}
X.~Zhu, Z.~Ghahramani, and J.~Lafferty.
\newblock Semi-supervised learning using {G}aussian fields and harmonic
  functions.
\newblock In \emph{ICML Workshop on the Continuum from Labeled to Unlabeled
  Data in Machine Learning and Data Mining}, 2003.

\end{thebibliography}
\end{document}